\newif\iflong
\newif\ifshort
\newif\ifdraft
\newif\ifrecoverability
\newif\ifhideproofs
    \newcommand{\nb}[1]{\textcolor{red}{\bf!}%
    	\marginpar[\parbox{25mm}{\raggedleft\scriptsize\textcolor{red}{#1}}]%
    	{\parbox{25mm}{\raggedright\scriptsize\textcolor{red}{#1}}}}
    \newcommand{\nb}[1]{}
\newcommand{\A}{\mathcal{A}}
\newcommand{\C}{\mathcal{C}}
\newcommand{\D}{\mathcal{D}}
\newcommand{\G}{\mathcal{G}}
\renewcommand{\P}{\mathcal{P}}
\renewcommand{\S}{\mathcal{S}}
\newcommand{\tup}[1]{\langle #1 \rangle}
\newcommand{\atoms}{\textit{Atoms}}
\newcommand{\pred}{\textit{Pred}}
\newcommand{\aux}{\textit{aux}}
\newcommand{\auxp}[1]{\accentset{\ast}{#1}}
\newcommand{\subst}{\sigma}
\newcommand{\tablenote}[1]{\textsuperscript{(#1)}\xspace}
\newcommand{\lbcmLabel} {\tablenote{1}}
\newcommand{\ubwiLabel} {\tablenote{2}}
\newcommand{\lbakLabel} {\tablenote{3}}
\newcommand{\ubscLabel} {\tablenote{4}}
\newcommand{\ubfshLabel}{\tablenote{5}}
\newcommand{\ubakLabel} {\tablenote{6}}
\newcommand{\lbcm} {\hyperlink{lbcm} {\lbcmLabel}}
\newcommand{\ubwi} {\hyperlink{ubwi} {\ubwiLabel}}
\newcommand{\lbak} {\hyperlink{lbak} {\lbakLabel}}
\newcommand{\ubsc} {\hyperlink{ubsc} {\ubscLabel}}
\newcommand{\ubfsh}{\hyperlink{ubfsh}{\ubfshLabel}}
\newcommand{\ubak} {\hyperlink{ubak} {\ubakLabel}}
\newcommand{\lbcmTarget} {\protect\hypertarget{lbcm}{\lbcmLabel}}
\newcommand{\ubwiTarget} {\protect\hypertarget{ubwi}{\ubwiLabel}}
\newcommand{\lbakTarget} {\protect\hypertarget{lbak}{\lbakLabel}}
\newcommand{\ubscTarget} {\protect\hypertarget{ubsc}{\ubscLabel}}
\newcommand{\ubfshTarget}{\protect\hypertarget{ubfsh}{\ubfshLabel}}
\newcommand{\ubakTarget} {\protect\hypertarget{ubak}{\ubakLabel}}
\newcommand{\piduecomp}{$\pidue$}
\newcommand{\coNP}{coNP}
\newcommand{\PTIME}{PTIME}
\newcommand{\ffk}{full\xspace}
\newcommand{\dedineq}{DED$^{\ne}$\xspace}
\newcommand{\dedsineq}{DED$^{\ne}$s\xspace}
\newcommand{\inlinesubsection}[1]{\medskip \noindent \textbf{#1}}
\newcommand{\cqineq}{CQ\textsuperscript{$\ne$}\xspace}
\newcommand{\cqsineq}{CQ\textsuperscript{$\ne$}s\xspace}
\newcommand{\bcqineq}{B\cqineq}
\newcommand{\ucqineq}{U\cqineq}
\newcommand{\ucqsineq}{U\cqsineq}
\newcommand{\bucqineq}{B\ucqineq}
\newcommand{\bucqsineq}{B\ucqsineq}
\def\qedexample{\hfill{$\triangledown$} 
  \ifdim\lastskip<\medskipamount \removelastskip\penalty55\medskip\fi}
\newcommand{\facts}{\textit{Facts}}
\newcommand{\body}{\textit{body}}
\newcommand{\head}{\textit{head}}
\newcommand{\SD}{\tup{\Sigma,\D}}
\newcommand{\ineq}{\lambda}
\newcommand{\cq}{\textit{CQ}}
\newcommand{\vars}{\textit{Vars}}
\newcommand{\predatoms}{\textit{PA}}
\newcommand{\cnj}{\gamma}
\newcommand{\aczero}{\mathrm{AC}^0}
\newcommand{\pidue}{\mathrm{\Pi}^p_2}
\newcommand{\constr}{\Sigma}
\newcommand{\reps}{\mathsf{rep}_\constr}
\newcommand{\intrep}{\mathsf{intRep}_\constr}
\newcommand{\modelsopen}{\models}
\newcommand{\modelso}{\modelsopen}
\newcommand{\modelsar}{\models_\textsc{AR}}
\newcommand{\modelsiar}{\models_\textsc{IAR}}
\newcommand{\ars}{AR\xspace}
\newcommand{\irs}{IAR\xspace}
\newcommand{\ra}{\rightarrow}
\newcommand{\wrt}{\iflong with respect to\else w.r.t.\fi\xspace}
\newcommand{\seq}{\textit{Seq}}
\newcommand{\consistentprimed}{\Psi^\mathit{cons}}
\newcommand{\algirs}{\irs-CQEnt\xspace}
\newcommand{\algcomputerepairlinear}{Compute-Repair-Linear\xspace}
\newcommand{\queryentails}{\Psi^{iar}}%
\newcommand{\recoval}{\Psi^{rec}}%
\newcommand{\vseq}[1]{\mathbf{#1}}
\newcommand{\checkrepairacyclic}{\Psi^\mathit{rc}}
\newcommand{\chase}{\mathit{Chase}}
\newcommand{\cqforecovformula}{\Phi^{\mathit{rec}}}
\newcommand{\repaircheckingformula}{\Phi^{\mathit{rc}}}
\newcommand{\true}{\mathit{true}}
\newcommand{\raux}{R_{\mathit{aux}}}
\newcommand{\TGD}{\mathit{TGD}}
\newcommand{\clauses}{\mathit{CL}}
\newcommand{\bv}{\mathit{BV}}
\newcommand{\pv}{\mathit{PV}}
\newcommand{\mm}{\mathit{MM}}
\newcommand{\rclbStatement}[2]{
    There exists a set of #1 dependencies for which repair checking is #2-hard \wrt data complexity.}
\newcommand{\rcubStatement}[2]{
    Repair checking is in #2 \wrt data complexity in the case of #1 dependencies.}
\newcommand{\recovubStatement}[2]{
    Checking recoverability is in #2 \wrt data complexity in the case of #1 dependencies.}
\newcommand{\iclbStatement}[2]{
    There exist a set of #1 dependencies and a fact for which instance checking is #2-hard \wrt data complexity.}
\newcommand{\qelbStatement}[3]{
    There exist a set of #2 dependencies and a BUCQ for which #1-entailment is #3-hard \wrt data complexity.}
\newcommand{\qeubStatement}[3]{
    #1-entailment is in #3 \wrt data complexity in the case of #2 dependencies.}
\newtheorem{example}{Example}
\newtheorem{theorem}{Theorem}
\newtheorem{definition}{Definition}
\newtheorem{proposition}{Proposition}
\newtheorem{lemma}{Lemma}
\newtheorem{corollary}{Corollary}
\newcommand{\mytitle}{Consistent Query Answering for Existential Rules with Closed Predicates}
\begin{document}
	
\title{\mytitle}

\author{\name Lorenzo Marconi \email marconi@diag.uniroma1.it \\
	   \addr Sapienza Universit\`a di Roma, Rome, Italy
       \AND
       \name Riccardo Rosati \email rosati@diag.uniroma1.it  \\
        \addr Sapienza Universit\`a di Roma, Rome, Italy}

\maketitle

\begin{abstract}
  \emph{Consistent Query Answering} (CQA) is an inconsistency-tolerant approach to data access in knowledge bases and databases. 
  The goal of CQA is to provide meaningful (consistent) answers to queries even in the presence of \emph{inconsistent} information, e.g.\ a database whose data conflict with meta-data (typically the database integrity constraints).
  The semantics of CQA is based on the notion of \emph{repair}, that is, a consistent version of the initial, inconsistent database that is obtained through minimal modifications.
  We study CQA in databases with data dependencies expressed by existential rules. More specifically, we focus on the broad class of disjunctive embedded dependencies with inequalities (\dedsineq), which extend both tuple-generating dependencies and equality-generated dependencies.
  We first focus on the case when the database predicates are \emph{closed}, i.e.\ the database is assumed to have complete knowledge about such predicates, thus no tuple addition is possible to repair the database.
  In such a scenario, we provide a detailed analysis of the data complexity of CQA and associated tasks (repair checking) under different semantics (AR and IAR) and for different classes of existential rules. In particular, we consider the classes of acyclic, linear, full, sticky and guarded \dedsineq, and their combinations.
\end{abstract}

\sloppy

\section{Introduction}
\label{sec:introduction}

\emph{Consistent query answering}~\shortcite{B19} is an approach to inconsistency-tolerant reasoning in databases and knowledge bases (KBs).
The central notion of consistent query answering is the one of \emph{repair}. Given an \emph{inconsistent} database or KB $\D$, a repair of $\D$ is a consistent database or KB that is ``maximally close'' to $\D$. Several distinct formalizations of such notion of maximality and inconsistency are possible, giving rise to different types of repairs.
In general, many different repairs for the same database or KB may exist.
The problem of consistent query answering as originally formulated was: given a query $q$, compute the answers to $q$ that are true in every repair.
Also the \emph{repair checking} problem is often accounted, i.e.\ the problem of deciding whether a candidate $\D'$ is a repair of the original database or KB $\D$.
 
CQA was originally defined for relational databases~\shortcite{ABC99}, 
but it has then been studied also in the context of Description Logics KBs (see e.g.~\shortcite{R11,B12,BBG19}) and rule-based KBs, in particular knowledge bases with \emph{existential rules}~\shortcite{LMMMPS22,CalauttiGMT22}, where an extensional database (set of ground atoms) is coupled with a set of first-order implications.

In this paper, 
we consider a very expressive class of existential rules that comprises and extends both disjunctive tuple-generating dependencies (DTGDs) and equality-generating dependencies (EGDs), which are the most studied forms of rules studied in the literature.
Specifically, the  rules that we consider in this paper 
correspond to the so-called \emph{disjunctive embedded dependencies with inequalities} (\emph{\dedsineq})~\shortcite{DT05,APS16,D18}.  
We also consider five subclasses of these dependencies, corresponding to some of the most important ones studied in the literature on existential rules, i.e.\ the classes of \emph{acyclic}, \emph{full}, \emph{guarded}, \emph{linear}, and \emph{sticky} rules~\shortcite{CGL12,BLMS11,CGP12}.

Concerning the semantics, we first adopt a \emph{closed} interpretation of the predicates, and the well-known \emph{tuple-deletion} semantics for repairs~\shortcite{CM05,W05,AK09,CFK12}\iflong, according to which a \else: a \fi repair is a maximal (\wrt set inclusion) subset of the initial database that satisfies the rules.

\newcommand{\kim}{\mathsf{kim}}
\newcommand{\lou}{\mathsf{lou}}
\newcommand{\lee}{\mathsf{lee}}
\newcommand{\smith}{\mathsf{smith}}
\newcommand{\ssnone}{\mathsf{123}}
\newcommand{\ssntwo}{\mathsf{456}}
\newcommand{\slotone}{\mathsf{s}_1}
\newcommand{\bedone}{\mathsf{b}_1}
\newcommand{\icu}{\mathsf{icu}}
\begin{example}
    A hospital facility has complete knowledge about its domain of interest. Suppose that patients are modeled through a predicate $P$ whose arguments are the patient's SSN, first and last name. Moreover, a 
    necessary condition for being registered as a patient is having been visited (predicate $V$) by some physician in a certain date or having been hospitalized (predicate $H$) in some department for a certain time slot. Finally, every hospital bed reservation (predicate $R$) be allowed only for a single patient in a given time slot, and only for patients hospitalized for the same time slot and in the same department in which the bed is located (predicate $L$).

    The above rules can be formally expressed as follows:
    \[
    \begin{array}{r@{}l}
         \forall ssn,f,l\,(& P(ssn,f,l) \ra \exists ph,d\,V(ssn,ph,d) \lor \exists dep,ts\,H(ssn,dep,ts)) , \\
         \forall bed,ts,ssn_1,ssn_2\,(& R(bed,ssn_1,ts) \land R(bed,ssn_2,ts) \land ssn_1 \ne ssn_2 \ra \bot , \\
         \forall bed,ssn,ts\,(& R(bed,ssn,ts) \ra \exists dep,ts (H(ssn,dep,ts) \land L(bed,dep)) . \\
    \end{array}
    \]

    Consider now the following database:
    \[
    \begin{array}{r@{}l}
        \D = \{& P(\ssnone,\kim,\lee), P(\ssntwo,\lou,\smith), H(\ssntwo,\icu,\slotone), R(\ssnone,\icu,\slotone), R(\ssntwo,\icu,\slotone) \}
    \end{array}
    \]
    which sanctions that Kim Lee (SSN $\ssnone$) and Lou Smith (SSN $\ssntwo$) are patients, the latter has been hospitalized in the intensive care unit (ICU) for the time slot $\slotone$, and for both of them the bed $\bedone$ (located in the ICU department) has been reserved in the time slot $\slotone$.
    One can see that all the three rules are violated. For fixing the violation, we have to remove the fact $P(\ssnone,\kim,\lee)$ from the database, as Kim has not been visited by any physician nor hospitalized.
    Moreover, removing $R(\ssnone,\icu,\slotone)$ is necessary for satisfying the third dependency (and, as side effect, also the second one). Thus, in this case we have only one repair, i.e.\ 
    $
        \{ P(\ssntwo,\lou,\smith), H(\ssntwo,\icu,\slotone), R(\ssntwo,\icu,\slotone) \}
    $.
    \qedexample
\end{example}

Given these premises, we analyze the aforenamed decision problems
, that is, the repair checking and query entailment problems.
For the latter, we consider the language of Boolean unions of conjunctive queries with inequalities (\bucqsineq), and analyze two different 
inconsistency-tolerant semantics: \emph{AR-entailment}, i.e.\ skeptical entailment over all the repairs, and \emph{IAR-entailment}, i.e.\ entailment over the intersection of all the repairs. Notably, IAR-entailment corresponds to a sound approximation of AR-entailment.

We study the complexity of the above problems, providing a comprehensive analysis of the data complexity (i.e.\ the complexity with respect to the size of the database) of repair checking and both \ars-entailment and \irs-entailment of \bucqsineq, for all the possible combinations of the five classes of existential rules considered in this paper.
The results are summarized in Table~\ref{tab:results-all}.

Despite consistent query answering is well-known to be a computationally hard task, our results identify many interesting classes of existential rules in which the problems of consistent query answering and repair checking are tractable in data complexity. Moreover, for most of these tractable classes we prove the problems studied to enjoy the \emph{first-order (FO) rewritability}, i.e.\ they can be reduced to the evaluation of an FO sentence over the database.
This property is very interesting not only from the theoretical viewpoint, but also towards the definition of practical algorithms for consistent query answering based on FO rewriting methods, in a way similar to the case of classical query answering (see e.g.~\shortcite{KonigLMT15,KLM15}).


The closest work to the present paper is~\shortcite{LMMMPS22}, which presents an exhaustive study of the complexity of consistent query answering over existential rules. That paper inherits the open-world assumption (OWA) adopted by the recent approaches to existential rules, in which the interpretation of all predicates is open, thus repairing an inconsistency between the database and the rules is preferably done by (virtually) adding tuples to the initial database (that is, tuple deletions are considered only if tuple additions can not repair the data).
Conversely, our approach adopts a closed-world assumption (CWA), in which all predicates are closed and the only way to repair an inconsistency is through tuple deletions. Given this semantic discrepancy, the problems studied in the two papers are actually different, so
our findings can be seen as complementary to the ones presented in 
the aforementioned work (see also the conclusions).
We also remark that both the language for existential rules and the query language considered there are less expressive than the ones (disjunctive dependencies and \bucqsineq) studied in the present paper.

The paper is structured as follows. 
After a brief description of the main related work (Section~\ref{sec:related-work}), we introduce preliminary notions and definitions (Section~\ref{sec:preliminaries}) and we formalize the notion of repair and the decision problems that we focus on (Section~\ref{sec:repairs}).
In Section~\ref{sec:recoverability}, we provide some auxiliary results that will be useful for studying the problems of repair checking, \irs-entailment and \ars-entailment.
The computational complexity of such problems is then examined in Sections~\ref{sec:rc}, \ref{sec:intrep} and \ref{sec:allrep}, respectively.
Finally, in Section~\ref{sec:conclusions} we conclude the paper by discussing our results and proposing some possible future directions.

\iflong
\section{Related work}
\label{sec:related-work}

Consistent Query Answering was originally proposed for relational databases in~\shortcite{ABC99}, which introduced the notions of repairs and consistent answers, and repairs were intended as the the ones minimizing the symmetric difference with the original database.
Ever since, many works considered various kinds of integrity constraints and chose different repairing approaches.
Other types of repairs include the maximal consistent subsets of the original database~\shortcite{CM05}, as well as the minimal consistent supersets~\shortcite{CFK12}. The notion of maximality is also sometimes based on the cardinality~\shortcite{LB06}.
Here we define a repair as the maximal consistent superset of the original database, i.e.\ we adopt the so-called tuple-deletion semantics.

For what concerns the problem of repair checking, the most relevant papers that are strictly related to our investigation are~\shortcite{CM05,AK09,SC10,GO10,CFK12}, which deeply explored the problem for many classes of dependencies.

Also the problem of finding consistent answers for a given query under tuple-deletion semantics has been extensively studied by~\shortcite{CM05} and~\shortcite{CFK12}, with a particular focus on (some forms of) tuple-generating dependencies.
As shown in the conclusions, some of our results extend the ones provided in these two papers.
Special attention should also be paid to~\shortcite{FM05} and~\shortcite{KW17}, which presented FO-rewritable techniques for solving the CQA problem, though limiting the set of integrity constraints to (primary) key dependencies and making some assumptions on the user query.

In the mentioned works, all of which best reviewed in~\shortcite{C07,B19,W19}, the most common entailment semantics adopted corresponds to the one that here we call \ars-entailment.
The \irs semantics, instead, was previously studied in the context of ontologies by~\shortcite{LLRRS10}
, and further investigated for multiple DL languages in~\shortcite{LLRRS11,R11,LLRRS15}.
In particular, the latter proved \irs-entailment of CQs to be FO rewritable for the language $\text{DL-Lite}_{R,den}$.
Afterward,~\shortcite{B12} proposed a new semantics named ICR (``intersection of closed repairs''), which IAR is a sound approximation of, showing its FO expressibility for simple ontologies.

Both the entailment problems that we consider have also been studied in the field of existential rules~\shortcite{LMS12a,LMS12b,LMPS15,LMM18}, and systematically revised in~\shortcite{LMMMPS22}.
However, all such results are not straightforwardly transposable under CWA, which is our case study.
Indeed, in this paper we aim to provide a complementary analysis to the ``open'' case.
\fi
\iflong
\section{Preliminaries}
\label{sec:preliminaries}
\subsection{Databases, dependencies and queries}
\else
\section{Databases, dependencies and queries}
\label{sec:preliminaries}
\fi

\iflong
\subsubsection{Syntax}
\else
\inlinesubsection{Syntax}
\fi

We call \emph{predicate signature} (or simply \emph{signature}) a set of predicate symbols with an associated arity.
A \emph{term} is either a variable symbol or a constant symbol.
A \emph{predicate atom} (or simply \emph{atom}) is an expression of the form $p(\vseq{t})$, where $p$ is a predicate of arity $n$ and $\vseq{t}$ is a $n$-tuple of terms. We say that an atom $\alpha$ is \emph{ground} (or that $\alpha$ is a \emph{fact}) if no variable occur in it.
An \emph{inequality atom} (or just \emph{inequality}) is an expression of the form $t\neq t'$, where $t$ and $t'$ are terms.
Given a first-order (FO) formula (or a set thereof) $\phi$, we denote as $\vars(\phi)$ the set of variables occurring in $\phi$.
Moreover, given an FO theory $\Phi$, we denote by $\pred(\Phi)$ the set of predicates occurring in $\Phi$.

Given a predicate signature $\S$, a \emph{DB instance} (or simply \emph{database}) $\D$ for $\S$ is a set of facts over the predicates of $\S$.
Hereinafter, we assume w.l.o.g.\ that a signature always contains the special predicate $\bot$ of arity 0, and that no database contains the fact $\bot$.

\begin{definition}[CQ and BCQ]
    \label{def:cq}
    A \emph{conjunctive query with inequalities (\cqineq)} $q$ is an FO formula of the form $\exists \vseq{y}\, (\alpha_1\wedge\ldots\wedge\alpha_k\wedge\ineq_1\wedge\ldots\wedge\ineq_h)$, where $k\geq 0$, $h\geq 0$, $k+h\geq 1$, every $\alpha_i$ is a predicate atom, every $\ineq_i$ is an inequality, and $\vseq{y}$ is a sequence of variables each of which occurs in at least one conjunct.
    Given a \cqineq $\exists\vseq{y}\,(\cnj)$, we denote by $\predatoms(q)$ the set of predicate atoms occurring in $\cnj$.
    If every variable of $q$ occurs in at least one predicate atom, we say that $q$ is a \emph{safe} \cqineq.
    Moreover, if no variable of $q$ is free (i.e.\ all such variables occur in $\vseq{y}$), we say that $q$ is a \emph{Boolean \cqineq (\bcqineq)}.
\end{definition}

\begin{definition}[UCQ and BUCQ]
    \label{def:ucq}
    A \emph{union of conjunctive queries with inequalities (\ucqineq)} $q$ is an FO formula of the form
    $\bigvee_{i=1}^m q_i$,
    where $m\geq 1$ and every $q_i$ is a \cqineq.
    We denote by $\cq(q)$ the set of \cqsineq occurring in $q$, i.e.\ $\cq(q)=\{q_1,\ldots,q_m\}$.
    If all such $q_i$ are safe and share the same free variables, we say that $q$ is a \emph{safe} \ucqineq.
    Moreover, if every $q_i$ is Boolean, we say that $q$ is a \emph{Boolean \ucqineq (\bucqineq)}.
\end{definition}

We also refer to \cqsineq and \ucqsineq without inequalities simply as CQs and UCQs, respectively.
Given a \ucqineq $q$ with free variables $\vseq{x}$, and given a tuple of constants $\vseq{t}$ of the same arity as $\vseq{x}$, we denote by $q(\vseq{t})$ the \bucqineq obtained from $q$ by replacing all the occurrences of free variables with the constants in $\vseq{t}$.

We are now ready to define the notion of dependency that we use throughout the paper.\footnote{Although slightly different, Definition~\ref{def:dependency-new} is actually equivalent to the notion of disjunctive embedded dependency with inequalities \emph{and equalities} presented in~\shortcite{D18,DT05}.}

\begin{definition}[Dependency]
\label{def:dependency-new}
    Given a predicate signature $\S$, a \emph{disjunctive embedded dependency with inequalities} (\dedineq), or simply \emph{dependency}, for $\S$ is an FO sentence over the predicates of $\S$ of the form:
    \begin{equation}
    \label{eqn:dependency}
        \forall \vseq{x}\, \big(\cnj \rightarrow q \big) ,
    \end{equation}
    where $\vseq{x}$ is a sequence of variables such that $\exists \vseq{x}\,(\cnj)$ is a safe \bcqineq,
    and $q$ is a \ucqineq whose free variables occur in $\vseq{x}$,
    and such that $q(\vseq{t})$ is a safe \bucqineq for any tuple $\vseq{t}$ of constants of the same arity as $\vseq{x}$.
\end{definition}

Given a predicate signature $\S$ and a dependency $\tau$ for $\S$ of the above form (\ref{eqn:dependency}), we indicate with $\body(\tau)$ 
the \cqineq $\cnj$ and with $\head(\tau)$ the \ucqineq $q=\bigvee_{i=1}^m q_i$. 
%
Moreover, we say that $\tau$ is \emph{non-disjunctive} if $m=1$, i.e.\ $\head(\tau)$ is a CQ, and that $\tau$ is \emph{single-head} if it is non-disjunctive and $|\predatoms(\head(\tau))|=1$.
In the rest of the paper, we omit that databases and dependencies are given for a certain signature.

\iflong
\subsubsection{Semantics}
\else
\inlinesubsection{Semantics}
\fi
%
Dependencies and safe \bucqsineq are subclasses of the class of domain-independent relational calculus formulas~\shortcite{AHV95}.
Given a domain-independent sentence $\phi$ of the relational calculus and a set of facts $\D$ over the same signature, we say that $\phi$ 
\emph{evaluates to true (resp., to false) in $\D$} by actually meaning that it evaluates to true (resp., to false) under the interpretation corresponding to the Herbrand model of $\D$.
Moreover, if $\Sigma$ is a set of dependencies, we write $\SD\modelso\phi$ if $\phi$ is satisfied by every model of the FO theory $\constr\cup\D$. As usual in databases, we adopt the Unique Name Assumption, i.e.\ we assume that, in every FO interpretation, different constants are interpreted by different domain elements. Also, we assume that, in every FO interpretation, the predicate $\bot$ has an empty interpretation (i.e.\ $\bot$ always evaluates to false).
Given a database $\D$ and a dependency $\tau$, we say that $\tau$ is \emph{satisfied by $\D$} if it evaluates to true in $\D$.
Given a database $\D$ and a set of dependencies $\constr$, we say that $\D$ is \emph{consistent with $\constr$} if all the dependencies of $\constr$ are satisfied by $\D$, i.e.\ if $\bigwedge_{\tau\in\Sigma}\tau$ evaluates to true in $\D$. From this definition we immediately have that:
\begin{proposition}
\label{pro:consistency-ub}
    Deciding consistency of a database with a set of dependencies is in $\aczero$ \wrt data complexity.
\end{proposition}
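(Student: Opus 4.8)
The plan is to reduce the problem directly to the evaluation of a single, fixed first-order sentence over the database and then invoke the classical fact that fixed-sentence evaluation lies in $\aczero$ with respect to data complexity. First I would set $\phi_\Sigma=\bigwedge_{\tau\in\Sigma}\tau$, which is a fixed sentence because in the data-complexity setting $\Sigma$ is part of the problem, not of the input. By definition, $\D$ is consistent with $\Sigma$ iff $\D\models\phi_\Sigma$, and since every dependency (and hence $\phi_\Sigma$) is a domain-independent relational calculus sentence, as recalled in Section~\ref{sec:preliminaries}, its truth value over $\D$ is well-defined and is obtained by restricting all quantifiers to the active domain of $\D$ together with the finitely many constants occurring in $\Sigma$.

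Next I would make the $\aczero$ membership explicit by exhibiting the circuit family. For a database of size $n$ the relevant domain has $O(n)$ elements; each universally (resp.\ existentially) quantified variable of $\phi_\Sigma$ is implemented by an unbounded fan-in AND (resp.\ OR) gate of fan-in $O(n)$; each predicate atom $p(\vseq{t})$ becomes an input gate that looks up whether the corresponding tuple belongs to $\D$; each inequality $t\neq t'$ becomes a constant-size comparison subcircuit over the encoding of the two terms; and the Boolean connectives become gates of the obvious kind. The resulting circuit $C_n$ has $n^{O(1)}$ gates and depth $O(1)$, both bounds depending only on the fixed size and quantifier rank of $\phi_\Sigma$; moreover the map $1^n\mapsto C_n$ is logspace-computable, since the shape of $C_n$ is entirely determined by the fixed syntax of $\phi_\Sigma$ and the index $n$. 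Hence consistency checking is in logspace-uniform $\aczero$.

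The only point that needs a line of care — and the closest thing to an obstacle, though a minor one — is the justification that evaluating $\phi_\Sigma$ with quantifiers ranging over the active domain plus the constants of $\Sigma$ coincides with the intended semantics of $\D\models\phi_\Sigma$; this is exactly the standard property of domain-independent relational calculus formulas. Equivalently, one may simply cite the textbook result that the data complexity of first-order query evaluation is in $\aczero$ and apply it to the fixed sentence $\phi_\Sigma$.
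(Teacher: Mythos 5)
Your proposal is correct and follows essentially the same route as the paper's proof: both reduce consistency checking to evaluating the fixed domain-independent sentence $\bigwedge_{\tau\in\Sigma}\tau$ over $\D$ and invoke the $\aczero$ data complexity of first-order query evaluation. Your explicit circuit construction is a harmless elaboration of the same argument.
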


Let $\D$ be a set of facts and $q=\exists \vseq{y}\,(\cnj)$ be a \cqineq over the same signature of $\D$.
An \emph{instantiation of $q$ in $\D$} is a substitution $\sigma$ of the variables occurring in $q$ with constants occurring in $\D$ such that 
$(i)$ $\predatoms(\sigma(\cnj))\subseteq\D$;
$(ii)$ no inequality of the form $c\neq c$ (where $c$ is a constant) occurs in $\sigma(\cnj)$. 
Given a \cqineq $q$, an \emph{image of $q$ in $\D$} is any subset
$\{\sigma(\alpha) \mid \alpha\in\predatoms(q)\}$ of $\D$, where $\sigma$ is any instantiation of $q$ in $\D$.
Given a \ucqineq $q$, an \emph{image of $q$ in $\D$} is any image of $q'$ in $\D$, where $q'\in\cq(q)$.

The following property, which is immediate to verify, is implicitly leveraged throughout the work.

\begin{proposition}\label{pro:image-satisfaction}
    Given a database $\D$ and a safe \bucqineq $q$, we have that $q$ evaluates to true in $\D$ iff there exists an image of $q$ in $\D$.
    Moreover, a dependency $\tau$ is satisfied by $\D$ iff, for every instantiation $\sigma$ of $\body(\tau)$ in $\D$, there exists an image of $\head(\sigma(\tau))$ in $\D$.
\end{proposition}

We point out that the images of a \bcqineq with $k$ predicate atoms in a set of facts $\D$ are at most $n^k$ (where $n$ is the cardinality of $\D$) and can be computed in polynomial time \wrt\ the size of the database.



\iflong
\subsection{Subclasses of dependencies}
\else
\inlinesubsection{Subclasses of dependencies}
\fi
%
We say that a dependency of the form (\ref{eqn:dependency}) is \emph{linear} if $|\predatoms(\cnj)|=1$. 
Also, we call \emph{tuple-generating dependency (TGD)} a non-disjunctive dependency having no occurrences of inequality atoms.
Moreover, if $|\cq(q)|=1$ and 
$\vars(q)\subseteq\vars(\cnj)$, then we say that the dependency is \emph{full non-disjunctive}, or simply \emph{full}. If $\head(\tau)=\bot$, then $\tau$ is called \emph{denial}. Note that every denial dependency is full.

Throughout the paper, we assume w.l.o.g.\ that each full dependency $\tau$ is such that $\head(\tau)$ consists of a single predicate atom. Observe indeed that:
\begin{itemize}
    \item it is always possible to equivalently represent a full dependency through a set of full dependencies with a single atom in their head (such a set contains one dependency for each atom in the head of the initial dependency);
    \item each $\tau$ of the form
    $\forall \vseq{x}\, (\varphi(\vseq{x}) \ra x \ne t)$\footnote{We assume w.l.o.g.\ that no inequality of the form $t\neq t$ or $c\neq d$, where $c$ and $d$ are distinct constants, appears in $\tau$.} (with $x \in \vseq{x}$) can be rewritten as
    $\forall \vseq{x}\,( \sigma(\varphi(\vseq{x})) \ra \bot)$, 
    where $\sigma$ is the substitution $\{x \mapsto t\}$.
\end{itemize}

Given a set of dependencies $\constr$, we call \emph{dependency graph} of $\constr$ the directed graph $\G(\constr)$ whose vertices are the dependencies of $\constr$ and such that there is one edge from the vertex $\tau_1$ to $\tau_2$ iff the head of $\tau_1$ contains an atom whose predicate appears in a predicate atom of the body of $\tau_2$. We say that $\constr$ is \emph{acyclic} if there is no cyclic path in $\G(\constr)$.
Given a set of acyclic dependencies $\constr$, we call \emph{topological order of $\constr$} any topological order of $\G(\constr)$, i.e.\ a sequence $\tup{\tau_1,\ldots,\tau_h}$ of the dependencies of $\constr$ such that, if $i\geq j$, then the vertex $\tau_i$ is not reachable from the vertex $\tau_j$ in $\G(\constr)$.


We then recall the classes of guarded and sticky dependencies~\shortcite{CGL12,BLMS11,CGP12}.
A dependency $\tau$ is \emph{guarded} if there exists an atom $\alpha$ of $\body(\tau)$ such that 
$\vars(\alpha) = \vars(\body(\tau))$.
%
Obviously, 
if a set of dependencies is linear, it is also guarded.
As usual, in order to introduce the stickiness property, we make use of the following auxiliary \iflong inductive \fi definition. Given a set of dependencies $\constr$, for each dependency $\tau\in\constr$ we say that a variable $x$ occurring in $\body(\tau)$\footnote{We assume w.l.o.g.\ that distinct dependencies use distinct variables.} is \emph{marked} if:
\begin{itemize}\itemsep0em
    \item[$(i)$] there exists an atom of $\head(\tau)$ in which $x$ does not occur, or
    \item[$(ii)$] there exist a dependency $\tau'$ and a predicate $p$ such that $\head(\tau)$ and $\body(\tau')$ contain, respectively, $p(\vseq{t})$ and $p(\vseq{t'})$ and, for every position $i$ of $\vseq{t}$ in which $x$ occurs, the $i$-th term of $\vseq{t'}$ is a marked variable.
\end{itemize}
Then, we say that $\constr$ is \emph{sticky} if, for every dependency $\tau\in\constr$, every marked variable occurs in $\body(\tau)$ at most once.


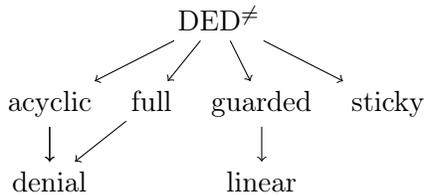
\begin{figure}[t]
\centering
\forestset{ dependencies/.style={for tree={edge={->}}} }
\begin{forest}
dependencies
[\dedineq
    [acyclic,name=A
        [denial,tier=term,name=D]
    ]
    [full,name=F]
    [guarded [linear]]
    [sticky]
]
\draw[->] (A) -> (D);
\draw[->] (F) -> (D);
\end{forest}
\caption{Illustration of hierarchy for sets of dependencies.}
\label{fig:dep-hierarcky}
\end{figure}

We now recall the notion of \emph{chase} for full TGDs.
Formally, given a database $\D$ and a set of full TGDs and denials $\Sigma$, we write $\chase(\D,\constr)$ for indicating the output of the following procedure. First, initialize a set of atoms $\C$ with the content of $\D$. Then, for each dependency $\tau$ in $\constr$, if there exists an instantiation $\subst$ of $\body(\tau)$ in $\C$ such that $\subst(\head(\tau))$ evaluates to false in $\C$, add to $\C$ the (unique and ground) predicate atom of $\subst(\head(\tau))$.
Notably, such a procedure runs in polynomial time w.r.t.\ the size of $\D$.

\medskip

\iflong
\subsection{Complexity classes}

In this article we refer to the following computational complexity classes:
\begin{itemize}
    \item $\aczero$, i.e.\ the class of decision problems solvable by 
    circuit families of constant (i.e.\ independent of the input size) depth, polynomial size, and whose gates have unbounded fanin (each gate can handle and arbitrarily-large number of inputs);
    \item PTIME, i.e.\ the class of decision problems solvable in polynomial time by a deterministic Turing machine;
    \item NP, i.e.\ the class of decision problems solvable in polynomial time by a non-deterministic Turing machine;
    \item coNP, i.e.\ the class of decision problems whose complement is solvable in polynomial time by a non-deterministic Turing machine;
    \item $\pidue$, i.e.\ the class of decision problems whose complement is solvable in polynomial time by a non-deterministic Turing machine augmented by an oracle for some NP-complete problem.
\end{itemize}
These classes are such that $\aczero\subseteq\text{PTIME}\subseteq\text{NP}\cup\text{coNP}\subseteq\pidue$. 
Given a complexity class $C$ and a decision problem $p$, we say that $p$ is \emph{$C$-hard} if there exists a logspace reduction from every other problem in $C$ to $p$. Moreover, if $p$ does also belong to $C$, we say that $p$ is \emph{$C$-complete}.
\else
We assume the reader to be familiar with the basic notions of computational complexity. In this paper we focus on the \emph{data complexity} of the problems studied, i.e.\ the complexity with respect to the size of the database.

\fi
\section{Repairs and decision problems}
\label{sec:repairs}

We are now ready to define the notion of repair, the different entailment semantics and the decision problems that we study.

\begin{definition}[Repairs]\label{def:repair}
    Given a database $\D$ and a set of dependencies $\constr$, a \emph{repair} of $\SD$ is a maximal subset of $\D$ that is consistent with $\constr$.
\end{definition}

Let us call $\reps(\D)$ the set of all the possible repairs of $\SD$.
It is immediate to see that a repair always exists for every $\SD$ (since the empty database satisfies every dependency).
Moreover, we denote as $\intrep(\D)$ the set $\bigcap_{\D'\in \reps(\D)} \D'$.

The first decision problems that we define are related to two distinct notions of entailment of formulas \wrt the repairs of a database.

\begin{definition}[\ars and \irs entailment]\label{def:entailment}
    Given a database $\D$, a set of dependencies $\constr$, and a \bucqineq $q$, we say that:
    \iflong\begin{itemize}\fi
        \iflong\item [$(i)$]\else$(i)$\ \fi
        \emph{$\SD$ entails $q$ under the \ars semantics} (or \emph{$\SD$ \ars-entails $q$} for short, denoted as $\SD\modelsar q$) if $q$ evaluates to true in every $\D'\in \reps(\D)$;
        \iflong\item [$(ii)$]\else$(ii)$\ \fi
        \emph{$\SD$ entails $q$ under the \irs semantics} (or \emph{$\SD$ \irs-entails $q$} for short, denoted as $\SD\modelsiar q$) if $q$ evaluates to true in $\intrep(\D)$.
    \iflong\end{itemize}\fi
\end{definition}

\iflong
The names of these entailment semantics come from the field of Description Logics. In fact, AR and IAR stand for ``ABox repair'' and ``intersection of ABox repairs'', respectively~\cite{LLRRS10}.
As anticipated in the introduction, \irs-entailment is a well-known sound approximation of \ars-entailment. This is illustrated in the subsequent example.
%
\newcommand{\bob}{\mathsf{bob}}
\newcommand{\tom}{\mathsf{tom}}
\newcommand{\ann}{\mathsf{ann}}
\newcommand{\cone}{\mathsf{c}_1}
\newcommand{\ctwo}{\mathsf{c}_2}
\begin{example}
\label{ex:semantics-difference}
    In a university, every lecturer (predicate $L$) of some course which is attended (predicate $A$) by a student is also registered as a teacher (predicate $T$) for that student. Moreover, nobody can be registered as a teacher of herself.
    We also know that Bob is attending the course $\cone$ that is taught by Tom, while Ann is attending the course $\ctwo$ that is taught by herself.
    All this can be modeled through the following database and set of acyclic+full dependencies:
    \[
    \begin{array}{r@{}l@{~~~~}r@{}l}
        \D = \{ & A(\bob,\cone), L(\tom,\cone), T(\tom,\bob), &
        \constr = \{ & \forall l,c,s\, (L(l,c) \land A(s,c) \rightarrow T(l,s)),\\
        & A(\ann,\ctwo), L(\ann,\ctwo), T(\ann,\ann)\}, &
        & \forall t\,(T(t,t) \rightarrow \bot) \} .
    \end{array}
    \]
    The second dependency of $\constr$ is not satisfied by $\D$, and for solving such inconsistency it is necessary to remove the fact $T(\ann,\ann)$. Now, due to the first dependency, we must also remove one between $A(\ann,\ctwo)$ and $L(\ann,\ctwo)$.
    Thus, we have that:
    \[
    \begin{array}{r@{}l}
         \reps(\D)=\{&
        \{A(\bob,\cone), L(\tom,\cone), T(\tom,\bob), A(\ann,\ctwo)\}, \\&
        \{A(\bob,\cone), L(\tom,\cone), T(\tom,\bob), L(\ann,\ctwo)\}
    \}
    \end{array}\] and
    $\intrep(\D)=\{ A(\bob,\cone), L(\tom,\cone), T(\tom,\bob) \}$.
    Let us now consider the query $q=\exists x\,A(x,\ctwo) \lor \exists x\,L(x,\ctwo)$.
    We have that $q$ is entailed by all the repairs, but not by their intersection, i.e.\ it is \ars-entailed but not \irs-entailed by $\SD$.
    \qedexample
\end{example}
\else 
It is easy to verify that, even in the presence of a single dependency with two predicate atoms in the body, the two semantics are different. In particular, the latter is a sound approximation of the first one.
\fi 

In the rest of the paper, we study entailment of safe \bucqsineq under both semantics. We refer to this problem as \emph{instance checking} in the case when the query is a fact.

The following properties, whose proof is immediate, show two important correspondences between the \ars and the \irs semantics of entailment. The second one is a direct consequence of~\shortcite[Corollary~4.3]{CFK12}.

\begin{proposition}
\label{pro:instance-checking-correspondence}
Instance checking under the \ars semantics coincides with instance checking under the \irs semantics.
\iflong
\end{proposition}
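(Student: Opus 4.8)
The plan is to prove both inclusions between the set of facts instance-check-entailed under \ars and those entailed under \irs, exploiting that a fact is, syntactically, the simplest possible safe BUCQ (a single ground atom with no variables and no inequalities).

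\medskip

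\noindent\textbf{The easy direction.} First I would observe that \irs-entailment always implies \ars-entailment for any domain-independent sentence $\phi$: if $\intrep(\D)\models\phi$, then since $\intrep(\D)\subseteq\D'$ for every repair $\D'$, and since the truth of a fact is monotone (a fact $\alpha$ holds in a set of facts $\E$ iff $\alpha\in\E$), we get $\D'\models\phi$ for every $\D'\in\reps(\D)$. Here monotonicity is exactly what we need and exactly what a bare fact gives us for free — no image-counting or instantiation argument is required. Actually this direction holds for arbitrary safe BUCQs, not just facts, because BUCQs are monotone; but for the statement at hand the fact case suffices.

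\medskip

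\noindent\textbf{The hard direction.} The substantive content is the converse: if $\SD$ \ars-entails a fact $\alpha$, then $\alpha\in\intrep(\D)$, i.e.\ $\alpha$ belongs to every repair. Suppose $\SD$ \ars-entails $\alpha$ but, for contradiction, there is a repair $\D'$ with $\alpha\notin\D'$. Since $\D'$ is a repair, $\D'$ is a maximal consistent subset of $\D$; since $\SD$ \ars-entails $\alpha$ we have $\D'\models\alpha$, which for a fact means $\alpha\in\D'$ — contradiction. So the proof is almost immediate once one unwinds the definitions: the key point is that for a \emph{fact} $\alpha$, the condition ``$\D'\models\alpha$'' is literally ``$\alpha\in\D'$'', so ``true in every repair'' and ``in the intersection of all repairs'' are the \emph{same} condition, transcribed. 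The intersection $\intrep(\D)=\bigcap_{\D'\in\reps(\D)}\D'$ collects precisely the facts lying in all repairs.

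\medskip

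\noindent I do not anticipate a real obstacle here; the only thing to be careful about is to state clearly why the argument breaks for general BUCQs (where ``$\D'\models Q$ for all $\D'$'' need not imply ``$\intrep(\D)\models Q$'', because a disjunct or a conjunct witnessed in each repair may be witnessed by different atoms in different repairs — this is exactly what Example~\ref{ex:semantics-difference} / the remark following Definition~\ref{def:entailment} shows), so that it is genuinely the atomicity of facts that makes the two semantics collapse. I would therefore keep the proof to a couple of lines: (i) \irs $\Rightarrow$ \ars by monotonicity of facts and $\intrep(\D)\subseteq\D'$; (ii) \ars $\Rightarrow$ \irs because $\D'\models\alpha$ iff $\alpha\in\D'$, so $\alpha$ true in all repairs means $\alpha\in\bigcap\reps(\D)=\intrep(\D)$.
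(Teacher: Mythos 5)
Your proof is correct and matches what the paper intends: the paper declares this proposition's proof ``immediate,'' and the immediate argument is precisely yours --- for a fact $\alpha$, $\D'\models\alpha$ iff $\alpha\in\D'$, so holding in every repair is literally membership in $\intrep(\D)$, with the converse direction following from $\intrep(\D)\subseteq\D'$ and monotonicity. Your closing remark on why the collapse fails for general BUCQs correctly identifies the phenomenon illustrated by the paper's Example~\ref{ex:semantics-difference}.
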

\begin{proposition}
In \else Moreover, in \fi
\label{pro:cq-entailment-correspondence}
the case of linear dependencies, entailment under the \ars semantics coincides with entailment under the \irs semantics.
\end{proposition}

We now define another pivotal decision problem studied in the context of CQA.
\iflong\begin{definition}[Repair checking]
\label{def:repair-checking}\fi
Given a set of dependencies $\constr$ and two databases $\D$ and $\D'$, we call \emph{repair checking} the problem of deciding whether a $\D'$ is a repair of $\SD$.
\iflong\end{definition}\fi

\ifrecoverability
Finally, we introduce a further decision problem, called \emph{recoverability}, which is related to the notion of consistency of a database.
\begin{definition}[Recoverability]
\label{def:recoverability}
Given a set of dependencies $\constr$, we say that a subset $\D'$ of $\D$ is \emph{recoverable from $\SD$} if there exists a subset $\D''$ of $\D$ such that $\D'\subseteq\D''$ and $\D''$ is consistent with $\constr$.
\end{definition}
\fi

From now on, in all the decision problems we study, we assume w.l.o.g.\ that all the predicates occurring in $\D$ also occur in $\constr$.
\iflong
\section{Recoverability}
\label{sec:recoverability}
\else
\subsection{Auxiliary definitions and results}
\fi

\ifrecoverability
In this section we establish the upper bounds of the problem of checking recoverability of a database. It is easy to see that, in the general case, the problem belongs to NP.
Anyway, the problem becomes tractable (indeed, it is in $\aczero$) if the acyclic property is accompanied by the full or linear one. In order to prove this result, we provide two first-order rewriting techniques.
We also prove that the problem can be solved in polynomial time if the considered dependencies are linear or full and, more specifically, it belongs to $\aczero$ if both such properties are enjoyed.

\else

In what follows, we present some auxiliary outcomes that are helpful in proving certain complexity results presented further on.
In particular, we introduce the \emph{recoverability} property, that is related to the notion of consistency of a database.
\begin{definition}[Recoverability]
\label{def:recoverability}
    Given a set of dependencies $\constr$, we say that 
    a set of facts $\D'$ is \emph{recoverable from $\SD$} if there exists a subset $\D''$ of $\D$ such that $\D'\subseteq\D''$ and $\D''$ is consistent with $\constr$.
\end{definition}
\fi

Intuitively, a set of facts $\D'$ is recoverable from $\SD$ if it is a subset of $\D$ that can be ``repaired'' adding facts belonging to the original database $\D$. Obviously, if $\D'$ is recoverable from $\SD$ then it is contained in some repair of $\SD$.
For some classes of dependencies, we then study the associated decision problem of checking whether a set of facts $\D'$ is recoverable from $\SD$.

\ifrecoverability
From Proposition~\ref{pro:consistency-ub} and Definition~\ref{def:recoverability}, it immediately follows that:

\begin{lemma}\label{lem:recov-ub-general}
\recovubStatement{arbitrary}{NP}
\end{lemma}
\fi

We now turn our attention to full dependencies.
We recall that the chase of a database w.r.t.\ full dependencies
can be computed in polynomial time. Consequently, the following property holds.

\begin{lemma}\label{lem:recov-ub-full}
\recovubStatement{full}{PTIME}
\end{lemma}
\begin{proof}
    It is straightforward to verify that a database $\D'$ is recoverable from $\SD$ iff $\chase(\D',\Sigma) \subseteq \D$. Then, the thesis follows from the fact that, for full dependencies, $\chase(\D',\Sigma)$ is finite and can be computed in polynomial time.
\end{proof}

Consider now the following crucial property. 

\begin{definition}[CQ-FO-rewritability]
\label{def:cq-fo-rewritable}
    We call a set of dependencies $\Sigma$ \emph{CQ-FO-rewritable} if, 
    for every CQ $q(\vseq{x})$ (where $\vseq{x}$ are the free variables of the query) there exists an FO query $\phi_q(\vseq{x})$ such that, for every database $\D$ and for every tuple of constants $\vseq{t}$ of the same arity as $\vseq{x}$, $\SD\modelso q(\vseq{t})$ iff $\phi_q(\vseq{t})$ evaluates to true in $\D$.
\end{definition}


It is easy to see that CQ-FO-rewritability is enjoyed by any set of full dependencies if such a set is also acyclic, or linear, or sticky, as shown below.

\begin{proposition}
\label{pro:full-fo-rewritable}
    Every set of acyclic+full dependencies, every set of full+linear dependencies and every set of full+sticky dependencies is CQ-FO-rewritable.
\end{proposition}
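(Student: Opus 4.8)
The plan is to show that each of the three classes is contained in a class of existential rules for which classical CQ answering is already known to be FO-rewritable, and then to check that the inequality atoms admitted by our dependencies — in bodies and, crucially, in heads — do not destroy this property. Recall from Definition~\ref{def:cq-fo-rewritable} that for every CQ $q(\vseq{x})$ we must produce an FO query $\phi_q$ equivalent to the certain answers of $q$ over $\SD$ for all $\D$. A full dependency is a full (non-disjunctive) TGD possibly decorated with inequality atoms in its body and head, so the three ``hosting'' classes are the non-recursive (acyclic) rules, the linear rules~\cite{CGL12}, and the sticky rules~\cite{CGP12}. A point I would settle first is the degenerate case in which $\D$ is classically inconsistent with $\Sigma$: this can occur because a head inequality (or a $\bot$-head) acts as a constraint, and then $\SD\models q(\vseq{t})$ holds vacuously, so the rewriting must have the shape $\phi_q \vee \phi^{\mathit{inc}}_\Sigma$, where $\phi^{\mathit{inc}}_\Sigma$ detects such an inconsistency.

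For \textbf{acyclic+full} dependencies I would argue directly by unfolding. I would stratify the predicates so that a predicate lies in a layer only if every predicate occurring in the body of a dependency having it in the head lies in a strictly lower layer — a finite layering, which exists precisely because $G(\Sigma)$ is acyclic. Following a topological order I would then build, for each predicate $p$, an FO query $\phi_p(\vseq{x})$ equivalent to ``$p(\vseq{x})$ belongs to the chase of $\D$ with $\Sigma$'': the generic atom $p(\vseq{x})$ is derivable iff $p(\vseq{x})\in\D$, or, for some $\tau\in\Sigma$ and some head atom of $\tau$ with predicate $p$, the body of $\tau$ — instantiated so that this head atom matches $p(\vseq{x})$, and together with the body inequalities of $\tau$ — is derivable; by acyclicity the atoms of $\body(\tau)$ use predicates of strictly lower layers, whose $\phi_{(\cdot)}$ are already available, so the recursion bottoms out after finitely many rounds. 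For an arbitrary CQ $q$, $\phi_q$ is obtained by replacing each atom of $q$ with the corresponding $\phi_p$ (identifying shared variables) and conjoining the inequalities of $q$; $\phi^{\mathit{inc}}_\Sigma$ is the disjunction, over the dependencies with $\bot$ in the head and over the head inequalities $u\neq u'$ of the remaining dependencies, of the FO query expressing that the corresponding body is derivable (under the unification $u=u'$ in the second case). Since $\Sigma$, its bodies and its heads are finite, $\phi_q \vee \phi^{\mathit{inc}}_\Sigma$ is a finite FO query.

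For \textbf{full+linear} and \textbf{full+sticky} dependencies I would instead rely on the \emph{bounded-derivation-depth property} of linear~\cite{CGL12} and of sticky~\cite{CGP12} existential rules, which already yields UCQ-rewritability of CQ answering for plain TGDs. The step I would carry out is to check that the standard backward-rewriting procedure extends to dependencies carrying inequalities: rewriting a query atom through a dependency appends to the current CQ the (suitably unified) inequalities from that dependency's body, and a rewriting branch is set to $\mathit{false}$ whenever the required unification equates the two sides of a strict inequality. Because these inequalities attach only to variables that are already present and never add predicate atoms, neither the derivation-depth bound nor the finiteness of the resulting UCQ is affected, so CQ answering stays FO-rewritable. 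Head inequalities and the inconsistency case are handled exactly as above, by adjoining $\phi^{\mathit{inc}}_\Sigma$, which for these two classes is again first order because it is a finite disjunction of FO-rewritings of body CQs. Together with the acyclic case this establishes the proposition, and it plugs directly into Theorem~\ref{thm:rc-full-cq-fo-rewritable}.

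The part I expect to be the real obstacle — essentially the only non-routine one — is the treatment of inequalities: confirming that carrying inequality conditions through backward rewriting keeps the rewriting finite (for linear and sticky this reduces to checking that the known depth bounds are insensitive to comparison atoms, since \cite{CGL12,CGP12} state them for inequality-free TGDs), and arranging $\phi^{\mathit{inc}}_\Sigma$ so that it is itself first order for all three classes at once. Everything else is either a direct appeal to those known FO-rewritability results or the routine unfolding argument in the acyclic case.
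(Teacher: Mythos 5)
Your proposal is correct and follows essentially the same route as the paper: the paper likewise reduces each class to the known CQ-FO-rewritability of the corresponding inequality-free TGD class (acyclic, linear, sticky \cite{CGL12,CGP12}, via the rewriting algorithm of \cite{KonigLMT15}), with the same key observation that, because the dependencies are \emph{full}, backward rewriting never eliminates variables and the body inequalities can simply be collected along the rewriting steps into a finite UCQ with inequalities. The only differences are presentational: the paper disposes of head inequalities up front by normalizing every full dependency to a single-head rule (turning $\varphi\rightarrow x\neq t$ into a $\bot$-headed denial), so no separate inconsistency disjunct $\phi^{\mathit{inc}}_\Sigma$ is needed, and it handles the acyclic case by the same citation rather than by your explicit layer-by-layer unfolding.
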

\begin{proof}
    To prove the proposition, we introduce some auxiliary definitions.
    Given a single-head full dependency $\tau$, we define $\tau_t$ 
    as the TGD obtained from $\tau$ deleting all the inequality atoms in its body.
    Moreover, given a set of full dependencies $\Sigma$, we denote by $\TGD(\Sigma)$ the set of full TGDs $\{\tau_t\mid\tau\in\Sigma\}$.
    
    Now, the key observation is that it is very easy to modify existing rewriting techniques for conjunctive queries over TGDs to deal with single-head full dependencies without inequalities. E.g.\ the algorithm shown in~\shortcite{KonigLMT15} can very easily be adapted to such single-head full dependencies with inequality atoms in rule bodies. Indeed, since the dependencies are full, no variable is deleted by the rewriting steps, which implies that the rewriting of every CQ over $\Sigma$ can be obtained essentially in the same way as in the case of the corresponding $\TGD(\Sigma)$: the additional inequality atoms can just be collected along the rewriting steps, and returned in the final rewritten query (which is thus a UCQ with inequalities rather than a simple UCQ). 
    This immediately implies the following property.
    
    \begin{lemma}
    Let $\Sigma$ be a set of full dependencies. If $\TGD(\Sigma)$ is CQ-FO-rewritable, then $\Sigma$ is CQ-FO-rewritable.
    \end{lemma}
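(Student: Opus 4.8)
The plan is to reduce CQ answering over $\Sigma$ to CQ answering over $\TGD(\Sigma)$ by showing that a standard backward-chaining UCQ-rewriting procedure, run on $\TGD(\Sigma)$, can be ``decorated'' with inequality atoms so as to yield a correct rewriting over $\Sigma$. First I would recall that, since $\Sigma$ is full (hence non-disjunctive and, w.l.o.g.\ by the normalisation discussed above, single-head with a predicate atom or $\bot$ in the head and inequalities only in rule bodies), for every database $\D$ the finite set $\chase(\D,\Sigma)$ is a universal model of $\D\cup\Sigma$: therefore $\SD\models q(\vseq{t})$ iff either $\bot\in\chase(\D,\Sigma)$ or $\chase(\D,\Sigma)\models q(\vseq{t})$. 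The same characterisation holds for $\TGD(\Sigma)$, and $\chase(\D,\TGD(\Sigma))$ arises from the very same fixpoint process with the inequality guards of the rules dropped.

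Next I would take any complete UCQ-rewriting algorithm for conjunctive queries over full TGDs --- e.g.\ the restriction to full rules of the algorithm of \cite{KonigLMT15} --- whose rewriting step replaces an atom $a$ of the current CQ that unifies with the head $\alpha$ of a rule $\tau_t$ by $\gamma(\body(\tau_t))$, where $\gamma=\MGU(a,\alpha)$, and then applies $\gamma$ to the rest of the query. Because the rules are full there are no existential variables, so this step is ordinary unification and no query variable is lost except through the identifications forced by $\gamma$; consequently the set of predicate-atom patterns reachable by rewriting is exactly the same whether one rewrites with $\TGD(\Sigma)$ or with $\Sigma$. I would then modify the step to operate on CQs with inequalities: when rewriting with $\tau$ (in place of $\tau_t$) I also add the atoms $\gamma(\inequalities(\body(\tau)))$ to the new CQ, I carry the existing inequality atoms of the CQ along under $\gamma$, and I discard a disjunct as soon as it contains an inequality of the form $t\neq t$. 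Since inequality atoms never enable or block a rewriting step, the rewriting process for $\Sigma$ terminates exactly when the one for $\TGD(\Sigma)$ does; as $\TGD(\Sigma)$ is CQ-FO-rewritable this happens after finitely many steps, yielding a finite UCQ with inequalities $\phi_q$, which is an FO query.

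It then remains to prove $\SD\models q(\vseq{t})$ iff $\D\models\phi_q(\vseq{t})$ (adding, if inconsistency must be covered, a disjunct encoding ``$\bot\in\chase(\D,\Sigma)$'', obtained by rewriting the Boolean query $\bot$ in the same decorated manner). Soundness and completeness would follow the usual induction: a derivation of a query image inside $\chase(\D,\Sigma)$ corresponds, read backwards, to a sequence of decorated rewriting steps producing a disjunct of $\phi_q$ with an image in $\D$, and conversely; the only new ingredient with respect to the inequality-free case is that an instantiation $\sigma$ of $\body(\tau)$ used along the chase must satisfy $\inequalities(\body(\tau))$, and this requirement is mirrored step for step by the inequality atoms that the decorated rewriting step appended to the corresponding disjunct. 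I expect the main obstacle to be precisely this correctness argument --- making rigorous that the bookkeeping of inequalities along the backward-chaining steps matches, step for step, the inequality checks performed forward along the chase, while correctly tracking the variable identifications induced by successive MGUs; termination, by contrast, is inherited entirely from the FO-rewritability of $\TGD(\Sigma)$.
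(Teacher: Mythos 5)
Your proposal is correct and follows essentially the same route as the paper: the paper's argument is precisely that a standard backward-chaining UCQ-rewriting for $\TGD(\Sigma)$ (e.g.\ the one of \citeA{KonigLMT15}) can be adapted to full dependencies with inequalities by collecting the rules' inequality atoms along the rewriting steps, relying on the fact that fullness means no variable is eliminated, so the rewriting over $\Sigma$ mirrors the one over $\TGD(\Sigma)$ and terminates whenever the latter does. Your additional detail on the chase characterisation and the step-for-step correspondence between forward inequality checks and the collected inequality atoms only makes explicit what the paper leaves as ``immediate''.
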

    
    Now, since acyclic TGDs, linear TGDs and sticky TGDs have the CQ-FO-rewritability property (see e.g.~\shortcite{CGL12,CGP12}),
    the above lemma immediately implies that the CQ-FO-rewritability property holds for every set of dependencies that is either acyclic+full, or full+linear, or full+sticky.
\end{proof}

We now prove that, for the above classes of dependencies, checking recoverability is FO rewritable, i.e.\ it is always possible to build an FO formula (depending only on $\constr$) whose evaluation decides if $\D'$ is recoverable from $\SD$.
To this aim, in order to represent the subset $\D'$ of $\D$, we make use of an auxiliary alphabet that contains a predicate symbol $\auxp{p}$ for every $p\in\pred(\constr)$, such that the predicates $\auxp{p}$ and $p$ have the same arity.
Given a formula $\phi$, we denote by $\aux(\phi)$ the formula obtained from $\phi$ replacing every predicate atom $p(\vseq{t})$ with $\auxp{p}(\vseq{t})$, i.e.\ the atom obtained from $p(\vseq{t})$ by replacing its predicate with the corresponding predicate in the auxiliary alphabet.
Similarly, given a set of atoms $\A$, we denote by $\aux(\A)$ the set $\{\aux(\alpha)\mid \alpha\in\A\}$.


Now, let $\Sigma$ be a CQ-FO-rewritable set of dependencies.
We define the following formula:
 \[
 \cqforecovformula(\Sigma)= \bigwedge_{p\in\pred(\constr)}\forall \vseq{x}\, (q_p^\Sigma(\vseq{x})\rightarrow p(\vseq{x}))
 \]
where $\vseq{x}=x_1,\ldots,x_k$ if $k$ is the arity of $p$, and $q_p^\Sigma(\vseq{x})$ is the FO-rewriting of the atomic query $p(\vseq{x})$ (with free variables $\vseq{x}$) with respect to $\Sigma$, expressed over the auxiliary predicate symbols.
Intuitively, by evaluating this sentence over a database $\D\cup\aux(\D')$ one is checking whether $\D$ contains all the facts that are logical consequences of $\Sigma\cup\D'$.

For checking recoverability of a database, we have to slightly modify the above formula as follows.
Given a set of atoms $\A$ over the auxiliary predicates, we define $\cqforecovformula(\A,\Sigma)$ as the formula (with free variables $\vars(\A)$) obtained from $\cqforecovformula(\Sigma)$ by replacing every occurrence of every auxiliary atom $\auxp{p}(\vseq{t})$ with the formula $(\auxp{p}(\vseq{t})\vee \bigvee_{\auxp{p}(\vseq{t'})\in\A} \vseq{t}=\vseq{t'}$), where $\vseq{t}=\vseq{t'}$ is the conjunction of equalities between the terms of $\vseq{t}$ and $\vseq{t'}$.
For example, if $\cqforecovformula(\Sigma)$ is the sentence $\forall x_1,x_2\, (\auxp{s}(x_1,x_2) \vee \auxp{r}(x_2,x_1)\rightarrow r(x_1,x_2))$, then $\cqforecovformula(\{\auxp{r}(y_1,y_2)\},\Sigma)$ is the sentence 
$\forall x_1,x_2\, \big(\auxp{s}(x_1,x_2) \vee (\auxp{r}(x_2,x_1)\vee (y_1=x_2\wedge y_2=x_1))\rightarrow r(x_1,x_2)\big)$.
For catching the intuition behind such a formula, suppose all the atoms of $\A$ are ground. Then, the equalities introduced in $\cqforecovformula(\Sigma)$ are used for simulating the presence of the facts from $\A$ in the database the formula is evaluated over. In this way, evaluating $\cqforecovformula(\A,\Sigma)$ over $\D\cup\aux(D')$ is equivalent to evaluating $\cqforecovformula(\Sigma)$ over $\D\cup\aux(D')\cup\A$.

It is now straightforward to verify that the following property holds.
%
%
\begin{lemma}
\label{lem:chase-wcformula}
    Let $\Sigma$ be a CQ-FO-rewritable set of full dependencies, and let $\D$ and $\D'$ be two databases. Then, $\cqforecovformula(\Sigma)$ evaluates to true in $\D\cup\aux(\D')$ iff $\chase(\D',\Sigma)\subseteq\D$.
    Moreover, for every set of atoms $\A$ and for every instantiation $\sigma$ of $\bigwedge_{\alpha\in\A} \alpha$ in $\D$,
    $\sigma(\cqforecovformula(\A,\Sigma))$ evaluates to true in $\D\cup\aux(\D')$ iff $\chase(\D'\cup\sigma(\A),\Sigma)\subseteq\D$.
\end{lemma}
\iflong
\begin{corollary}
\label{cor:recov-ub-full-fo-rewritable}
    Let $\Sigma$ be a CQ-FO-rewritable set of full dependencies. For every pair of databases $\D,\D'$, deciding whether $\D'$ is recoverable from $\SD$ is in $\aczero$ \wrt data complexity.
\end{corollary}

An example of application of the rewriting function $\cqforecovformula(\cdot)$ is illustrated below.
%
%
\iflong
\begin{example}
\label{ex:recov-acyclic-full}
    %
    Let us consider the database $\D$ and the set of acyclic+full dependencies $\Sigma$ described in Example~\ref{ex:semantics-difference}, for which the sentence $\cqforecovformula(\Sigma)$ is equal to:
    \[
    \begin{array}{l}
        \forall l,c,s\,\big(\auxp{L}(l,c) \wedge \auxp{A}(s,c) \ra T(l,s)\big) \wedge 
        \forall t\,\big(\auxp{T}(t,t) \vee \exists c\,\big(\auxp{L}(t,c) \land \auxp{A}(t,c)\big) \ra \bot\big) .
    \end{array}
    \]
    Now, let us take two subsets of $\D$, namely $\D'=\{A(\bob,\cone), L(\tom,\cone)\}$ and $\D''=\{A(\ann,\ctwo), L(\ann,\ctwo)\}$. One can verify that $\cqforecovformula(\Sigma)$ evaluates to true in $\D\cup\aux(\D')$, but it evaluates to false in $\D\cup\aux(\D'')$. Indeed, we have that $\D'$ is recoverable from $\SD$, while $\D''$ is not.
    \qedexample
\end{example}
\fi


\iflong
For the rest of this section, we focus on linear dependencies.
We first present the algorithm \algcomputerepairlinear (Algorithm~\ref{alg:compute-repair-linear}), which extends the one presented in~\shortcite[Theorem~4.4]{CFK12}. Such a procedure, given a set of linear dependencies $\Sigma$ and a database $\D$, computes the unique repair of $\SD$.

\begin{algorithm}
\caption{\algcomputerepairlinear}
\label{alg:compute-repair-linear}
\begin{algorithmic}[1]
\REQUIRE A database $\D$, a set of dependencies $\constr$;
\ENSURE A database $\D'$;
\STATE $\D' \gets \D$; \\
\REPEAT
    \FORALL {dependencies $\tau\in\Sigma$}
        \FORALL {instantiations $\sigma$ of $\body(\tau)$ in $\D'$ \\
        \quad such that $\sigma(\head(\tau))$ has no image in $\D'$}
            \STATE $\D'\gets\D'\setminus\predatoms  (\sigma(\body(\tau)))$;
        \ENDFOR
    \ENDFOR
\UNTIL {a fixpoint for $\D'$ is reached};
\RETURN $\D'$;
\end{algorithmic}
\end{algorithm}

It is immediate to verify that the algorithm is correct and runs in polynomial time.
It is also evident that a subset $\D'$ of $\D$ is recoverable from $\SD$ iff $\D'$ is a subset of the unique repair of $\SD$.
Consequently, we derive the following upper bound.

\begin{lemma}\label{lem:recov-ub-linear}
\recovubStatement{linear}{PTIME}
\end{lemma}

\fi


\iflong



Finally, we analyze the case of acyclic+linear dependencies, and show that the problem of checking recoverability is in $\aczero$ for such a class of dependencies.
In a similar way as above, we aim to define an FO sentence that decides recoverability of a database.

Given a set of acyclic+linear dependencies $\Sigma$, where $\tup{\tau_1,\ldots,\tau_h}$ is a topological order of $\Sigma$, and an atom $p(\vseq{t})$, we define the formula $\recoval(p(\vseq{t}),\Sigma)$ as follows: 
$(i)$ if $\Sigma=\emptyset$, then $\recoval(p(\vseq{t}),\Sigma)=\true$;
$(ii)$ if $\Sigma\neq\emptyset$, then $\recoval(p(\vseq{t}),\Sigma)$ is the formula:
{\scriptsize
\[
\begin{array}{l}
\displaystyle\hspace{-0.3em}
    \bigwedge_{\footnotesize\begin{array}{c}
        \tau_i\in\Sigma \wedge\\ \predatoms(\body(\tau_i))=\{p(\vseq{t'})\}
    \end{array}}
    \hspace{-0.5em}
    \forall \vseq{x}\,\left( \body(\tau_i) 
    \land \vseq{t}=\vseq{t'} \ra
    \bigg( \bigvee_{\exists \vseq{y}\,\cnj\in\cq(\head(\tau_i))} \exists \vseq{y}\,\Big(\cnj \wedge
    \bigwedge_{\beta\in\predatoms(\cnj)}\recoval(\beta,\{\tau_{i+1},\ldots,\tau_{h}\}) \Big) \bigg)\right)
\end{array}
\]
}
where $\vseq{x}=\vars(\body(\tau_i))\subseteq\vseq{t'}$.
Note that each recursive call removes the first dependency from the topological order of $\Sigma$: if the remaining set is empty, then the function returns true, otherwise we look for dependencies in $\Sigma$ whose body's predicate is $p$. For each such dependency $\tau_i$, we impose the equality between the terms of $p(\vseq{t})$ and $\body(\tau_i)$\footnote{Note again that $\vseq{t}=\vseq{t'}$ could also contain an equality between distinct constants, which would simply make false the evaluation of the premise.} and under such a ``binding'' we check if, for at least one disjunct $q$ in the head, both $q$ and the recursive subformula taking as input $\beta$ and the remaining dependencies are true.

Thus, the following holds:
\begin{lemma}
\label{lem:recoval}
    Let $\Sigma$ be a set of acyclic+linear dependencies, 
    let $\alpha$ be an atom and let $\sigma$ be a substitution of the variables occurring in $\alpha$ with constants. Then, for every database $\D$, 
    $\sigma(\alpha)$ is recoverable from $\SD$ iff $\sigma(\alpha)\in\D$ and $\sigma(\recoval(\alpha,\Sigma))$ evaluates to true in $\D$.
\end{lemma}

Then, we define the formula $\recoval(\Sigma)$ as follows:
\[
\begin{array}{l}
    \displaystyle
    \recoval(\Sigma) =
    \bigwedge_{p\in\pred(\constr)}\forall \vseq{x}\, \big(\auxp{p}(\vseq{x})\rightarrow p(\vseq{x}) \wedge \recoval(p(\vseq{x}),\Sigma)\big)
\end{array}
\]

\begin{proposition}
\label{prop:recov-ub-acyclic-linear}
\recovubStatement{acyclic+linear}{$\aczero$}
\end{proposition}
\begin{proof}
    From Lemma~\ref{lem:recoval} it immediately follows that $\D'$ is recoverable from $\SD$ iff $\recoval(\Sigma)$ evaluates to true in $\D\cup\aux(\D')$. Now, since $\recoval(\Sigma)$ is an FO sentence, the thesis follows.
\end{proof}

%
%
\newcommand{\nickone}{\mathsf{yoda}}
\newcommand{\userone}{\mathsf{u}_1}
\newcommand{\usertwo}{\mathsf{u}_2}
\newcommand{\dateone}{\mathsf{d}_1}
\newcommand{\postone}{\mathsf{p}_1}
\newcommand{\posttwo}{\mathsf{p}_2}
\begin{example}\label{ex:recov-acyclic-linear}
    For modeling a social network we use three predicates, namely $U$ for users (accepting an user ID, a nickname and a registration date), $P$ for posts (accepting a post ID and an author), and $L$ for the ``like'' reaction (accepting a user ID and a post ID).
    Only users can like posts, only posts can be liked, and posts must be authored by users.
    Moreover, we know that the user $\userone$ (with nick $\nickone$ and registration date $\dateone$) likes both posts $\postone$, created by herself, and $\posttwo$, whose author $\usertwo$ is not known to be an user.
    We thus consider the following database and  acyclic+linear set of dependencies:
    \[
    \begin{array}{r@{}l}
         \D = \{& U(\userone,\nickone,\dateone), P(\postone,\userone), P(\posttwo,\usertwo), L(\userone,\postone), L(\userone,\posttwo) \} ,\\
        \constr = \{
            &\forall u,p\, (L(u,p) \ra \exists n,r,a\,(U(u,n,r) \wedge P(p,a))),\\&
            \forall p,a\, (P(p,a) \ra \exists n,r\, U(a,n,r))
        \} .
    \end{array}
    \]
    Then, the sentence $\recoval(\Sigma)$ is equal to:
    \[
    \begin{array}{r@{}l}
        \forall u,p\,(&\auxp{L}(u,p) \ra L(u,p) \land \exists n,r,a\,(U(u,n,r) \land P(p,a) \land\\& \forall\, p',a' (P(p',a')\land p=p' \land a=a' \ra \exists n',r'\,U(a,n',r'))))\,\land \\
        \forall p,a\, (&\auxp{P}(p,a) \ra P(p,a) \land \exists n,r\,U(a,n,r)) .
    \end{array}
    \]
    which can be simplified in:
    \[
    \begin{array}{r@{\ }l}
        \forall u,p & (\auxp{L}(u,p) \ra L(u,p) \land \exists n,r,a\,(U(u,n,r) \land P(p,a) \land \exists n',r'\,U(a,n',r')))\,\land \\
        \forall p,a & (\auxp{P}(p,a) \ra P(p,a) \land \exists n,r\,U(a,n,r)) .
    \end{array}
    \]
    Let us now consider two subsets of $\D$, namely $\D'=\{L(\userone,\postone)\}$ and $\D''=\{L(\userone,\posttwo)\}$. One can verify that the evaluation of $\recoval(\Sigma)$ is true in $\D\cup\aux(\D')$ and false in $\D\cup\aux(\D'')$. Indeed, we have that $\D'$ is recoverable from $\SD$, while $\D''$ is not.
    \qedexample
\end{example}

\fi

 

***OLD STUFF:

QUESTA PARTE SU ACYCLIC È SBAGLIATA, NON È IN PTIME MA È NP-completa:

We now analyze acyclic dependencies, and prove that deciding recoverability in this case can be done in PTIME \wrt data complexity. To this aim, we introduce the notion of TGD expansion sequence.

\begin{definition}[TGD expansion sequence]
\label{def:tes}
Given a database $\D$, a set of dependencies $\constr$, and a subset $\D_0$ of $\D$, a TGD expansion sequence (TES) of $\D_0$ in $\SD$ is a sequence $\seq=\tup{W_1,\ldots,W_p}$ where every $W_i$ (called expansion step) is a quadruple of the form $\tup{B_i,\tau_i,\sigma_i,H_i}$ such that, for each $i\in\{1,\ldots,p\}$:

\begin{enumerate}
    \item $B_i \subseteq \D_{i-1}$, $\tau_i$ is a TGD of $\Sigma$ and $\sigma_i$ is a homomorphism such that $\sigma_i(\body(\tau_i))=B_i$ (i.e.\ $B_i$ is an image of $\body(\tau_i)$);
    
    \item either $(i)$ $H_i\in\D\setminus\D_{i-1}$ and $H_i$ is an image of $\sigma_i(\head(\tau_i))$ or $(ii)$ $i=p$, there exists no image of $\sigma_i(\head(\tau_i))$ in $\D$ and $H_i=\bot$.
    
    \item $\D_i=\D_{i-1}\cup\{H_i\}$; 
\end{enumerate}

Moreover, we denote by $\facts(\seq,\D_0)$ the set $\D_p$.
\end{definition}

Informally, a TGD expansion sequence simulates the iterative insertion into $\D_0$ of all the facts that can be generated with some TGD and that belong to its superset $\D$.

Let $\seq$ be a TES of $\D'$ in $\SD$. We say that $\seq$ is \emph{consistent} if $\bot\not\in\facts(\seq,\D_0)$, \emph{inconsistent} otherwise.
Moreover, we say that $\seq$ is \emph{complete} if either $H_p=\bot$ or, for every $\tau_i\in\Sigma$, if $\tau_i$ is applicable\nb{la nozione di ``applicability'' dovrebbe essere definita\label{note:appl}} to $\facts(\seq,\D_0)$ with $\sigma_i$, then $\sigma_i(\head(\tau_i))$ has an image in $\facts(\seq,\D_0)$.
Informally, $\seq$ is complete if either the fact $\bot$ has been generated or no further TGD expansion step can be applied to the set of facts generated by the expansion.

\begin{lemma}
\label{lem:tgd-closure}
Given a database $\D$, a set of dependencies $\constr$, and a subset $\D'$ of $\D$, we have that $\D'$ is recoverable from $\SD$ iff there exists a complete and consistent TES of $\D'$ in $\SD$.
\end{lemma}

\begin{lemma}
\label{lem:acyclic-generation-bounded}
    Let $\Sigma$ be a set of acyclic TGDs and EGDs, let $\D$ be a database, let $\D'\subseteq\D$, and let $\seq$ be a TES of $\D'$ in $\SD$. For every $\alpha\in\facts(\seq,\D')$ (*** this includes the case when $\alpha=\bot$ ***), there exists a TES $\seq'$ of $\D'$ in $\SD$  such that $\alpha\in\facts(\seq',\D')=\facts(\seq,\D')$ and $\alpha$ is generated in the first $k^h$ expansion steps of $\seq'$ (where $k$ is the maximum length of a TGD or EGD in $\Sigma$ and $h$ is the number of TGDs and EGDs in $\Sigma$).
\end{lemma}
\begin{proof}
    The upper bound follows from the observation that, given the acyclicity condition, for every fact $\beta$, if a subset $\TGDs$ of the TGDs of $\Sigma$ have been used to generate $\beta$, the successors of $\beta$ cannot be generated by the TGDs in $\TGDs$. So, there are at most $h$ levels of predecessors of the fact $\alpha$, and since there are at most $k$ predecessors of every fact, the thesis follows. *** FORSE SI PUO' SPIEGARE CON MAGGIORE RIFERIMENTO ALLE TES (è stata scritta prima di introdurre le TES) ***
\end{proof}


Given a database $\D$, a set of dependencies $\constr$, a set of facts $\D_0$ and two TESs $\seq_1$ and $\seq_2$, we say that $\seq_2$ is \emph{incompatible with $\seq_1$} if there exist an expansion step $\tup{B_i,\tau_i,\sigma_i,H_i}$ in $\seq_1$ and an expansion step $\tup{B_j,\tau_j,\sigma_j,H_j}$ in $\seq_2$ such that $H_i\neq H_j$ and $H_i$ is an image of $\sigma_j(\head(\tau_j))$. 

It is immediate to verify that $\seq_2$ is incompatible with $\seq_1$ iff the concatenation $\seq_1\circ\seq_2$ is \emph{not} a TES of $\D_0$ in $\SD$.

We say that two consistent TESs are \emph{equivalent} if they generate the same set of facts.

\begin{lemma}
\label{lem:contained-tes}
If $\seq$ and $\seq'$ are such that $\facts(\seq',\D')\subseteq\facts(\seq,\D')$, then $\seq'$ is compatible with $\seq$.
\end{lemma}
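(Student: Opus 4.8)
The plan is to route through the equivalence recorded immediately before the statement: $\seq'$ is compatible with $\seq$ iff the concatenation $\seq \circ \seq'$ is a TES of $\D'$ in $\SD$. So I would assume $\facts(\seq',\D') \subseteq \facts(\seq,\D')$ and check directly that appending the expansion steps of $\seq'$ after those of $\seq$ yields a legal TES. The steps of $\seq$ itself are untouched and legal, so everything reduces to re-verifying the two TES conditions for each step $W'_j = \tup{B_j,\tau_j,\sigma_j,H_j}$ of $\seq'$ in its new context, where the state reached just before $W'_j$ is now $\D''_{j-1} = \facts(\seq,\D') \cup \{H_1,\dots,H_{j-1}\}$, the $H$'s being the head facts of the earlier steps of $\seq'$.

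The body condition is the easy half: $B_j \subseteq \D'_{j-1} \subseteq \facts(\seq',\D') \subseteq \facts(\seq,\D') \subseteq \D''_{j-1}$, using that a TES only adds facts together with the inclusion hypothesis, so $B_j$ is still an image of $\body(\tau_j)$ present in the running state. For the head condition I would split into the two cases of the TES definition, according to how the step occurs inside $\seq'$. If $H_j=\bot$ (the final-step case, where $\sigma_j(\head(\tau_j))$ has no image in $\D$ at all), the step is still legal as the last step of $\seq\circ\seq'$, since no step of $\seq$ can have created an image of a head that has none in $\D$; moreover $\bot\in\facts(\seq',\D')\subseteq\facts(\seq,\D')$, so $\seq$ is itself inconsistent and the incompatibility clause is vacuous for this step anyway. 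The remaining case — where $H_j$ is a genuine, fresh image of $\sigma_j(\head(\tau_j))$ in $\seq'$ — is where the argument has to do real work.

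The main obstacle is exactly this freshness requirement $H_j\in\D\setminus\D''_{j-1}$: since $H_j\in\facts(\seq',\D')\subseteq\facts(\seq,\D')\subseteq\D''_{j-1}$, the fact $H_j$ is already present by the time $W'_j$ is reached inside $\seq\circ\seq'$, so it cannot be re-added as a new head. Making the plan go through requires reading the concatenation so that an already-satisfied step of $\seq'$ is simply skipped (it changes nothing), and then proving, by induction on the position $j$ in $\seq'$, that skipping never forces the configuration forbidden by the incompatibility definition — namely an instantiation $(B_j,\tau_j,\sigma_j)$ all of whose images of $\sigma_j(\head(\tau_j))$ in $\D$ have been committed differently by $\seq$. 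Here one must carefully track how a TES resolves the choice of image for an instantiated head when $\D$ offers several candidates: the key should be that every image $\seq'$ could legitimately pick for $\sigma_j(\head(\tau_j))$ already lies in $\facts(\seq,\D')$ by hypothesis, so a step $W_i$ of $\seq$ producing some other image $H_i\neq H_j$ of the same head cannot actually block $W'_j$. I would carry this as the induction invariant — that after processing the first $j$ steps of $\seq'$ the running state of $\seq\circ\seq'$ is $\facts(\seq,\D')\cup\{H_1,\dots,H_j\}$ and no processed step has triggered the incompatibility clause — and discharging it at the inductive step is the crux, precisely where the subtlety about multiple images of a single instantiated head (and hence the exact scope under which the lemma is intended) has to be pinned down, possibly by invoking an auxiliary hypothesis about the canonicity of the image choices. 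With the invariant in hand, $\seq\circ\seq'$ is a TES, so no incompatibility witness exists and $\seq'$ is compatible with $\seq$; the degenerate case with $\seq'$ empty is immediate.
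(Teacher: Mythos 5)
The paper does not actually prove this lemma: it is asserted without proof, inside leftover draft material that the authors themselves flag as incorrect, so the only thing to measure your argument against is the definitions. Against those, your proposal correctly locates the difficulty but does not overcome it --- and it cannot, because the statement is false as written. You reduce compatibility to showing that $\seq\circ\seq'$ is a TES, observe that the freshness requirement $H_j\in\D\setminus\D''_{j-1}$ fails for any step of $\seq'$ whose head fact already lies in $\facts(\seq,\D')$, and then try to repair this by skipping such steps while claiming that a step $W_i$ of $\seq$ producing a different image $H_i\neq H_j$ of the same instantiated head ``cannot actually block $W'_j$''. That claim is exactly what the definition of incompatibility denies: incompatibility is a purely syntactic condition on pairs of steps, triggered by the mere existence of $H_i\neq H_j$ with $H_i$ an image of $\sigma_j(\head(\tau_j))$, independently of whether $H_j$ also ends up in $\facts(\seq,\D')$. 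Your own hedge (``possibly by invoking an auxiliary hypothesis about the canonicity of the image choices'') is an admission that the crux is left undischarged.

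To see that no argument can discharge it, take $\constr=\{\tau\}$ with $\tau=\forall x\,(A(x)\rightarrow\exists y\,B(x,y))$, $\D=\{A(a),B(a,b),B(a,c)\}$ and $\D'=\{A(a)\}$. The sequence $\seq=\tup{W_1,W_2}$ with $W_1=\tup{\{A(a)\},\tau,\{x\mapsto a\},B(a,b)}$ and $W_2=\tup{\{A(a)\},\tau,\{x\mapsto a\},B(a,c)}$ satisfies every clause of Definition~\ref{def:tes}, and $\facts(\seq,\D')=\D$. The one-step sequence $\seq'=\tup{\tup{\{A(a)\},\tau,\{x\mapsto a\},B(a,c)}}$ is also a TES with $\facts(\seq',\D')=\{A(a),B(a,c)\}\subseteq\facts(\seq,\D')$, yet $\seq'$ is incompatible with $\seq$, witnessed by $W_1$ and the unique step of $\seq'$: we have $B(a,b)\neq B(a,c)$ and $B(a,b)$ is an image of $\exists y\,B(a,y)$. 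Even if one strengthens Definition~\ref{def:tes} so that a step may only be applied when the instantiated head has no image yet in the current set (the undefined ``applicability''), a variant with two dependencies whose instantiated heads share an image defeats the lemma in the same way. So the obstacle you isolate is not a missing auxiliary lemma about canonical image choices; it is a defect of the statement, which would have to be weakened (e.g., to sequences that choose the same image for every common instantiated head) before a proof along your lines could go through.
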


\begin{algorithm}
\caption{\algrecovacyclic}
\label{alg:recoverability-acyclic}
\begin{algorithmic}[1]
\REQUIRE A set of acyclic TGDs and EGDs $\Sigma$, a database $\D$, a subset $\D'$ of $\D$;
\ENSURE A Boolean value;
\STATE \textbf{let} $k$ be the maximum length of a TGD or EGD in $\Sigma$;
\STATE \textbf{let} $h$ be the number of TGDs in $\Sigma$;
\STATE \textbf{if} there exists a sequence of TESs $\tup{\seq_1,\ldots,\seq_\ell}$
\STATE \quad such that
\STATE \quad $(i)$ each $\seq_i$ has no more than $k^h$ expansion steps
\STATE \quad $(i)$ for every $i\in\{1,\ldots,\ell-1\}$
\STATE \qquad $\seq_{i+1}$ is compatible with $\seq_i$
\STATE \quad $(iii)$ for every inconsistent TES $\seq'$ s.t.\ $|\seq'|\leq k^h$
\STATE \qquad there exists $i\in\{1,\ldots,\ell\}$ 
\STATE \qquad such that $\seq'$ is incompatible with $\seq_i$
\STATE \textbf{then return} true;
\STATE \textbf{else return} false;
\end{algorithmic}
\end{algorithm}

\begin{theorem}
\label{thm:acyclic-recoverability}
    Let $\Sigma$ be a set of acyclic TGDs and EGDs, let $\D$ be a database, and let $\D'\subseteq\D$. $\D'$ is recoverable from $\Sigma$ iff the algorithm \algrecovacyclic$(\Sigma,\D,\D')$\nb{qui c'è uno spazio di troppo, lo lasciamo o lo togliamo dalla macro e lo aggiungiamo a mano per ogni utilizzo?} returns true.
\end{theorem}
\begin{proof}
    If $\D'$ is recoverable from $\Sigma$, then by Lemma~\ref{lem:tgd-closure} there exists a complete and consistent TES $\seq$ of $\D'$ in $\SD$. From Lemma~\ref{lem:acyclic-generation-bounded} it follows that for every TES that contains $\bot$, there exists an equivalent TES that generates $\bot$ within the first $k^h$ expansion steps. So, all the possible generations of $\bot$ are represented by the inconsistent TESs whose length is bounded to $k^h$. On the other hand, the reason why $\bot\not\in\facts(\seq,\D')$ is that none of the above inconsistent TESs of bounded length is compatible with $\seq$, which in turn is due to the presence of a fact $\beta$ in $\facts(\seq,\D')$ for each such inconsistent TES. But now, by Lemma~\ref{lem:acyclic-generation-bounded} it follows that, for each such fact, there exists a TES equivalent to $\seq$ that generates $\beta$ within the first $k^h$ expansion steps. So, the union of the prefixes of length $k^h$ of the above TESs equivalent to $\seq$ is a set that satisfies both condition $(i)$ and condition $(iii)$ of the algorithm \algrecovacyclic, while condition $(ii)$ follows from the fact that the above TESs are prefixes of TESs that are pairwise equivalent, and hence compatible by Lemma~\ref{lem:contained-tes}.
    
    For the other direction, suppose there exists a sequence of TESs $\seq_1,\ldots,\seq_\ell$ such that the three conditions of the algorithm \algrecovacyclic hold, and let $\seq''$ be the concatenation of such sequences, i.e.\ $\seq''=\seq_1\circ\ldots\circ\seq_\ell$. Now, $\seq''$ is a TES of $\D'$ in $\SD$, because of condition $(ii)$ of the algorithm. Moreover, due to condition $(iii)$, $\seq''$ is incompatible with all the possible generations of $\bot$: consequently, there exists a complete and consistent TES $\seq'''$ having $\seq''$ as a prefix, hence by Lemma~\ref{lem:tgd-closure} $\D'$ is recoverable from $\Sigma$.
\end{proof}

We are now ready to establish the complexity of checking recoverability in the case of acyclic dependencies:


\begin{lemma}
\label{lem:recov-ub-acyclic}
    Let $\Sigma$ be a set of acyclic dependencies, let $\D$ be a database, and let $\D'\subseteq\D$. Deciding whether $\D'$ is recoverable from $\Sigma$ is in PTIME \wrt data complexity.
\end{lemma}
\begin{proof}
    The proof follows from Algorithm \algrecovacyclic, Theorem~\ref{thm:acyclic-recoverability} and from the fact that computing all the TESs of size not greater than $k^h$ and checking condition $(ii)$ and condition $(iii)$ over such TESs can be done in time polynomial with respect to the size of $\D$.
\end{proof}


Let $\D$ be a database and $\Sigma$ a set of acyclic \ffk dependencies. Then it is possible to write a Boolean FO query that encodes recoverability, i.e.\ a sentence 
that is true in $(\D,\D')$\nb{spiegare cosa significa true in $(\D,\D')$} iff $\D'$ is recoverable from $\SD$. In this formula, we use the predicates $p$ of the initial signature to evaluate atoms over $\D$ and the auxiliary predicates $\auxp{p}$ to evaluate atoms over $\D'$.

A first version of the formula encoding strong inconsistency (i.e, the complement of checking recoverability) simply checks the OR of the dependency violations, i.e.:
\[
\inconsistentprimed(\Sigma)=\bigvee_{\tau\in\Sigma} (\body'(\tau)\wedge\neg\head(\tau))
\]
where $\body'(\tau)$ is the body of $\tau$ expressed using the primed predicates.

However, this formula does not encode the situations in which $\head(\tau)$ has an image in $\D$ and, if we add such an image to $\D'$, then there is a strong violation of another dependency. To also address these cases, the formula should rewrite every dependency.

In the following, we denote with $\phi$ a sentence (corresponding to the negation of a dependency) of the form:
\begin{equation}
\label{eqn:primed-conjunction}
\exists x\, (\cnj'\wedge\cnj\wedge\neg\exists y\, \beta) 
\end{equation}
where $\cnj'$ is a conjunction of primed predicate atoms, $\cnj$ is a conjunction of non-primed predicate atoms, and $\beta$ is a non-primed predicate atom. \warn{(***DOBBIAMO AGGIUNGERE LE DISUGUAGLIANZE***)}

Notice that every disjunct of $\inconsistentprimed(\Sigma)$ has the above form (\ref{eqn:primed-conjunction}).

Let $\tup{\tau_1,\ldots,\tau_h}$ be a topological order\nb{verrà definito solo nella sezione~\ref{sec:qe-upper-bounds}.4} of the dependencies in $\Sigma$. We define:
\[
\begin{array}{l}
\unrecoverable(\Sigma)=\\
\quad\rewr(\rewr(\ldots(\rewr(\rewr(\inconsistentprimed(\Sigma),\tau_1),\tau_2),\ldots,\tau_{h-1}),\tau_h)
\end{array}
\]
where $\rewr(\bigvee_{i\in\{1,\ldots,m\}}\phi_i,\tau)$ is defined as follows:
\[
\rewr(\bigvee_{i\in\{1,\ldots,m\}}\phi_i,\tau) = \bigvee_{i\in\{1,\ldots,m\}} \rewr(\phi_i,\tau)
\]
and $\rewr(\phi,\tau)$ (where $\phi$ has the above form (\ref{eqn:primed-conjunction})) is defined as follows:
\[
\begin{array}{l}
\displaystyle
\rewr(\phi,\tau) =
\quad \phi \vee \bigvee_{\alpha\in\primed(\phi) \wedge \applicable(\tau,\alpha)}
\rewr(\phi,\tau,\alpha)
\end{array}
\]
where $\primed(\phi)$ is the set of primed atoms occurring in $\phi$, $\applicable(\tau,\alpha)$ is true iff $\head'(\tau)$\nb{$\head'$ andrebbe definito come fatto per $\body'$?} and $\alpha$ unify, and
\[
\begin{array}{l}
\rewr(\phi,\tau,\alpha) =
\sigma(\body'(\tau)\wedge\head(\tau)\wedge\phi^-_\alpha)
\end{array}
\]
where $\sigma$ is an MGU for $\head'(\tau)$ and $\alpha$ and $\phi^-_\alpha$ is the formula obtained from $\phi$ by deleting $\alpha$. 

Informally, $\rewr(\phi,\tau,\alpha)$ is obtained by replacing the primed predicate atom $\alpha$ with the formula $\body'(\tau)\wedge\head(\tau)$ and then applying the unification between $\head'(\tau)$ and $\alpha$.


\begin{theorem}\label{thm:fo-rewr-recov-acyclic-ffk-correct}
Let $\Sigma$ be a set of acyclic \ffk dependencies, and let $\D$ and $\D'$ be two databases such that $\D'\subseteq\D$. The sentence $\unrecoverable(\Sigma)$ is true in $(\D,\D')$ iff $\D'$ is not recoverable from $\SD$.
\end{theorem}

\section{Repair checking}
\label{sec:rc}

In this section we study the data complexity of repair checking in the classes of dependencies previously mentioned. 

\subsubsection*{Lower bounds}

We remark that all the lower bounds shown in this section also hold if the dependencies are restricted to be single-head and without inequalities in the head.

\iflong
First, we focus on the lower bounds of the problem of checking if a given database is a repair. 
We prove the repair checking
\else
First, we prove the
\fi
problem to be PTIME-hard in the case when such dependencies are linear+sticky.

\begin{theorem}
\label{thm:rc-lb-linear-sticky}
    \rclbStatement{linear+sticky}{PTIME}
\end{theorem}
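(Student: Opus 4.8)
The plan is to reduce (the complement of) the monotone circuit value problem \emph{MCVP} --- given a Boolean circuit over $\{\wedge,\vee\}$ with fan-in $2$, a $0/1$ assignment of its input gates, and a designated output gate, does the output evaluate to $1$? --- to repair checking, exhibiting a single fixed set $\constr$ of linear and sticky dependencies. The starting point is the observation that, since $\emptyset$ is always consistent, $\emptyset$ is a repair of $\SD$ iff $\D$ has no non-empty subset consistent with $\constr$. I will build $\D$ from a circuit $C$ so that this holds precisely when $C$ outputs $0$; as MCVP is \PTIME-complete, \PTIME\ is closed under complement, and the whole construction is computable in \logspace, this gives the claimed lower bound, with the second database of the pair always taken to be $\emptyset$.

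The signature has a $0$-ary predicate $\mathit{OutAcc}$, a unary predicate $\mathit{Acc}$, binary predicates $\mathit{Rule},\mathit{Dep}$, and ternary predicates $\mathit{Left},\mathit{Right}$; the fixed $\constr$ consists of $\mathit{Acc}(x)\ra\exists y\,\mathit{Rule}(x,y)$, $\mathit{Acc}(x)\ra\exists y\,\mathit{Dep}(x,y)$, $\mathit{Rule}(x,y)\ra\exists z\,\mathit{Left}(x,y,z)$, $\mathit{Rule}(x,y)\ra\exists z\,\mathit{Right}(x,y,z)$, $\mathit{Left}(x,y,z)\ra\mathit{Acc}(z)$, $\mathit{Right}(x,y,z)\ra\mathit{Acc}(z)$, $\mathit{Dep}(x,y)\ra\mathit{OutAcc}$, and $\mathit{OutAcc}\ra\mathit{Acc}(\mathbf 0)$, where $\mathbf 0$ is a fixed constant. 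Each body is a single predicate atom with pairwise distinct variables, so every dependency is linear, single-head, inequality-free, and sticky (no variable, marked or not, occurs more than once in a body). Given $C$ with its output gate renamed to $\mathbf 0$, I let $\D$ contain $\mathit{Acc}(g)$ and $\mathit{Dep}(g,g)$ for every gate $g$, the atom $\mathit{OutAcc}$, and ``rules'' as follows: a $\wedge$-gate $g=g_j\wedge g_k$ gets one rule $\mathit{Rule}(g,r_g),\mathit{Left}(g,r_g,g_j),\mathit{Right}(g,r_g,g_k)$; a $\vee$-gate $g=g_j\vee g_k$ gets two rules, one with $\mathit{Left}$ and $\mathit{Right}$ both pointing to $g_j$ and one with both pointing to $g_k$; a $1$-valued input gate $g$ gets a self-loop rule $\mathit{Rule}(g,r_g),\mathit{Left}(g,r_g,g),\mathit{Right}(g,r_g,g)$; a $0$-valued input gate gets no rule (all $r_g$'s being fresh constants).

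For correctness, first suppose $C$ outputs $1$. Let $S$ be the set of all $\mathit{Acc}(g)$ with $g$ evaluating to $1$, all rule atoms whose target gates evaluate to $1$, all $\mathit{Dep}$-atoms, and $\mathit{OutAcc}$. Using that a $1$-valued $\wedge$-gate has both inputs $1$ and a $1$-valued $\vee$-gate has at least one input $1$, one verifies that $S$ is consistent with $\constr$; since $\mathit{Acc}(\mathbf 0)\in S$, $S$ is non-empty, so $\emptyset$ is not a repair. Conversely, suppose $S\subseteq\D$ is non-empty and consistent. Consistency propagates: any $\mathit{Acc}$-atom of $S$ forces a $\mathit{Dep}$-atom into $S$, hence $\mathit{OutAcc}\in S$, hence $\mathit{Acc}(\mathbf 0)\in S$; and an atom of $S$ of any other predicate ultimately forces an $\mathit{Acc}$-atom as well (a $\mathit{Rule}$-atom needs its $\mathit{Left}/\mathit{Right}$ atoms, which need $\mathit{Acc}$-atoms). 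Thus $\mathit{Acc}(\mathbf 0)\in S$; unrolling the requirements $\mathit{Acc}\to\mathit{Rule}\to\mathit{Left}/\mathit{Right}\to\mathit{Acc}$ downward along $C$ --- this terminates since $C$ is acyclic; it cannot reach a $0$-input, which carries no $\mathit{Rule}$-atom and would make $S$ inconsistent; and it stops at $1$-inputs --- produces a well-founded derivation that $\mathbf 0$ evaluates to $1$, a contradiction. Hence no non-empty consistent subset exists and $\emptyset$ is a repair. Therefore $\emptyset$ is a repair of $\SD$ iff $C$ outputs $0$.

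The delicate point, and the crux of the construction, is the design of $\constr$. A $\wedge$-gate must ``inspect'' two inputs, but linearity allows only one body atom, so this inspection has to be spread across several dependencies that each read a single atom --- the role of the $\mathit{Rule}/\mathit{Left}/\mathit{Right}$ layer; and the existential quantifiers are indispensable (sets of dependencies that are both \ffk and linear already give repair checking in $\aczero$), so the reduction must push the circuit wiring into the witnesses demanded by the dependencies, with the $\vee$-gates realised by the freedom in which $\mathit{Rule}$-atom is kept. One also has to cope with the fact that the repair cannot be computed within the reduction; this is why $\D'$ is taken to be $\emptyset$ and why the global gadget (the $\mathit{Dep}$-atoms, the marker $\mathit{OutAcc}$, and $\mathit{OutAcc}\ra\mathit{Acc}(\mathbf 0)$) is added, forcing the entire database to survive exactly when the output gate's atom does.
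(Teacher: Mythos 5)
Your proof is correct. It follows the same high-level strategy as the paper's: fix a small set of linear, single-head, inequality-free (hence trivially sticky) dependencies whose heads carry existential witnesses, set $\D'=\emptyset$, and reduce a \PTIME-complete monotone derivability problem in \logspace{} to the question of whether $\emptyset$ is a repair, i.e., whether $\D$ has a non-empty consistent subset. The differences are in the source problem and in the polarity of the encoding. The paper reduces from HORN SAT and arranges the dependencies (via a circular linked list of the head-occurrences of each propositional variable) so that \emph{derivation forces deletion}: the facts erased from the unique repair are exactly those of the variables in the minimal model, and deriving the goal atom $u$ triggers a global collapse to $\emptyset$. You reduce from the monotone circuit value problem and arrange the dual situation, \emph{survival requires support}: an atom can be kept only if a full well-founded derivation of the output gate can be kept alongside it, so a non-empty consistent subset exists exactly when the circuit outputs $1$. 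Both realizations are sound (HORN SAT and MCVP are interchangeable here, and \PTIME{} is closed under complement); your version has the mild advantage of not needing the circular-list gadget that the paper uses to propagate deletion among all head-occurrences of a variable, at the cost of the extra $\mathit{Dep}/\mathit{OutAcc}$ layer that forces every surviving atom to drag the output gate's atom into the consistent subset. Your closing observation that the existential heads are indispensable (since \ffk{}+linear dependencies put repair checking in $\aczero$) correctly identifies where the hardness comes from.
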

\begin{proof}
    We prove the thesis by showing that the HORN SAT problem can be reduced to repair checking.
    
    Let $\phi$ be a set of $n$ ground Horn rules. We assume w.l.o.g.\ that there is at least one rule without head in $\phi$ (i.e.\ a clause with only negated variables).  Let $\phi'$ be obtained from $\phi$ by slightly modifying rules without heads as follows: every rule without head $r$ is replaced by the rule with the same body as $r$ and with the new variable $u$ in the head; moreover, if $\phi$ contains only one rule without head, then we add the rule $u \rightarrow u$ to $\phi'$ (so $\phi'$ always contains at least two rules with $u$ in the head). It is immediate to verify that $\phi$ is unsatisfiable iff $u$ belongs to all the models (and hence to the minimal model) of $\phi'$.
     
    Now, we associate an id $r_i$ to every Horn rule in $\phi'$, with $r_1$ associated to a rule having $u$ in the head.
    Then, let $\phi'(x)$ be the rules of $\phi'$ having the variable $x$ in their head, let $\pv(\phi')$ be the set of propositional variables occurring in $\phi'$ and, for every rule $r_i$, let $\bv(r_i)$ be the set of variables in the body of $r_i$.
    We define $\D$ as the set of facts:
    \[
    \begin{array}{l}
    \displaystyle
    \bigcup_{x\in\pv(\phi')}\{ H(r_{i_1},x,1,2), H(r_{i_2},x,2,3), \ldots, H(r_{i_h},x,h,1) \mid h=|\phi'(x)| \} \:\cup
    \\\displaystyle
    \hspace{0.83em}\bigcup_{r_i\in\phi'}\{ B(r_{i},x) \mid x \in\bv(r_i) \}
    \end{array}
    \]
    
    Finally, we define the following set of linear+sticky dependencies $\Sigma$:
    \[
    \begin{array}{r@{\ }l}
        \forall x,y,z,w & (H(x,y,z,w)\rightarrow \exists v \, B(x,v)) \\
        \forall x,y,z,w & (H(x,y,z,w)\rightarrow \exists v,t \, H(v,y,w,t)) \\
        \forall x,y & (B(x,y)\rightarrow \exists z,w,v \, H(z,y,w,v)) \\
        \forall x,y,z,w & (H(x,y,z,w)\rightarrow H(r_1,u,1,2)) \\
        \forall x,y & (B(x,y)\rightarrow H(r_1,u,1,2))
    \end{array}
    \]
    Now, the (unique) repair of $\SD$ is the set of facts built starting from $\D$ and deterministically removing some atoms according to the dependencies of $\Sigma$.
    Observe that:
    \begin{itemize}
        \setlength\itemsep{0em}
        \item
        because of the first dependency of $\Sigma$, if for a rule there are no more body atoms, then the head atom of that rule is deleted;
        \item
        because of the second dependency, if a head atom is deleted, then all the head atoms with that variable are deleted;
        \item
        because of the third dependency, if there are no head atoms for a variable, then all the body atoms for that variable are deleted;
        \item
        because of the fourth and fifth dependencies, if the atom $H(r,u,1,2)$ is deleted, then all the atoms are deleted.
    \end{itemize}
    
    Therefore, it can be verified that for every variable $x$ that belongs to the minimal model of $\phi'$, no fact of the form $H(\_,x,\_,\_)$ or $B(x,\_)$ belong to the repair of $\SD$. Consequently, the minimal model of $\phi'$ contains $u$ (i.e.\ $\phi$ is unsatisfiable) iff the fact $H(r_1,u,1,2)$ does not belong to the repair of $\SD$, which implies (because of the fourth and fifth rules of $\Sigma$) that $\phi$ is unsatisfiable iff the empty database is the repair of $\SD$.
\end{proof}

Then, we extend the PTIME lower bound shown in~\shortcite[Theorem~5]{AK09} for full dependencies to the case of full+guarded dependencies.

\begin{theorem}
\label{thm:rc-lb-full-guarded}
    \rclbStatement{full+guarded}{PTIME}
\end{theorem}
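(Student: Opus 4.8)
The statement to prove is Theorem~\ref{thm:rc-lb-full-guarded}: there is a set of full+guarded dependencies and a pair of databases for which repair checking is PTIME-hard in data complexity. The plan is to reduce from a known PTIME-complete problem --- the monotone circuit value problem (MCVP), or equivalently HORN SAT --- following the same structure as the proof of \cite[Thm~5]{AK09} for full dependencies, and then to argue that the dependencies used in that reduction can be replaced by (or are already) \emph{guarded} ones. Recall that a dependency is full when its head is a single CQ with no existential variables, and guarded when some single body atom contains all the body variables. The key observation driving the proof is that linear dependencies are automatically guarded, and more generally any full dependency whose body is a single atom, or whose body atoms can be ``joined'' through a guard atom that is always derivable, is guarded; so the technical work is to design the reduction so that every dependency meets the guardedness syntactic restriction.

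First I would fix the target problem as MCVP: given a Boolean circuit with AND/OR gates, input gates preset to $0$ or $1$, and a designated output gate, decide whether the output evaluates to $1$. I would encode a circuit instance into a fixed schema using predicates such as $\mathit{And}(g,g_1,g_2)$, $\mathit{Or}(g,g_1,g_2)$, $\mathit{InOne}(g)$, $\mathit{InZero}(g)$, $\mathit{Out}(g)$, plus a unary predicate $\mathit{True}(g)$ meant to record ``gate $g$ has value $1$'', and the nullary $\bot$. The \emph{fixed} set $\constr$ of dependencies then consists of: the gate-evaluation rules $\forall g\, (\mathit{InOne}(g)\ra \mathit{True}(g))$; $\forall g,g_1,g_2\,(\mathit{Or}(g,g_1,g_2)\wedge \mathit{True}(g_1)\ra \mathit{True}(g))$ and the symmetric one for $g_2$; $\forall g,g_1,g_2\,(\mathit{And}(g,g_1,g_2)\wedge \mathit{True}(g_1)\wedge \mathit{True}(g_2)\ra \mathit{True}(g))$; and a ``goal-testing'' rule of the form $\forall g\,(\mathit{Out}(g)\wedge \mathit{True}(g)\ra \bot)$ --- note the head here is $\bot$, a single atom, so it is a full dependency with head a CQ of one atom. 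Every one of these dependencies is guarded: in the $\mathit{Or}$ rule the atom $\mathit{Or}(g,g_1,g_2)$ contains all of $g,g_1,g_2$; in the $\mathit{And}$ rule $\mathit{And}(g,g_1,g_2)$ does the same; the $\mathit{InOne}$ rule and the $\mathit{Out}$ rule are linear, hence guarded. For the instance databases I would let $\D$ be the encoding of the whole circuit together with the full set $\{\mathit{True}(g)\mid g\text{ a gate}\}$ of ``True'' facts, and let $\D'$ be $\D$ minus those $\mathit{True}(g)$ facts that are \emph{not} forced by the circuit semantics --- more precisely I would set up $\D,\D'$ so that $\D'$ is a repair of $\SD$ \emph{iff} the output gate does not evaluate to $1$ (equivalently, iff the consistent subsets of $\D$ do not entail $\mathit{True}(\text{out})$). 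The standard maneuver from \cite{AK09}: because $\bot$ cannot be repaired by additions, any repair must drop every $\mathit{True}$ fact along a violating chain; the unique largest consistent subset, when the output is $0$, keeps exactly the semantically-true gates; when the output is $1$, the only consistent subsets are those that drop $\mathit{True}(\text{out})$ and hence are not maximal in the intended way, so the candidate $\D'$ fails to be a repair.

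The steps, in order, would be: (1) specify the schema and the fixed guarded+full dependency set $\constr$ precisely; (2) give the logspace translation from an MCVP (or HORN SAT) instance to the pair $(\D,\D')$, noting that only the \emph{data} changes while $\constr$ is fixed --- this is what makes the hardness a \emph{data-complexity} statement; (3) prove the correctness equivalence ``$\D'$ is a repair of $\SD$ iff the circuit outputs $0$'' in both directions, using the facts that (a) $\bot$ forces deletions only, (b) the dependency set is full so the chase/closure terminates and the consistent-closure structure is well understood, and (c) maximality pins down which $\mathit{True}$ facts survive; (4) observe that the reduction is logspace-computable and that every dependency in $\constr$ is guarded and full and moreover single-head without inequalities in the head, matching the remark preceding the theorem. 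I expect step~(3), the maximality argument, to be the main obstacle: one has to argue carefully that when the output gate is true, there is \emph{no} consistent subset of $\D$ of which the proposed $\D'$ is a subset whose maximality would be witnessed --- i.e., that $\D'$ either is not consistent or is not \emph{maximal} consistent --- and conversely that when the output is false, $\D'$ is exactly the (unique) maximal consistent subset. This requires showing the closure of the ``input'' and ``gate'' facts under the non-$\bot$ dependencies is deterministic (which it is, since those rules are full and non-disjunctive, hence their chase is confluent and its result is the unique minimal model over the kept facts), and then checking that deleting any strictly smaller set of $\mathit{True}$ facts leaves a $\bot$-violation precisely when the output is true. Adapting the $\D/\D'$ construction so that this maximality bookkeeping goes through with \emph{guarded} rules (rather than arbitrary full ones) is the delicate part, but since all the rules above are already guarded, the adaptation mainly amounts to verifying that the \cite{AK09} argument never needed non-guarded rules.
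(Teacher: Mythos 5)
Your high-level plan (adapt the Horn-satisfiability reduction of \cite[Thm.~5]{AK09} so that every dependency is guarded) is the same as the paper's, and your observation that guardedness comes essentially for free---each gate rule has a body atom mentioning all body variables---is correct. The genuine gap is in your step~(3), exactly where you anticipated trouble: with the rules and databases you propose, the equivalence ``$\D'$ is a repair of $\SD$ iff the output is $0$'' fails, because $\D'$ is not \emph{maximal} even when the circuit outputs $0$. Your dependencies only propagate $\mathit{True}$ upward, so for any semantically false gate $g$ whose activation does not force $\mathit{True}(\mathit{out})$ (e.g.\ a false input feeding only into an AND gate with a false sibling, or a gate with no path to the output), the set $\D'\cup\{\mathit{True}(g)\}$, together with the $\mathit{True}$ facts of the gates it activates, is a consistent subset of $\D$ strictly containing $\D'$; hence $\D'$ is not a repair. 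Your claim that the semantically-true gates form ``the unique largest consistent subset'' is also incorrect: there are many incomparable maximal consistent subsets (for instance $\D\setminus\{\mathit{Out}(\mathit{out})\}$ is always one).

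What is missing is precisely the gadget the paper grafts onto the AK09 construction: auxiliary predicates ($A$ and $U$, with the guarded rules $\forall x,w\,(A(w)\wedge U(x,w)\rightarrow M(x))$ and $\forall x\,(M(x)\rightarrow A(x))$, alongside the guarded Horn-clause rules over $P$ and $N$) whose purpose is to tie all the deletable truth-value facts together, so that re-inserting \emph{any single} deleted fact into $\D'$ cascades through the dependencies into re-inserting one canonical cluster, whose consistency is equivalent to satisfiability of the Horn formula. Accordingly, the paper's $\D'$ deletes \emph{all} the $M$ and $A$ facts at once, and maximality of $\D'$ collapses to the single question of whether any nonempty consistent re-addition exists. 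Without such an ``all-or-nothing'' mechanism, the maximality bookkeeping cannot go through, so the reduction as you propose it does not establish the lower bound.
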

\begin{proof}
    We prove the thesis using a modification of the reduction from Horn 3-CNF satisfiability shown in the proof of~\shortcite[Theorem~5]{AK09} for the case of full TGDs.
    
    Let $\phi$ be a Horn 3-CNF formula. We denote by $\clauses(\phi)$ the set of clauses of $\phi$, and by $\pv(\phi)$ the set of propositional variables of $\phi$.
    Let $\D$ and $\D'$ be the following sets of facts:
    \[
    \begin{array}{r@{\ }l}
    \D'=& \{ U(x,y) \mid x \in \clauses(\phi), y \in\pv(\phi) \} \,\cup \\
    & \{ P(x,y,z) \mid x,y\rightarrow z \in \clauses(\phi) \} \,\cup \\
    & \{ N(x,y,z) \mid x,y,z\rightarrow \bot \in \clauses(\phi) \}\\
    \D=& \D'\cup\{ T(x), V(x) \mid x\in\pv(\phi) \}
    \end{array}
    \]
    It is crucial to observe that both $\D$ and $\D'$ contain a fact $U(x,y)$ if the $x$ is a propositional variable belonging to a \emph{unit clause} of $\phi$ and $y$ is \emph{any} variable of $\phi$.
    Then, let $\Sigma$ be a set consisting of the following full+guarded dependencies:
    \[
    \begin{array}{r@{\ }l}
    \forall x,y,z & (N(x,y,z) \land T(x) \land T(y) \land T(z) \rightarrow F) \\
    \forall x,y,z & (P(x,y,z) \land T(x) \land T(y) \rightarrow T(z)) \\
    \forall x,y & (V(y) \land U(x,y) \rightarrow T(x)) \\
    \forall x & (T(x)\rightarrow V(x))
    \end{array}
    \]
    
    We now show that $\phi$ is unsatisfiable iff $\D'$ is a repair of $\SD$.
    
    $(\Rightarrow)$ $\phi$ is unsatisfiable only if $\D$ violates at least one of the first three dependencies. Then, starting from $\D$, we can build a repair of $\SD$ as follows.
    \begin{enumerate}
        \setlength\itemsep{0em}
        \item For each violation of the first or second dependency, we choose to delete one $T$-fact in the image of the body of such dependency which causes the violation. Since this is not sufficient for achieving consistency (otherwise $\phi$ would be satisfiable), we enter a loop, and eventually the third dependency will be violated. Then, go to the next step.
        \item If the third dependency is violated, we choose to delete all the $V$-facts. Then, because of the fourth dependency, we also have to remove all the $T$-facts.
    \end{enumerate}
    The obtained repair coincides with $\D'$.
    
    $(\Leftarrow)$ On the other hand, if $\phi$ is satisfiable, then $\D'$ is not a repair of $\SD$ because the database $\D''=\D'\cup\{ T(x),V(x) \mid x\in\mm(\phi) \}$, where $\mm(\phi)$ is the minimal model of $\phi$, is consistent with $\Sigma$.
\end{proof}

Furthermore, the results provided in~\shortcite{CM05} imply that, in the general case, repair checking is coNP-hard. We establish the same lower bound for the case of guarded+sticky dependencies.

\begin{theorem}
\label{thm:rc-lb-guarded-sticky}
    \rclbStatement{guarded+sticky}{coNP}
\end{theorem}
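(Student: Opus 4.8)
The plan is to prove \coNP-hardness by a logspace reduction from a \coNP-complete problem --- say the complement of 3-colorability, or the complement of propositional satisfiability --- refining the \coNP-hardness argument of \cite{CM05} for general dependencies so that the resulting dependency set $\constr$ is guarded and sticky. As in \cite{CM05}, one encodes an instance into a database $\D$ together with a candidate database $\D'$ so that the consistent subsets of $\D$ that properly extend $\D'$ correspond exactly to the ``solutions'' of the instance; then $\D'$ is a repair of $\SD$ iff the instance is a NO-instance, and the \coNP-hardness stems from the fact that detecting non-maximality of $\D'$ --- i.e.\ searching for a larger consistent subset --- is \NP-hard. The reduction must genuinely use TGD-style rules, not just denials: with denials alone, consistency is subset-closed and maximality of $\D'$ reduces to the polynomial test ``$\D'\cup\{f\}$ is inconsistent for every $f\in\D\setminus\D'$'', so the hardness has to come from rules whose heads force a cascade of further tuples, and that cascade is what encodes a solution.

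The delicate part is fitting this inside the guarded-and-sticky fragment, and two observations drive the design. First, a dependency whose head is $\bot$ has a single head atom that contains no variable, so \emph{every} body variable of such a denial is marked; hence a sticky denial has no repeated body variable, and being also guarded it is essentially a single-predicate-atom denial $p(\vec{x})\rightarrow\bot$. Thus none of the ``forbidden-pattern'' checks of the reduction (an edge whose endpoints share a colour, an unsatisfied clause, a vertex with two colours) can be written as a joining denial; they must be recast by pushing the join into a single higher-arity extensional relation used as a guard, e.g.\ a pattern $\bigwedge_i\alpha_i(\vec{z}_i)\rightarrow\bot$ becomes $g(\vec{z})\rightarrow H$ where $g$ carries all the relevant arguments and $H$ is $\bot$ or a fixed, fully populated ``legality'' relation (for instance $\mathit{Diff}$, the set of pairs of distinct colours); such a rule is guarded by its $g$-atom, and its only marked variables appear just once in $g$, so it is sticky. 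One then checks every rule of $\constr$ --- these recast rules, the ``at least one choice'' rules $V(x)\rightarrow C_1(x)\vee\dots\vee C_k(x)$ (guarded by $V(x)$, whose body variable is unmarked since it occurs in every head atom and never acquires a mark), and any bridging rules --- to be guarded and sticky by direct inspection of the marked-variable fixpoint over the whole $\constr$, and finally verifies the correspondence ``$\D'$ is a repair of $\SD$ iff the instance is a NO-instance''.

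I expect the main obstacle to be precisely this fitting step, and in particular the bridging rules. Populating a higher-arity guard relation from the basic ``choice'' facts calls for a rule like $C(x,c_1)\wedge C(x,c_2)\rightarrow g(x,c_1,c_2)$, but the join variable $x$ there occurs twice in the body and is marked (it must line up with marked positions of $g$ in the rules that consume it), so stickiness fails. Hence such guard relations cannot be derived --- they must be supplied extensionally, which forces the choice structure, and the repair's choices, to live directly on the higher-arity relations; one then needs cross-tuple consistency of these redundant encodings, which is again a join and again resists guardedness and stickiness. Breaking this loop --- by choosing an encoding in which every needed join already sits inside one extensional atom and no joined relation is ever re-derived, while still making ``$\D'$ is maximal'' coincide with ``the instance has no solution'' --- is the real work, and is where the construction departs from \cite{CM05}. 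Finally, one observes that the reduction can be adapted to use only single-head dependencies without inequalities in the head, as claimed for all the lower bounds of this section, which does not affect the guarded/sticky analysis.
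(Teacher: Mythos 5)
You have correctly diagnosed the two obstacles --- a guarded+sticky denial with head $\bot$ degenerates to an essentially single-atom denial, and the hardness must therefore come from TGD heads that force a cascade --- but the proposal stops exactly where the proof has to start. No concrete dependency set, database and candidate repair are exhibited, and you yourself flag the decisive encoding step (``breaking this loop \ldots\ is the real work'') as unresolved; for a hardness theorem the gadget \emph{is} the proof, so this is a genuine gap. The two concrete devices you do sketch are also problematic: the disjunctive choice rules $V(x)\rightarrow C_1(x)\vee\dots\vee C_k(x)$ conflict with the section's claim that every lower bound survives the restriction to single-head dependencies, and, as your own marking analysis shows, any rule that \emph{derives} a higher-arity guard relation from a join breaks stickiness, which leaves the encoding of the forbidden patterns open.

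The paper's reduction (from CNF unsatisfiability, adapting Theorem~4.7 of \cite{CM05}) resolves the deadlock with a device you did not consider. The conflict check is kept as a genuine join, $R(x,0,y,z)\wedge R(x,1,y',z')\wedge R_{\mathit{aux}}(x,y,z,y',z')\rightarrow U(x,y,z,y',z')$: the quinary atom $R_{\mathit{aux}}$, populated extensionally with \emph{all} tuples over the active domain, supplies the guard, and the head atom $U(x,y,z,y',z')$ contains \emph{every} body variable while the predicate $U$ occurs in no rule body and in no database. Under the marking fixpoint no variable of this rule is ever marked, so the repeated occurrences of $x,y,z,y',z'$ do not violate stickiness even though the rule is a real join; since $U$ can never be satisfied in a subset of $\D$, the rule still acts as a denial. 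The cascade is a single cyclic linear TGD $R(x,y,z,w)\rightarrow\exists x',y',z'\,R(x',y',z',z)$ over clause indices arranged in a cycle, so that adding any one literal-fact to the candidate repair (the set of $R_{\mathit{aux}}$ facts) forces a witness for every clause, while the guarded join forbids using both polarities of a variable; hence the candidate is a repair iff $\phi$ is unsatisfiable. No disjunction and no derived guard relations are needed, so the bound indeed holds for single-head dependencies without inequalities.
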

\begin{proof}
    
    We prove the thesis by showing a reduction from CNF SAT. The reduction is obtained by modifying the reduction from 2-QBF shown in Theorem~4.7 of~\shortcite{CM05}.
    
    Let $\Sigma$ be a set consisting of the following guarded+sticky dependencies:
    \[ 
    \begin{array}{r@{\ }l}
        \forall x,y,z,w &
        (R(x,y,z,w)\rightarrow \exists x',y',z'\, R(x',y',z',z)) \\
        \forall x,y,z,y',z' &
        (R(x,0,y,z)\land R(x,1,y',z')\land\raux(x,y,z,y',z')\rightarrow U(x,y,z,y',z'))
    \end{array}
    \]
    
    Now let $\phi=\psi_1\land\ldots\land\psi_n$ be a CNF formula, where every $\psi_i$ is a clause. Let $\pv(\phi)$ be the set of propositional variables occurring in $\phi$.
    Moreover, let $\D'$ and $\D''$ be the following sets of facts:
    \[
    \begin{array}{r@{\ }l}
    \D' =& \{ R(a,1,i,(i \bmod n)+1) \mid a \textrm{ occurs positively in } \psi_i \} \,\cup \\
        &\{ R(a,0,i,(i \bmod n)+1) \mid a \textrm{ occurs negatively in } \psi_i \}\\
    \D''=& \{ \raux(v,i,j,(i \bmod n)+1,(j \bmod n)+1) \mid v\in\pv(\phi) \text{\ and\ } i,j \in \{1,...,n\}\}
    \end{array}
    \]
    and let $\D = \D' \cup \D''$.
    We prove that $\D''$ is a repair of $\SD$ iff $\phi$ is unsatisfiable.
    
    First, it is immediate to see that $\D''$ is consistent with $\Sigma$.
    Now, due to the first dependency of $\Sigma$, if there exists $\D'''$ such that $\D''\subset\D'''\subseteq\D$ and $\D'''$ is consistent with $\Sigma$, then for every conjunct $\psi_i$ there must exist at least one fact from $R$ of the form $R(\_,\_,i,\_)$ in $\D'''$. But, due to the second dependency of $\Sigma$, this is possible if and only if $\phi$ is satisfiable. Consequently, $\D''$ is a repair of $\SD$ iff $\phi$ is unsatisfiable.
\end{proof}

\subsubsection*{Upper bounds}

As shown in~\shortcite[Proposition~4]{AK09}, in the general case \iflong\ (actually, for \guillemotleft every finite set of first-order constraints\guillemotright),\ \fi repair checking is in coNP.
We show that the problem can be solved in PTIME if the set of dependencies is either linear or full, extending the same complexity result proved in~\shortcite{SC10,CFK12} for less expressive dependencies. 
Moreover, we show that it is in $\aczero$ for acyclic sets of dependencies, as well as for full+linear and full+sticky dependencies.




In order to exploit the previous results for recoverability, we start by noticing the following property (whose proof is straightforward):
\begin{proposition}
    Let $\Sigma$ be a set of dependencies, and let $\D$ and $\D'$ be two databases such that $\D'\subseteq\D$. 
    Then, $\D'$ is a repair of $\SD$ iff $\D'$ is consistent with $\Sigma$ and, for every $\alpha\in\D\setminus\D'$, $\D'\cup\{\alpha\}$ is not recoverable from $\SD$.
\end{proposition}
Consequently,
whenever deciding recoverability is in PTIME \wrt data complexity, also repair checking is in PTIME \wrt data complexity.

Thus, from Lemma~\ref{lem:recov-ub-full} and Lemma~\ref{lem:recov-ub-linear} we can prove the following complexity result.

\begin{theorem}
\label{thm:rc-ub-linear-and-full}
    \rcubStatement{either linear or full}{PTIME}
\end{theorem}


We now focus on the case of acyclic dependencies, for which we prove what follows.

\begin{proposition}\label{pro:acyclic-layers}
    Let $\Sigma$ be a set of acyclic dependencies, and let $\D$ and $\D'$ be two databases such that $\D'\subseteq\D$. 
    Then, $\D'$ is a repair of $\SD$ iff $\D'$ is consistent with $\Sigma$ and, for every $\alpha \in \D\setminus\D'$, $\D'\cup\{\alpha\}$ is inconsistent with $\Sigma$.
\end{proposition}
\iflong 
\begin{proof}
We prove that, if $\D'$ is not a repair of $\SD$ and $\D'\subseteq\D$ and $\D'$ is consistent with $\Sigma$, then there exists $\alpha\in\D\setminus\D'$ such that $\D'\cup\{\alpha\}$ is consistent with $\Sigma$ (the other direction of the proof is straightforward). 
So let us assume that $\D'$ is not a repair of $\SD$ and $\D'\subseteq\D$ and $\D'$ is consistent with $\Sigma$.

First, observe that
, since $\D'$ is not a repair of $\SD$, then
there must exist an $\alpha\in\D\setminus\D'$ such that $\D'\cup\{\alpha\}$ is recoverable from $\SD$.

Then, let us stratify the set of predicates according to the dependencies: layer $1$ contains predicates not occurring in the body of any dependency; layer $i$ contains predicates occurring only in the body of dependencies whose head contains only predicates of layers lower than $i$.
Moreover, we say that a dependency $\tau$ is of layer $i$ if no predicate of layer greater than $i$ occurs in the head of $\tau$.

Now, let $i$ be the minimum layer such that there exists $\alpha\in\D\setminus\D'$ such that $\D'\cup\{\alpha\}$ is recoverable from $\SD$ and the predicate of $\alpha$ belongs to layer $i$.
Let $\alpha$ be such a fact whose predicate is of layer $i$ and suppose that $\D'\cup\{\alpha\}$ is not consistent with $\Sigma$. Then, since $\D'$ is consistent with $\Sigma$, it follows that $\D'\cup\{\alpha\}$ violates a dependency of layer $i-1$.
Now observe that, for every layer $j$ such that $j<i$, and for every $\beta\in\D\setminus\D'$ such that $\D'\cup\{\beta\}$ is not consistent with $\Sigma$ and the predicate of $\beta$ belongs to layer $j$, $\D'\cup\{\beta\}$ is not recoverable from $\SD$. 
This implies that $\D'\cup\{\alpha\}$ is not recoverable from $\SD$, giving rise to a contradiction. Thus, $\D'\cup\{\alpha\}$ is consistent with $\Sigma$.
\end{proof}
\fi 

We now show how to decide repair checking by means of an FO sentence. 
As done in Section~\ref{sec:recoverability}, we use an auxiliary predicate $\auxp{p}$ for every $p\in\pred(\Sigma)$.

First, given any set $\Phi$ of domain-independent FO sentences, we define the formula $\consistentprimed(\Phi)$ as follows:
\[
    \consistentprimed(\Phi) = \bigwedge_{\phi\in\Phi} \aux(\phi)
\]
Clearly, the $\consistentprimed(\Phi)$ evaluates to true in a database $\aux(\D)$ iff each $\phi\in\Phi$ does. In particular, given a set of dependencies $\Sigma$, the evaluation of $\consistentprimed(\Sigma)$ over a database $\aux(\D)$ is true iff the latter is consistent with $\Sigma$.

Then, we define the auxiliary formula $\consistentprimed(\Sigma,\auxp{p}(\vseq{x}))$, where $\auxp{p}(\vseq{x})$ is an atom: it is obtained from $\consistentprimed(\Sigma)$ by replacing every atom $\auxp{p}(\vseq{t})$ 
with the subformula $(\auxp{p}(\vseq{t}) \vee \vseq{t}=\vseq{x})$.


\begin{definition}[Repair checking sentence 1]
\label{def:rc-sentence-acyclic}
    Let $\Sigma$ be a set of acyclic dependencies. We define the formula $\checkrepairacyclic(\Sigma)$ as follows:
    \[ \checkrepairacyclic(\Sigma) =
        \consistentprimed(\Sigma) \land \bigwedge_{p\in\pred(\constr)}\forall \vseq{x}\,\big(p(\vseq{x}) \land \neg \auxp{p}(\vseq{x}) \rightarrow \neg\consistentprimed(\Sigma,\auxp{p}(\vseq{x})\big)
    \]
    where $\vseq{x}=x_1,\ldots,x_m$ if $m$ is the arity of $p$.
\end{definition}

Based on this definition and Proposition~\ref{pro:acyclic-layers}, it is possible to prove the following property.

\begin{theorem}
\label{thm:rc-acyclic-fo-rewriting-correct}
    Let $\Sigma$ be a set of acyclic dependencies, and let $\D$ and $\D'$ be two databases. 
    Then, the sentence $\checkrepairacyclic(\Sigma)$ evaluates to true in $\D\cup\aux(\D')$ iff $\D'$ is a repair of $\SD$.
\end{theorem}
\begin{proof}
    First, it is immediate to verify that, for every instantiation $\sigma$ of $p(\vseq{x})$, the sentence $\consistentprimed(\Sigma,\sigma(\auxp{p}(\vseq{x})))$ evaluates to true in $\aux(\D')$ if and only if $\consistentprimed(\Sigma)$ evaluates to true in $\aux(\D') \cup \{\sigma(\auxp{p}(\vseq{x}))\}$.
    Then, from Proposition~\ref{pro:acyclic-layers} and from the fact that $\consistentprimed(\Sigma)$ evaluates to true in $\aux(\D')$ iff $\D'$ is consistent with $\Sigma$, it follows that $\checkrepairacyclic(\Sigma)$ evaluates to true in $\D\cup\aux(\D')$ iff $\D'$ is a repair of $\SD$.
\end{proof}
\begin{corollary}
\label{cor:rc-ub-acyclic}
    \rcubStatement{acyclic}{$\aczero$}
\end{corollary}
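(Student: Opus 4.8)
The plan is to prove Theorem~\ref{thm:rc-acyclic-fo-rewriting-correct} and then read off Corollary~\ref{cor:rc-ub-acyclic} as an immediate consequence. For the corollary itself, once the theorem is in hand, I would argue as follows: given a database $\D$ and a candidate repair $\D'$, we can construct in $\aczero$ (indeed by a trivial first-order interpretation) the combined database $\D\cup\aux(\D')$ over the extended signature; the sentence $\checkrepairacyclic(\Sigma)$ is a fixed first-order sentence depending only on $\Sigma$ (not on the data), since $\constr$ is fixed when we speak of data complexity; hence repair checking reduces to evaluating a fixed FO sentence over a database, which is exactly the statement that the problem is first-order rewritable, and FO model checking in data complexity is in $\aczero$. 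So the only real content is Theorem~\ref{thm:rc-acyclic-fo-rewriting-correct}.

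For the theorem, the key is Proposition~\ref{pro:acyclic-layers}, which says that for acyclic $\Sigma$, $\D'$ is a repair of $\SD$ iff $\D'$ is consistent with $\Sigma$ and, for every $\alpha\in\D\setminus\D'$, $\D'\cup\{\alpha\}$ is \emph{inconsistent} with $\Sigma$ (note the strengthening from ``not weakly consistent'' to ``inconsistent'' that acyclicity buys us). So I would split the proof of $\D\cup\aux(\D')\models\checkrepairacyclic(\Sigma)$ iff $\D'$ is a repair into the two conjuncts of the sentence. First I would observe the auxiliary fact about $\inconsistentprimed(\Sigma)$: for any database $\E$, $\aux(\E)\models\inconsistentprimed(\Sigma)$ iff $\E$ is inconsistent with $\Sigma$ --- this is just unfolding the definition of satisfaction of a dependency (using Proposition~\ref{pro:consistency} if the long version is available, or a direct argument: a disjunct $\exists\vseq{x}(\aux(\body(\tau))\wedge\neg\aux(\head(\tau)))$ is satisfied in $\aux(\E)$ exactly when there is an instantiation of $\body(\tau)$ in $\E$ with no image of $\head(\tau)$ in $\E$, i.e.\ $\tau$ is violated). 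Since in $\D\cup\aux(\D')$ the auxiliary atoms are exactly $\aux(\D')$, the first conjunct $\neg\inconsistentprimed(\Sigma)$ holds iff $\D'$ is consistent with $\Sigma$.

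For the second conjunct I would analyze $\inconsistent(\Sigma,p(\vseq{x}))$. The claim to establish is: for a fact $p(\vseq{t})\notin\D'$, $\D\cup\aux(\D')\models\inconsistent(\Sigma,p(\vseq{t}))$ iff $\D'\cup\{p(\vseq{t})\}$ is inconsistent with $\Sigma$. This follows because $\inconsistent(\Sigma,p(\vseq{x}))$ is obtained from $\inconsistentprimed(\Sigma)$ by forcing every auxiliary atom $\auxp{p}(\vseq{s})$ to additionally count as satisfied when $\vseq{s}=\vseq{x}$; evaluated over $\D\cup\aux(\D')$ with $\vseq{x}$ bound to $\vseq{t}$, this makes the ``auxiliary part'' behave as the database $\D'\cup\{p(\vseq{t})\}$, so by the auxiliary fact the formula is true iff that extended database violates some $\tau$. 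Then the second conjunct $\bigwedge_p\forall\vseq{x}(p(\vseq{x})\wedge\neg\auxp{p}(\vseq{x})\to\inconsistent(\Sigma,p(\vseq{x})))$, evaluated over $\D\cup\aux(\D')$, says precisely: for every fact $p(\vseq{t})$ with $p(\vseq{t})\in\D$ and $p(\vseq{t})\notin\D'$, the database $\D'\cup\{p(\vseq{t})\}$ is inconsistent with $\Sigma$ --- which is the second condition of Proposition~\ref{pro:acyclic-layers}. Combining the two conjuncts with Proposition~\ref{pro:acyclic-layers} gives the equivalence with ``$\D'$ is a repair of $\SD$''.

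The main obstacle, and the step needing the most care, is the handling of the substitution of equalities inside $\inconsistent(\Sigma,p(\vseq{x}))$ --- in particular making sure that replacing $\auxp{p}(\vseq{s})$ by the conjunction $\vseq{s}=\vseq{x}$ (for the \emph{specific} predicate $p$ being tested) correctly simulates ``the database also contains the extra fact $p(\vseq{t})$'', across possibly several occurrences of $p$-atoms in different dependency bodies and across the negated heads, and that it does not accidentally interact with the genuine $\auxp{p}$-atoms still present for occurrences that were not replaced. I would need to be precise about which occurrences are replaced (all $\auxp{p}$-atoms, both in bodies and heads of all $\tau$) and argue that the resulting formula, under the assignment $\vseq{x}\mapsto\vseq{t}$, has exactly the models corresponding to $\D'\cup\{p(\vseq{t})\}$. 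The acyclicity is used only through Proposition~\ref{pro:acyclic-layers}; the rewriting construction itself does not re-examine the layering, so no further structural induction is needed here beyond what that proposition already provides.
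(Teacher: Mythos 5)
Your proposal follows the paper's own route exactly: Proposition~\ref{pro:acyclic-layers} reduces repair checking to consistency of $\D'$ together with inconsistency of each $\D'\cup\{\alpha\}$ for $\alpha\in\D\setminus\D'$, the sentence $\checkrepairacyclic(\Sigma)$ encodes precisely these two conditions (the paper's proof of Theorem~\ref{thm:rc-acyclic-fo-rewriting-correct} rests on the same key claim you isolate, namely that $\aux(\D')\models\inconsistent(\Sigma,\sigma(p(\vseq{x})))$ iff $\aux(\D')\cup\{\sigma(\auxp{p}(\vseq{x}))\}\models\inconsistentprimed(\Sigma)$), and $\aczero$ membership then follows from evaluating a fixed FO sentence over $\D\cup\aux(\D')$. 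Your reading of the equality substitution as making the $\auxp{p}$-atoms \emph{additionally} satisfied on the tuple $\vseq{x}$ is the right one for simulating $\D'\cup\{p(\vseq{t})\}$, and the care you flag there is exactly the step the paper dismisses as ``immediate to verify.''
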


%
        
        
%
\begin{example}
\label{ex:rc-acyclic}
    \newcommand{\mamm}{\mathsf{m}}
    \newcommand{\eleph}{\mathsf{e}}
    \newcommand{\ext}{\mathsf{ex}}
    \newcommand{\iucn}{\mathsf{iucn}}
    IUCN is an organization studying the conservation status of species worldwide. If a taxon is a species (predicate $S$), and it has been described (predicate $D$) by IUCN, then we know its conservation status (predicate $C$). Moreover, our dataset contains the fact that both species \textit{Mammuthus primigenius} and \textit{Elephas maximus} have been described by IUCN, and that the first species is extinct.
    This situation can be modeled through the following database and acyclic set of dependencies:
    \[
    \begin{array}{r@{\ }l}
         \D=\{ & S(\eleph), D(\eleph,\iucn), S(\mamm), D(\mamm,\iucn), C(\mamm,\ext)\ \} ,\\
         \constr=\{ & \forall x\, (S(x) \land D(x,\iucn) \ra \exists y\, C(x,y))\ \} .
    \end{array}
    \]
    %
    Then, the sentence $\checkrepairacyclic(\Sigma)$ is equivalent to:
    \[
    \begin{array}{r@{\,}l}
        \neg\exists x & \big(\auxp{S}(x) \land \auxp{D}(x,\iucn) \land \neg\exists y\,\auxp{C}(x,y)\big) \land\\
        \forall x & \big(S(x) \land \neg \auxp{S}(x) \ra
        \exists x'\, ((\auxp{S}(x') \lor x'=x) \land \auxp{D}(x',\iucn) \land \neg\exists y'\,\auxp{C}(x',y'))\big) \land\\
        \forall x,y & \big(D(x,y) \land \neg \auxp{D}(x,y) \ra
        \exists x'\, (\auxp{S}(x') \land (\auxp{D}(x',y') \lor (x'=x \land \iucn=y)) \land \neg\exists y'\,\auxp{C}(x',y'))\big) \land\\
        \forall x,y & \big(C(x,y) \land \neg \auxp{C}(x,y) \ra
        \exists x'\, (\auxp{S}(x') \land \auxp{D}(x',\iucn) \land \neg\exists y'\,(\auxp{C}(x',y') \lor (x'=x \land y'=y)))\big) .
    \end{array}
    \]
    One can verify that $\reps(\D) = \{ \D\setminus\{S(\eleph)\}, \D\setminus\{D(\eleph,\iucn)\} \}$ and that these are the only two possible subsets of $\D$ such that $\checkrepairacyclic(\constr)$ evaluates to true in $\D\cup\aux(\D')$.
    \qedexample
\end{example}








Finally, we address the cases of full+sticky and full+linear dependencies. To this aim, we exploit the formulas for checking recoverability in case $\Sigma$ is a set of CQ-FO-rewritable dependencies.
Indeed, we define the FO sentence $\repaircheckingformula(\Sigma)$ as follows:

\begin{definition}[Repair checking sentence 2]\label{def:rc-sentence}
Given a CQ-FO-rewritable set of dependencies $\Sigma$, we define $\repaircheckingformula(\Sigma)$ as the FO sentence:
\[
    \repaircheckingformula(\Sigma) =
    \cqforecovformula(\Sigma) \land
    \bigwedge_{p\in\pred(\constr)}
    \neg \exists \vseq{y}\, \big(p(\vseq{y}) \land \neg \auxp{p}(\vseq{y}) \land \cqforecovformula(\{\auxp{p}(\vseq{y})\},\Sigma)\big)
\]
\end{definition}

The intuition behind the formula is straightforward: we check if the database is recoverable (which is a necessary condition for it to be a repair) and no further atom not already present in it can be added while preserving its recoverability.

It is now possible to show that $\repaircheckingformula(\Sigma)$ constitutes a sound and complete FO-rewriting for the repair checking problem, i.e.\ such a formula is evaluated to true over the database $\D\cup\aux(\D')$ iff $\D'$ is a repair of $\SD$.
Therefore, the following property holds.

\begin{theorem}
\label{thm:rc-full-cq-fo-rewritable}
    Let $\Sigma$ be a CQ-FO-rewritable set of full dependencies. For every pair of databases $\D,\D'$, deciding whether $\D'$ is a repair of $\SD$ is in $\aczero$ \wrt data complexity.
\end{theorem}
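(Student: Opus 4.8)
The plan is to show that the sentence $\repaircheckingformula(\Sigma)$ is a correct first-order rewriting of repair checking, namely that for every pair of databases $\D,\D'$ we have $\D\cup\aux(\D')\models\repaircheckingformula(\Sigma)$ iff $\D'$ is a repair of $\SD$. Once this is established the theorem follows immediately: $\repaircheckingformula(\Sigma)$ is a fixed first-order sentence (it depends only on $\Sigma$, not on $\D$ or $\D'$), the structure $\D\cup\aux(\D')$ is obtained from the input by a trivial $\aczero$ relabeling, and evaluating a fixed first-order sentence over a finite structure is in $\aczero$ in data complexity; composing these steps gives an $\aczero$ procedure for deciding whether $\D'$ is a repair of $\SD$.

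The starting point is the general weak-consistency characterization of repairs: $\D'$ is a repair of $\SD$ iff $\D'$ is consistent with $\Sigma$ and there is no $\alpha\in\D\setminus\D'$ such that $\D'\cup\{\alpha\}$ is weakly consistent with $\SD$. First I would restate this purely in terms of weak consistency. If the second condition holds and $\D'$ is weakly consistent, then $\D'$ must already be consistent: any consistent $\D''$ with $\D'\subsetneq\D''\subseteq\D$ would furnish an $\alpha\in\D''\setminus\D'\subseteq\D\setminus\D'$ with $\D'\cup\{\alpha\}\subseteq\D''$ weakly consistent, contradicting the second condition. Hence $\D'$ is a repair of $\SD$ iff (i) $\D'$ is weakly consistent with $\SD$ and (ii) no $\alpha\in\D\setminus\D'$ is such that $\D'\cup\{\alpha\}$ is weakly consistent with $\SD$.

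Next I would verify that the two conjuncts of $\repaircheckingformula(\Sigma)$ express exactly (i) and (ii) over $\D\cup\aux(\D')$. For (i): since $\Sigma$ is full and CQ-FO-rewritable, for every $p\in\pred(\constr)$ the formula $q_p^\Sigma$, being an FO-rewriting of the atomic query $p(\vseq{x})$ over the auxiliary predicates, returns over $\aux(\D')$ exactly the tuples $\vseq{t}$ with $\langle\Sigma,\D'\rangle\models p(\vseq{t})$; the collection of all facts entailed by $\langle\Sigma,\D'\rangle$ is then contained in $\D$ iff $\D'$ is weakly consistent with $\SD$ (it is then itself a consistent extension of $\D'$ inside $\D$, and conversely any consistent $\D''$ with $\D'\subseteq\D''\subseteq\D$ contains every entailed fact), and, because $\bot\notin\D$, this containment already forces $\D'\subseteq\D$ — which is precisely what $\weaklyconsformula(\Sigma)=\bigwedge_{p\in\pred(\constr)}\forall\vseq{x}\,(q_p^\Sigma(\vseq{x})\ra p(\vseq{x}))$ asserts. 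For (ii): the subformula $p(\vseq{y})\wedge\neg\auxp{p}(\vseq{y})$ ranges exactly over the facts $\alpha=p(\vseq{t})\in\D\setminus\D'$, and I would establish a substitution lemma showing that $\weaklyconsformula(\{\auxp{p}(\vseq{y})\},\Sigma)$, with $\vseq{y}$ instantiated to $\vseq{t}$, holds over $\D\cup\aux(\D')$ iff $\weaklyconsformula(\Sigma)$ holds over $\D\cup\aux(\D'\cup\{p(\vseq{t})\})$, i.e.\ iff $\D'\cup\{\alpha\}$ is weakly consistent with $\SD$; the point is that replacing each auxiliary atom $\auxp{r}(\vseq{s})$ by $\auxp{r}(\vseq{s})\vee\bigvee_{r(\vseq{s'})\in\A}\sigma(\vseq{s},\vseq{s'})$ simulates, at the level of the formula, the presence of the extra tuple in the auxiliary database, with the $\false$ disjunct covering non-unifiable pairs. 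Conjoining the two parts yields $\D\cup\aux(\D')\models\repaircheckingformula(\Sigma)$ iff (i) and (ii) hold iff $\D'$ is a repair of $\SD$.

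The main obstacle is the bookkeeping in this last step: making the substitution lemma for $\weaklyconsformula(\A,\Sigma)$ fully precise — in particular the cases where the added atom $p(\vseq{y})$ has repeated variables, where some atom of the rewriting fails to unify with it, and where the most general unifier identifies free variables — and checking that, once $\vseq{y}$ is bound to a concrete tuple, these formula-level manipulations coincide with literally evaluating the weak-consistency rewriting over the enlarged auxiliary database. The secondary ingredient that needs care is the correctness of $\weaklyconsformula(\Sigma)$ as a weak-consistency rewriting for full, CQ-FO-rewritable dependencies, i.e.\ the equivalence between ``the set of facts entailed by $\langle\Sigma,\D'\rangle$ is contained in $\D$'' and ``$\D'$ is weakly consistent with $\SD$''; this relies on fullness (so that the entailed facts form a finite database and consistency of a subset of $\D$ amounts to closedness under the rules) and can be imported from the analysis of weak consistency.
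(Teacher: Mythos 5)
Your proposal is correct and follows essentially the same route as the paper: the paper likewise splits $\repaircheckingformula(\Sigma)$ into the two conjuncts capturing (i) weak consistency of $\D'$ and (ii) non-weak-consistency of every $\D'\cup\{\alpha\}$ with $\alpha\in\D\setminus\D'$, characterizing both via the chase of full dependencies (which is exactly your ``set of facts entailed by $\tup{\Sigma,\D'}$'') and via a lemma stating that $\sigma(\weaklyconsformula(\A,\Sigma))$ holds over $\D\cup\aux(\D')$ iff $\chase(\D'\cup\sigma(\A),\Sigma)\subseteq\D$, which is your substitution lemma. The only cosmetic difference is that the paper states the repair characterization directly in chase terms rather than deriving it from the weak-consistency characterization as you do.
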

\begin{proof}
    First, we state the following property, whose proof immediately follows from the definition of chase and the definition of repair.
    \begin{lemma}
    \label{lem:chase-full-linear-rc}
        Let $\Sigma$ be a set of full dependencies, and let $\D$ and $\D'$ be two databases. Then, $\D'$ is a repair of $\SD$ iff $\chase(\D',\Sigma)\subseteq\D$ and, for every fact $\alpha\in\D\setminus\D'$, $\chase(\D'\cup\{\alpha\},\Sigma)\not\subseteq\D$.
    \end{lemma}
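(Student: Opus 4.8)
The plan is to reduce the statement to the characterisation of weak consistency for full dependencies that underlies Lemma~\ref{lem:wc-ub-ffk}, namely that for $\D'\subseteq\D$ the set $\D'$ is weakly consistent with $\SD$ iff $\chase(\D',\Sigma)\subseteq\D$. First I would dispose of the degenerate case: if $\D'\not\subseteq\D$ then, since $\D'\subseteq\chase(\D',\Sigma)$, we have $\chase(\D',\Sigma)\not\subseteq\D$, so the right-hand side of the claimed equivalence is false; and a repair is by definition a subset of $\D$, so the left-hand side is false as well. Hence I may assume $\D'\subseteq\D$ throughout.

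Next I would record two elementary, dependency-agnostic facts. $(i)$ Weak consistency is hereditary: if $\D_1\subseteq\D_2\subseteq\D$ and $\D_2$ is weakly consistent with $\SD$, then so is $\D_1$, since a consistent $\D''$ with $\D_2\subseteq\D''\subseteq\D$ also witnesses weak consistency of $\D_1$. $(ii)$ Every consistent subset of $\D$ is weakly consistent with $\SD$ (it witnesses itself). From these I would derive that the repairs of $\SD$ are exactly the maximal (w.r.t.\ set inclusion) subsets of $\D$ that are weakly consistent with $\SD$: a repair is consistent, hence weakly consistent by $(ii)$, and it cannot be enlarged within the weakly consistent subsets, for the consistent witness of any strictly larger weakly consistent set would be a consistent subset of $\D$ strictly containing the repair; conversely, a maximal weakly consistent subset coincides with its own consistent witness (which contains it and is weakly consistent by $(ii)$), hence is itself consistent, and it is maximal among consistent subsets again by $(ii)$. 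This is precisely the observation already used, for arbitrary dependencies, in the proof of Theorem~\ref{thm:rc-ub-linear-and-ffk}.

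Then, using heredity $(i)$, I would turn ``maximal weakly consistent'' into the local test appearing in the statement: $\D'$ is a maximal weakly consistent subset of $\D$ iff $\D'$ is weakly consistent with $\SD$ and no single fact of $\D\setminus\D'$ can be added while keeping weak consistency, i.e.\ for every $\alpha\in\D\setminus\D'$ the set $\D'\cup\{\alpha\}$ is not weakly consistent with $\SD$. Indeed, if $\D'$ is weakly consistent but not maximal, pick a weakly consistent $\D'''$ with $\D'\subsetneq\D'''\subseteq\D$ and any $\alpha\in\D'''\setminus\D'$; then $\D'\cup\{\alpha\}\subseteq\D'''$ is weakly consistent by $(i)$. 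The converse direction is immediate. Finally, substituting the equivalence ``weakly consistent with $\SD$ iff $\chase(\cdot,\Sigma)\subseteq\D$'' for both $\D'$ and each $\D'\cup\{\alpha\}$ yields exactly the stated characterisation.

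I do not expect a genuine obstacle here; the two points that require a little care are the bookkeeping for the special predicate $\bot$ (if the chase ever derives $\bot$ then automatically $\chase(\cdot,\Sigma)\not\subseteq\D$, because no database contains $\bot$), and the observation that replacing full maximality by the ``no single addable fact'' condition is sound precisely because weak consistency, restricted to subsets of $\D$, is hereditary — which is the single place where fact $(i)$ is essential.
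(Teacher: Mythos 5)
Your proof is correct, and it is essentially the argument the paper intends: the paper declares the lemma to follow "immediately from the definition of chase and the definition of repair," and your reconstruction assembles exactly the two ingredients the paper uses nearby — the characterisation of weak consistency for full dependencies as $\chase(\D',\Sigma)\subseteq\D$ (proof of Lemma~\ref{lem:wc-ub-ffk}) and the observation that $\D'$ is a repair iff it is consistent and no single fact of $\D\setminus\D'$ can be added while preserving weak consistency (proof of Theorem~\ref{thm:rc-ub-linear-and-ffk}). Your extra care in noting that, under the "no addable fact" condition, weak consistency of $\D'$ already forces consistency (so the first conjunct can be stated as $\chase(\D',\Sigma)\subseteq\D$ rather than as consistency of $\D'$) is exactly the small bridging step the paper elides.
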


    Now, given a CQ-FO-rewritable set of full dependencies $\Sigma$, we can encode the conditions of the above lemma in terms of an FO sentence.
    Indeed, Lemma~\ref{lem:chase-wcformula} and Lemma~\ref{lem:chase-full-linear-rc} imply that, for every set of dependencies $\Sigma$ that is full and CQ-FO-rewritable, $\repaircheckingformula(\Sigma)$ evaluates to true in $\D\cup\aux(\D')$ iff $\D'$ is a repair of $\SD$.
    Thus, repair checking is FO-rewritable for every $\Sigma$ that is full and CQ-FO-rewritable and, consequently, the problem is in $\aczero$.
\end{proof}

As an immediate consequence of Theorem~\ref{thm:rc-full-cq-fo-rewritable} and Proposition~\ref{pro:full-fo-rewritable}, we obtain what follows.

\begin{corollary}
\label{cor:rc-ub-full-linear-and-full-sticky}
    \rcubStatement{full+linear or full+sticky}{$\aczero$}
\end{corollary}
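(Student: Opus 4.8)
The plan is simply to chain the two immediately preceding results. We want to show that for any set $\Sigma$ of dependencies that is either full+linear or full+sticky, and any pair of databases $\D,\D'$, deciding whether $\D'$ is a repair of $\SD$ is in $\aczero$ with respect to data complexity.

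First I would appeal to Proposition~\ref{pro:full-fo-rewritable}, which states precisely that every set of full+linear dependencies and every set of full+sticky dependencies is CQ-FO-rewritable. Hence, in both cases of the statement, $\Sigma$ is a set of \emph{full} dependencies that is, in addition, CQ-FO-rewritable; no case analysis beyond this two-way split is needed, since the proposition already lists these classes explicitly.

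Then I would apply Theorem~\ref{thm:rc-full-cq-fo-rewritable}, which asserts that repair checking is in $\aczero$ in data complexity for \emph{any} CQ-FO-rewritable set of full dependencies. Instantiating the theorem with the $\Sigma$ at hand yields the corollary directly. Concretely, the witness is the fixed FO sentence $\repaircheckingformula(\Sigma)$ of Definition~\ref{def:rc-sentence}: by Theorem~\ref{thm:rc-full-cq-fo-rewritable} it evaluates to true over $\D\cup\aux(\D')$ exactly when $\D'$ is a repair of $\SD$, and evaluating a fixed first-order sentence over a finite structure is in $\aczero$ in data complexity.

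There is no real obstacle here: the corollary is a pure specialization. All the substance sits in the two cited results — the soundness and completeness of the rewriting $\repaircheckingformula(\Sigma)$ (which rests on the weak-consistency characterization of repairs and on the unification-based construction $\weaklyconsformula(\A,\Sigma)$), and the CQ-FO-rewritability of the named subclasses established in Proposition~\ref{pro:full-fo-rewritable}. The only thing worth double-checking is that ``full+linear'' and ``full+sticky'' are literally among the classes covered by that proposition, which they are.
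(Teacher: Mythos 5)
Your proposal is correct and matches the paper's own argument exactly: the corollary is stated there as "an immediate consequence of Theorem~\ref{thm:rc-full-cq-fo-rewritable} and Proposition~\ref{pro:full-fo-rewritable}", which is precisely the two-step chaining you describe. Nothing is missing.
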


\section{\texorpdfstring{\bucqineq}{BUCQ} entailment under \irs semantics}
\label{sec:intrep}

\newcommand{\allatoms}{\atoms}
\newcommand{\allatomsformula}{\Phi^{ni}}
\newcommand{\iarformula}{\Phi^{\mathit{iar}}}

In this section we focus on the problem of \irs-entailment of safe \bucqsineq.
In the rest of the paper, every time we write \bucqsineq we actually refer to safe \bucqsineq.




\subsubsection*{Lower bounds}

First, we are able to show that instance checking is already $\pidue$-hard for the class of guarded+sticky dependencies.

\begin{theorem}\label{thm:ic-lb-guarded-sticky}
    \iclbStatement{guarded+sticky}{$\pidue$}
\end{theorem}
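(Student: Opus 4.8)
The plan is to reduce from validity of $\Pi_2$-QBF, i.e.\ from deciding whether $\forall \vseq{x}\,\exists \vseq{y}\,\phi(\vseq{x},\vseq{y})$ holds for a CNF formula $\phi$ with clauses $C_1,\dots,C_k$ over universal variables $\vseq{x}$ and existential variables $\vseq{y}$ — a $\pidue$-complete problem. By Proposition~\ref{pro:instance-checking-correspondence} it suffices to make IAR-entailment of a single fact hard, so the query will be a $0$-ary goal atom $G$. From $\phi$ I would build a database $\D$ containing $G$; truth-value facts $\mathit{Tx}(i),\mathit{Fx}(i)$ for each $x_i$ and $\mathit{Ty}(j),\mathit{Fy}(j)$ for each $y_j$; a fact $\mathit{Cl}(l)$ for each clause; and, for each clause $l$ and each of its literals, one of $\mathit{Px}(l,i),\mathit{Nx}(l,i),\mathit{Py}(l,j),\mathit{Ny}(l,j)$ recording which variable occurs in $l$ and with which polarity. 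The fixed set $\Sigma$ would consist of: the always-active ``at most one $\vseq{x}$-value'' rule $\mathit{Tx}(i)\wedge \mathit{Fx}(i)\ra \mathit{J}(i)$; two ``defer the $\vseq{y}$-choice'' rules $\mathit{Ty}(j)\ra G$ and $\mathit{Fy}(j)\ra G$; the ``$G$-guarded at most one $\vseq{y}$-value'' rule $G\wedge \mathit{Ty}(j)\wedge \mathit{Fy}(j)\ra \mathit{J}'(j)$; and the disjunctive rule
\[
G\wedge \mathit{Cl}(l)\ra \exists i\,(\mathit{Px}(l,i)\wedge \mathit{Tx}(i))\vee \exists i\,(\mathit{Nx}(l,i)\wedge \mathit{Fx}(i))\vee \exists j\,(\mathit{Py}(l,j)\wedge \mathit{Ty}(j))\vee \exists j\,(\mathit{Ny}(l,j)\wedge \mathit{Fy}(j)).
\]
Here $\mathit{J},\mathit{J}'$ are fresh predicates that never occur in $\D$, so the first three rules behave like denials, and $\D$, $\Sigma$, $G$ are computable from $\phi$ in logarithmic space.

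I would then check that $\Sigma$ is guarded and sticky. Guardedness is immediate, since in every body the leftmost atom covers all body variables. For stickiness, the key point is that the only marked body variables occur just once in their body: in the two ``defer'' rules $j$ is marked (it is absent from the $0$-ary head $G$) but occurs once; in the disjunctive rule $l$ is marked but occurs once; and in $\mathit{Tx}(i)\wedge \mathit{Fx}(i)\ra \mathit{J}(i)$ and $G\wedge \mathit{Ty}(j)\wedge \mathit{Fy}(j)\ra \mathit{J}'(j)$ the repeated variable is \emph{not} marked, precisely because it reappears in the fresh-predicate head (this is why we write $\mathit{J}(i)$ rather than $\bot$), and no marking propagates to it since $\mathit{J},\mathit{J}'$ occur in no body.

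For correctness I would analyse the repairs of $\SD$. If $R$ is a repair with $G\notin R$, then by the two ``defer'' rules $R$ contains no $\mathit{Ty}/\mathit{Fy}$ fact, so the last two rules are vacuous in $R$ and its only constraint is ``at most one $\vseq{x}$-value''; maximality then forces $R=R_a$, the union of a complete $\vseq{x}$-assignment $a$, all $\mathit{Cl}$ facts and all $\mathit{Px},\mathit{Nx},\mathit{Py},\mathit{Ny}$ facts. Such an $R_a$ is in fact a repair iff $R_a\cup\{G\}$ admits no consistent superset inside $\D$; any such superset may add at most one of $\mathit{Ty}(j),\mathit{Fy}(j)$ per $j$ — that is, a $\vseq{y}$-assignment $b$ — and is consistent iff the disjunctive rule holds for every clause, i.e.\ iff $(a,b)$ satisfies $\phi$. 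Hence $R_a\cup\{G\}$ is weakly consistent iff $\phi|_a$ (the formula with $\vseq{x}$ set to $a$) is satisfiable, so the repairs lacking $G$ are exactly those $R_a$ with $\phi|_a$ unsatisfiable, while every other repair contains $G$. Consequently $\SD$ IAR-entails $G$ iff no such $a$ exists, i.e.\ iff $\forall \vseq{x}\,\exists \vseq{y}\,\phi$ is valid, establishing $\pidue$-hardness.

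The main obstacle is not the complexity packaging but making the reduction capture the genuine $\forall\exists$ alternation rather than $\forall\forall$: in a naive encoding a repair could ``cheat'' by committing to an unfavourable $\vseq{y}$-assignment and drop $G$ even when $\phi|_a$ is satisfiable. The devices that prevent this are making the $\vseq{y}$-truth-value facts un-keepable without $G$ ($\mathit{Ty}(j)\ra G$, $\mathit{Fy}(j)\ra G$) and making $\vseq{y}$-uniqueness active only together with $G$, which jointly defer the choice of $b$ to the weak-consistency check, where it is existentially quantified. A secondary difficulty is meeting the guarded \emph{and} sticky constraints simultaneously, which is what dictates the precise shape of the denial-like rules.
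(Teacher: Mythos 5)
Your reduction is correct. The repairs lacking $G$ are exactly the sets $R_a$ for universal assignments $a$ with $\phi|_a$ unsatisfiable, the deferral of the existential choice to the weak-consistency test works as you describe, and the guardedness and stickiness checks go through under the paper's marking definition (your use of the fresh heads $\mathit{J}(i)$, $\mathit{J}'(j)$ in place of $\bot$ is precisely what keeps the repeated body variables unmarked). The paper proves the theorem by the same high-level strategy --- a reduction from 2-QBF in which the repair fixes the universal assignment and the existential assignment is pushed into the weak-consistency check for the query fact --- but with different gadgets, obtained by modifying Theorem~4.7 of \cite{CM05}: a single $4$-ary predicate $R$, the fact $R(a,a,1,a)$ as the query, a cyclic ``next clause'' rule $\forall x,y,z,w\,(R(x,y,z,w)\ra\exists x',y',z'\,R(x',y',z',z))$ that forces an all-or-nothing deletion of the clause-witness facts, and an auxiliary guard $\raux$ for the ``one truth value per variable'' denial. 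One substantive difference: both of the paper's dependencies are single-head and non-disjunctive, so its reduction also establishes $\pidue$-hardness for the non-disjunctive fragment of guarded+sticky dependencies, whereas your clause-satisfaction rule is genuinely disjunctive; your construction proves the theorem as stated but not that stronger refinement. A cosmetic slip: in your rules with body $G\wedge\mathit{Ty}(j)\wedge\mathit{Fy}(j)$ and $G\wedge\mathit{Cl}(l)$ the leftmost atom is the $0$-ary $G$, so the guard is $\mathit{Ty}(j)$ (resp.\ $\mathit{Cl}(l)$) rather than the leftmost atom; guardedness of course still holds.
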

\begin{proof}
    %
    We prove the thesis by showing a reduction from 2-QBF that is obtained by slightly modifying the reduction from 2-QBF shown in Theorem~4.7 of~\shortcite{CM05} (as well as the reduction shown in the proof of Theorem~\ref{thm:rc-lb-guarded-sticky}). 
    
    Let $\Sigma$ be a set consisting of the following guarded+sticky dependencies:
    \[ 
    \begin{array}{r@{\ }l}
    \forall x,y,z,w & (R(x,y,z,w) \ra \exists x',y',z'\, R(x',y',z',z)) \\
    \forall x,y,z,y',z' & (R(x,0,y,z) \wedge R(x,1,y',z') \wedge \raux(x,y,z,y',z') \ra U(x,y,z,y',z'))
    \end{array}
    \]
    
    Now let $\phi=\forall \vseq{x}\,\exists \vseq{y}\,(\psi_1\wedge\ldots\wedge\psi_n)$ be a 2-QBF, where every $\psi_i$ is a clause over the propositional variables $\vseq{x},\vseq{y}$. Let $\pv(\phi)=\vseq{x}\cup\vseq{y}$.
    Moreover, let $\D_1$, $\D_2$, $\D_3$ and $\D$ be the following sets of facts:
    \[
    \begin{array}{l@{\ }l@{\ }l}
    \D_1&=&\{ R(z,1,i,(i \bmod n)+1) \mid z \textrm{ occurs positively in } \psi_i \} \,\cup \\
       &&\{ R(z,0,i,(i \bmod n)+1) \mid z \textrm{ occurs negatively in } \psi_i \}\\
    \D_2&=&\{ \raux(v,i,j,(i \bmod n)+1,(j \bmod n)+1) \mid v\in\pv(\phi) \text{\ and\ } i,j \in \{1,...,n\}\}\\
    \D_3&=&\{ R(x,1,0,0), R(x,0,0,0) \mid x\in\vseq{x} \}\\
    \D&=&\D_1\cup\D_2\cup\D_3\cup\{R(a,a,1,a)\}
    \end{array}
    \]
    
    We prove that $\SD \modelsar R(a,a,1,a)$ iff $\phi$ is valid.
    
    First, for every interpretation $I_\vseq{x}$ of $\vseq{x}$, let $\D(I_\vseq{x})=\{R(x,1,0,0) \mid I_\vseq{x}(x)=1\} \cup \{R(x,0,0,0) \mid I_\vseq{x}(x)=0\}$ (notice that $\D(I_\vseq{x})\subset\D_3$).
    
    Now let us assume that $\phi$ is not valid, and let $I_\vseq{x}$ be an interpretation of $\vseq{x}$ such that there exists no interpretation $I_\vseq{y}$ of $\vseq{y}$ such that $I_\vseq{x}\cup I_\vseq{y}$ satisfies $\psi_1\wedge\ldots\wedge\psi_n$.
    We prove that $\D_2\cup\D(I_\vseq{x})$ is a repair of $\SD$. Indeed, if a fact from $\D_1\cup\{R(a,a,1,a)\}$ is added to such a set, then consistency would be possible only by adding, for every conjunct $\psi_i$, at least one fact of the form $R(\_,\_,i,\_)$ in $\D_1$. But, due to due to the second dependency of $\Sigma$, this is possible if and only if $\phi$ is valid. Consequently, $\D_2\cup\D(I_\vseq{x})$ is a repair of $\SD$, which implies that $\SD$ does not \ars-entail $R(a,a,1,a)$.
    
    Conversely, assume that $\phi$ is valid. Let $I_\vseq{x}$ be any interpretation of $\vseq{x}$. Then, there exists an interpretation $I_\vseq{y}$ of $\vseq{y}$ such that $I_\vseq{x}\cup I_\vseq{y}$ satisfies $\psi_1\wedge\ldots\wedge\psi_n$. Now let 
    $\D(I_\vseq{y})=\{ R(y,z,w,v) \mid y\in\vseq{y} \wedge z=I_\vseq{y}(y) \wedge R(y,z,w,v)\in\D_1 \}$.
    It is immediate to verify that $\D_2\cup\D(I_\vseq{y})$ is consistent with $\Sigma$.
    Now, suppose $\D'$ is a repair of $\SD$ and suppose $R(a,a,1,a)\not\in\D'$. This implies that all the facts of $\D_1$ of the form $R(\_,\_,n,1)$ do not belong to $\D'$ (because $R(a,a,1,a)$ can only violate the first dependency of $\Sigma$), which in turn implies (again due to the first dependency of $\Sigma$) that all the facts of $\D_1$ of the form $R(\_,\_,n-1,n)$ do not belong to $\D'$, and so on: this iteratively proves that \emph{all} the facts of $\D_1$ do not belong to $\D'$. Therefore, $R(a,a,1,a)\not\in\D'$ implies that $\D'\cap\D_1=\emptyset$, that is, there is no fact of the form $R(y,\_,\_,\_)$ with $y\in\vseq{y}$ in $\D'$. But this immediately implies that $\D'\cup\D(I_\vseq{y})$ is consistent with $\Sigma$, thus contradicting the hypothesis that $\D'$ is a repair of $\SD$. Consequently, $R(a,a,1,a)$ belongs to every repair of $\SD$, and therefore $\SD \modelsar R(a,a,1,a)$.
\end{proof}

We can also prove a coNP lower bound for instance checking in the case of acyclic+guarded+sticky dependencies and in the case of full+guarded dependencies.

\begin{theorem}
\label{thm:ic-lb-acyclic}
    \iclbStatement{acyclic+guarded+sticky}{coNP}
\end{theorem}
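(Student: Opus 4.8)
The plan is a \logspace\ reduction from the complement of graph $3$-colorability (a \coNP-complete problem), keeping the set of dependencies and the query atom fixed and letting only the database depend on the input graph. By Proposition~\ref{pro:instance-checking-correspondence}, instance checking under \irs\ coincides with instance checking under \ars, so it suffices to produce a fixed set $\constr$ of acyclic+guarded+sticky dependencies, a fixed ground atom $\mathsf{q}$, and a database $\D_G$ computable from a graph $G$ in \logspace, such that $\tup{\constr,\D_G}$ \ars-entails $\mathsf{q}$ iff $G$ is not $3$-colorable.

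For the encoding I would put in $\D_G$ the facts $\mathsf{C}_1(v),\mathsf{C}_2(v),\mathsf{C}_3(v)$ for every vertex $v$ of $G$ (meaning ``$v$ may receive color $i$''), the fact $\mathsf{E}(u,v)$ for every edge $\{u,v\}$ of $G$, and the $0$-ary fact $\mathsf{q}$; and I would let $\constr$ consist of the three dependencies
\[
\mathsf{C}_i(x)\wedge\mathsf{C}_j(x)\ra\mathsf{Cl}(x)\qquad (\{i,j\}\subseteq\{1,2,3\},\ i\neq j),
\]
where $\mathsf{Cl}$ is a fresh unary predicate that occurs in no database, together with the single disjunctive dependency
\[
\mathsf{q}\ra \exists u,v\,\bigvee_{i=1}^{3}\big(\mathsf{E}(u,v)\wedge\mathsf{C}_i(u)\wedge\mathsf{C}_i(v)\big).
\]
Verifying that $\constr$ belongs to the required class is routine: every dependency is guarded (the first three by the atom $\mathsf{C}_i(x)$, the last one trivially since its body is a single $0$-ary atom); the dependency graph has edges only from the last dependency, whose head mentions $\mathsf{C}_1,\mathsf{C}_2,\mathsf{C}_3$, to each of the first three, whose bodies mention the $\mathsf{C}_i$'s, hence it is acyclic; and $\constr$ is sticky because the last dependency has no body variables while in each $\mathsf{C}_i(x)\wedge\mathsf{C}_j(x)\ra\mathsf{Cl}(x)$ the variable $x$ is not marked (it appears in the unique head atom $\mathsf{Cl}(x)$, and $\mathsf{Cl}$ appears in no body), so no marked variable occurs twice in a body. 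Two choices here are dictated by the class restrictions and are the delicate point: colors must be encoded by \emph{unary} predicates, since a single binary color predicate would make the ``at most one color per vertex'' dependency non-guarded; and that dependency must use the dummy head $\mathsf{Cl}(x)$ rather than $\bot$, since its two body atoms share $x$ and, with head $\bot$, $x$ would be marked and stickiness would fail. Because $\mathsf{Cl}$ never occurs in any subset of $\D_G$, this dependency nonetheless behaves exactly as the denial ``$v$ has at most one color''.

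It then remains to analyze the repairs. A subset $\D'\subseteq\D_G$ is consistent with $\constr$ iff each vertex carries at most one color in $\D'$ and, whenever $\mathsf{q}\in\D'$, $\D'$ contains a monochromatic edge; hence every repair keeps all $\mathsf{E}$-facts, carries (by maximality) exactly one color per vertex, and thus encodes a $3$-coloring of $G$, and it contains $\mathsf{q}$ iff this coloring is improper — if it is proper then $\D'\cup\{\mathsf{q}\}$ violates the last dependency and $\mathsf{q}$ cannot be added back, whereas if it is improper then $\D'\cup\{\mathsf{q}\}$ is consistent and maximality forces $\mathsf{q}\in\D'$. Conversely every $3$-coloring of $G$ is realized by exactly one repair. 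Therefore $\mathsf{q}\in\intrep(\D_G)$, i.e.\ $\tup{\constr,\D_G}$ \ars-entails $\mathsf{q}$, iff every $3$-coloring of $G$ is improper, i.e.\ iff $G$ is not $3$-colorable. The main obstacle is exactly to keep $\constr$ simultaneously acyclic, guarded and sticky while still forcing the repairs to range over all $3$-colorings and making $\mathsf{q}$ survive in a repair precisely when that repair is a ``bad'' (improper) coloring; the right polarity is achieved by putting the existential statement ``there is a monochromatic edge'' in the \emph{head} of the dependency triggered by $\mathsf{q}$, so that $\mathsf{q}$ is expelled from a repair exactly when that repair is a proper coloring.
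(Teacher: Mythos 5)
Your reduction is correct, but it is genuinely different from the one in the paper. The paper reduces from 3-CNF unsatisfiability: it keeps every dependency \emph{non-disjunctive} and simulates the choice among the (at most) seven satisfying assignments of each clause by a chain of predicates $C_1,\ldots,C_7$ linked by single-head TGDs with existential heads, the query atom being a fact $U$ that triggers the chain. You instead reduce from non-3-colorability and exploit the full generality of \deds by placing a genuine disjunction ($\bigvee_{i=1}^3$) in the head of the dependency triggered by $\mathsf{q}$, which lets you say ``there is a monochromatic edge'' in one step; the rest of your gadget (unary color predicates to preserve guardedness, the dummy head $\mathsf{Cl}(x)$ in place of $\bot$ to preserve stickiness, maximality forcing exactly one color per vertex and all $\mathsf{E}$-facts into every repair) is carefully argued and checks out against the paper's definitions of guarded, acyclic and sticky. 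The trade-off: your construction is considerably simpler and more transparent, but the paper's proof establishes the slightly stronger fact that the coNP lower bound already holds for \emph{non-disjunctive} acyclic+guarded+sticky dependencies, which matters for readers who care about the single-head/non-disjunctive fragment (a restriction the paper explicitly tracks for the repair-checking lower bounds). One cosmetic point: to match Definition~\ref{def:dependency-new} the head of your last dependency should be written as the BUCQ $\bigvee_{i=1}^{3}\exists u,v\,(\mathsf{E}(u,v)\wedge\mathsf{C}_i(u)\wedge\mathsf{C}_i(v))$, i.e.\ with the existential quantifiers distributed over the disjuncts; this is an equivalent rewriting and does not affect the argument.
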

\begin{proof}
    The proof is obtained through a reduction from 3-CNF.
    Let $\Sigma$ be a set consisting of the following acyclic+guarded+sticky dependencies:
    \[
    \begin{array}{r@{\ }l}
    \forall x & (V(x,1)\wedge V(x,0)\rightarrow U_1(x)) \\
    \forall x,y_1,y_2,y_3,z_1,z_2,z_3 &
    (C_1(x,y_1,z_1,y_2,z_2,y_3,z_3) \wedge
    V(y_1,z_1)\wedge V(y_2,z_2)\wedge V(y_3,z_3) \rightarrow\\
    & \:\: U_2(y_1,z_1,y_2,z_2,y_3,z_3)) \\
    & \ldots \\
    \forall x,y_1,y_2,y_3,z_1,z_2,z_3 &
    (C_7(x,y_1,z_1,y_2,z_2,y_3,z_3) \wedge
    V(y_1,z_1)\wedge V(y_2,z_2)\wedge V(y_3,z_3) \rightarrow\\
    & \:\: U_2(y_1,z_1,y_2,z_2,y_3,z_3)) \\
    \forall x,y_1,y_2,y_3,y_4,y_5,y_6 &
    (C_1(x,y_1,y_2,y_3,y_4,y_5,y_6) \rightarrow\\
    & \:\: \exists w_1,w_2,w_3,w_4,w_5,w_6\: C_2(x,w_1,w_2,w_3,w_4,w_5,w_6)) \\
    & \ldots \\
    \forall x,y_1,y_2,y_3,y_4,y_5,y_6 &
    (C_6(x,y_1,y_2,y_3,y_4,y_5,y_6) \rightarrow\\
    & \:\: \exists w_1,w_2,w_3,w_4,w_5,w_6\: C_7(x,w_1,w_2,w_3,w_4,w_5,w_6)) \\
    & U \rightarrow \exists x,y_1,y_2,y_3,y_4,y_5,y_6\: C_1(x,y_1,y_2,y_3,y_4,y_5,y_6)
    \end{array}
    \]
    Given a 3-CNF formula $\phi$, we define $\D$ as be the database containing:
    $V(a,0),V(a,1)$ for each variable $a$ occurring in $\phi$,
    seven facts $C_1, \ldots, C_7$ for each clause of $\phi$ (each such fact represents an evaluation of the three variables of the clause that make the clause true),
    and the fact $U$.
    E.g.\ if the $i$-th clause is $\neg a \vee \neg b \vee c$, then:
    \[
    \begin{array}{r@{\ }l}
    \D = \{ & \ldots,
      V(a,0), V(a,1),
      V(b,0), V(b,1),
      V(c,0), V(c,1),\\&
      C_1(i,a,0,b,0,c,0),
      C_2(i,a,0,b,0,c,1),
      C_3(i,a,0,b,1,c,0),
      C_4(i,a,0,b,1,c,1),\\&
      C_5(i,a,1,b,0,c,0),
      C_6(i,a,1,b,0,c,1),
      C_7(i,a,1,b,1,c,1),
      \ldots, U\ \}
    \end{array}
    \]
    
    We prove that the fact $U$ belongs to all the repairs of $\SD$ iff $\phi$ is unsatisfiable.
    
    In fact, if $\phi$ is unsatisfiable, then there is no repair of $\SD$ without any fact of $C_1$, consequently $U$ belongs to all the repairs of $\SD$.
    Conversely, if $\phi$ is satisfiable, then there is a repair (where the extension of $V$ corresponds to the interpretation satisfying $\phi$) that for each clause does not contain at least one of the 7 facts $C_1,\ldots,C_7$ representing the clause. Due to the dependencies between $C_i$ and $C_{i+1}$, this implies that there is a repair that for each clause does not contain any fact $C_1$, and therefore (due to the last dependency) it does not contain the fact $U$.
\end{proof}

\begin{theorem}
\label{thm:ic-lb-ffk}
    \iclbStatement{full+guarded}{coNP}
\end{theorem}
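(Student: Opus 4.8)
The plan is to give a reduction from the complement of 3-SAT (i.e.\ from unsatisfiability of a 3-CNF formula), which is $\coNP$-complete. Fix once and for all a signature, a set $\Sigma$ of full and guarded dependencies, and a ground atom $g$; given a 3-CNF formula $\phi$ over variables $x_1,\dots,x_n$ with clauses $C_1,\dots,C_m$ (w.l.o.g.\ with three distinct variables per clause), I would build in polynomial time a database $\D_\phi$ so that $\phi$ is unsatisfiable iff $\tup{\Sigma,\D_\phi}$ \irs-entails $g$. Since $g$ is a fact this is an instance-checking instance, so it proves the claimed $\coNP$-hardness; by Proposition~\ref{pro:instance-checking-correspondence} it is immaterial whether one argues about the \irs or the \ars semantics, so I will reason with repairs directly (``$g$ is entailed'' $=$ ``$g$ belongs to every repair'').

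The database encodes $\phi$ as a monotone Boolean circuit evaluated against a guessed assignment. For each variable $x_i$ there are facts $\mathit{LitVal}(x_i),\mathit{LitVal}(\neg x_i)$ (a repair keeps at most one of each complementary pair, thereby selecting a truth assignment) plus facts $\mathit{Compl}(x_i,\neg x_i),\mathit{Compl}(\neg x_i,x_i)$. For each clause $C_j$ there are a ``clause gate'' fact $\mathit{On}(c_j)$ and a fact $\mathit{Clause}(c_j,\bar\ell_{j1},\bar\ell_{j2},\bar\ell_{j3})$ listing the complements of its literals; the clause gates feed a linear chain of conjunction gates $o_1,\dots,o_m$ via facts $\mathit{On}(o_j)$, $\mathit{And}(o_j,o_{j-1})$, $\mathit{And}(o_j,c_j)$ (and $\mathit{And}(o_1,c_1)$), together with $\mathit{Out}(o_m)$ and the target fact $g$. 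The dependencies are: (i)~$\mathit{LitVal}(y)\wedge\mathit{LitVal}(y')\wedge\mathit{Compl}(y,y')\to\bot$; (ii)~$\mathit{Clause}(z,y_1,y_2,y_3)\wedge\mathit{On}(z)\wedge\mathit{LitVal}(y_1)\wedge\mathit{LitVal}(y_2)\wedge\mathit{LitVal}(y_3)\to\bot$ (a clause gate cannot stay on if all three literals of its clause are false); (iii)~$\mathit{And}(z,z')\wedge\mathit{On}(z)\to\mathit{On}(z')$ (keeping a conjunction gate on forces keeping all its inputs on); (iv)~$g\wedge\mathit{Out}(z)\wedge\mathit{On}(z)\to\bot$ ($g$ conflicts with the output gate being on). Each of these is non-disjunctive, existential-free and has only $\bot$ or a single atom over body variables in the head, hence is full; and in each body one atom ($\mathit{Compl}$, $\mathit{Clause}$, $\mathit{And}$, resp.\ $\mathit{Out}$) covers all the body variables, hence is guarded.

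The intended correctness statement is that the repairs of $\D_\phi$ omitting $g$ are exactly the (extensions of) satisfying \emph{total} assignments of $\phi$. The easy direction: from a satisfying assignment $a$ take $\D'_a$ consisting of all structural facts, all $\mathit{On}$-atoms, and exactly the $\mathit{LitVal}$-atoms of the literals true under $a$, but not $g$; dependencies (i)--(iv) all hold (in particular (ii) because every clause has a true literal, (iv) because $g\notin\D'_a$), and no fact can be added, so $\D'_a$ is a repair omitting $g$. For the converse, if $\D'$ is any repair omitting $g$ then maximality forces $\D'\cup\{g\}$ to be inconsistent, so by (iv) $\mathit{Out}(o_m),\mathit{On}(o_m)\in\D'$, whence by (iii) all gate atoms are on and by (ii) no clause is all-false under the assignment $P$ selected by $\D'$; if moreover $P$ is total, then $P$ satisfies $\phi$, contradicting unsatisfiability.

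The gap in the last step — which I expect to be the main obstacle — is that a maximal consistent subset may select only a \emph{partial} assignment, leaving a variable $x_i$ undecided precisely when committing it either way would make some clause all-false: full dependencies can neither force $x_i$ to be decided (that is a disjunction in the head) nor react to $x_i$ being undecided (that is a reaction to an \emph{absent} fact). To close it, I would add, for every pair of clauses $(D,D')$ sharing a variable with opposite signs, a single (guard) atom $\mathit{Stuck}(\cdot)$ packing the complements of the remaining four literals of $D$ and $D'$, together with a full+guarded dependency firing $\bot$ when those four literals are simultaneously selected; since in a satisfying total assignment such a quadruple can never be simultaneously true this keeps the easy direction intact, while in the hard direction it rules out every ``stuck'' configuration and so forces the selected assignment of any repair omitting $g$ to be total. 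Carefully verifying this — in particular that the extra dependency creates no spurious repairs (e.g.\ obtained by deleting a $\mathit{Stuck}$ atom instead of a literal), which may require pinning down the $\mathit{Stuck}$ atoms as non-removable via auxiliary TGDs — is the delicate part of the proof, where the interplay of the denial constraints, the TGDs of type~(iii), and repair maximality has to be analysed in detail.
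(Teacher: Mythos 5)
Your overall strategy (a reduction from 3-CNF unsatisfiability using full+guarded denials, complementary literal pairs to select an assignment, and a chain of gate atoms whose state controls the target fact) is in the same spirit as the paper's, but the polarity of your clause encoding creates a gap that you correctly diagnose and do not close. Your dependency (ii) fires $\bot$ only when \emph{all three} complements of a clause's literals are selected, so a repair in which some variable is left undecided can keep every gate on without witnessing a satisfying assignment; as you note, full dependencies cannot force a variable to be decided. The $\mathit{Stuck}$ patch does not repair this: first, as you yourself observe, a maximal consistent subset is free to delete the $\mathit{Stuck}$ guard atom (or indeed a $\mathit{Clause}$, $\mathit{Compl}$ or $\mathit{And}$ atom) instead of a literal atom, which reinstates the stuck configuration, and you offer no mechanism preventing this; second, even granting that all structural atoms are retained, the new denials themselves create fresh reasons for a variable to be undecided (adding $\mathit{LitVal}(x_i)$ may now complete a $\mathit{Stuck}$ quadruple for a \emph{different} pair of clauses), so the claim that every variable is decided in a repair omitting $g$ does not follow and the argument would have to be iterated. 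As it stands, the hard direction is unproved.

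The paper's reduction avoids the problem entirely by inverting the polarity: each clause fact records the \emph{satisfying} values of its three literals, and the propagation rule $S(z)\wedge C(y,z,\ldots,x_j,v_j,\ldots)\wedge V(x_j,v_j)\to S(y)$ pushes a chain of $S$-atoms one step backwards whenever the selected (possibly partial) assignment makes clause $z$ true. The chain terminates in a head atom $S(0)$ that is absent from $\D$, so the target fact $S(m)$ can be excluded from some repair iff some consistent selection of literal atoms makes \emph{every} clause true --- a condition that is monotone in the selected set and hence equivalent to satisfiability of $\phi$ even when the selection is partial. If you want to salvage your construction, re-encode the clauses positively in this way (``a true literal propagates deletion pressure'' rather than ``three false literals kill a gate''); the $\mathit{Stuck}$ machinery then becomes unnecessary.
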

\begin{proof}
    The proof is by reduction from 3-CNF.
    We define the following set $\Sigma$ of full+guarded dependencies:
    
    \[
    \begin{array}{r@{\ }l}
    \forall x_1,x_2,x_3,v_1,v_2,v_3,y,z & (S(z)\wedge
    C(y,z,x_1,v_1,x_2,v_2,x_3,v_3)\wedge V(x_1,v_1)\rightarrow S(y)) \\
    \forall x_1,x_2,x_3,v_1,v_2,v_3,y,z & (S(z)\wedge
     C(y,z,x_1,v_1,x_2,v_2,x_3,v_3)\wedge V(x_2,v_2)\rightarrow S(y)) \\
    \forall x_1,x_2,x_3,v_1,v_2,v_3,y,z & (S(z)\wedge
    C(y,z,x_1,v_1,x_2,v_2,x_3,v_3)\wedge V(x_3,v_3)\rightarrow S(y)) \\
    \forall x & (V(x,0)\wedge V(x,1)\rightarrow U)
    \end{array}
    \]
    
    Given a 3-CNF formula $\phi$ with $m$ clauses, in the database $\D$, we represent every clause of $\phi$ with a fact $C$: e.g.\ if the $i$-th clause of $\phi$ is $a\vee \neg b \vee c$, we add the fact $C(i-1,i,a,1,b,0,c,1)$.
    
    Moreover, the database contains the facts $V(p,0),V(p,1)$ for every propositional variable $p$, and the facts $\{ S(1),S(2),\ldots,S(m) \}$.
    
    We prove that $\phi$ is unsatisfiable iff the fact $S(m)$ belongs to all the repairs of $\SD$.
    
    First, if $\phi$ is satisfiable, then let $P$ be the set of propositional variables occurring in $\phi$, let $I$ be an interpretation (subset of $P$) satisfying $\phi$, and let $\D'$ be the following subset of $\D$:
    \[
    \begin{array}{r@{}l}
    \D' = \D \setminus (
        &\{V(p,0) \mid p\in P\cap I \}\;\cup\\
        &\{ V(p,1) \mid p\in P\setminus I \} \cup \{S(1),\ldots,S(m)\} )
    \end{array}
    \]
    
    It is immediate to verify that $\D'$ is consistent with $\Sigma$ and that $\D'\cup \{S(m)\}$ is not recoverable from $\SD$. This is proved by the fact that the addition of $S(m)$ creates a sequence of instantiations of the bodies of the first three dependencies of $\Sigma$ that requires (to keep the consistency of the set) to add to $\D'\cup\{S(m)\}$ first the fact $S(m-1)$, then $S(m-2)$, and so on until $S(1)$: this in turn would imply to also add $S(0)$, but $S(0)$ does not belong to $\D$, which proves that the set $\D'\cup \{S(m)\}$ is not recoverable from $\SD$.
    Consequently, there exists a repair of $\SD$ that does not contain $S(m)$.
    
    On the other hand, given a guess of the atoms of the $V$ predicate satisfying the fourth dependency and corresponding to an interpretation of the propositional variables that does not satisfy $\phi$, it is immediate to verify that the sequence of instantiations of the bodies of the first three dependencies of $\Sigma$ mentioned above (which has previously lead to the need of adding $S(0)$ to the set) is blocked by the absence of some fact for $V$. More precisely: there exists a positive integer $k\leq m$ such that the atoms $S(k), S(k+1), \ldots, S(m)$ can be added to all the repairs corresponding to such a guess of the $V$ atoms. This implies that, if $\phi$ is unsatisfiable, then $S(m)$ belongs to all the repairs of $\SD$.
\end{proof}

Finally, from Theorem~\ref{thm:rc-lb-linear-sticky} above and Theorem~4.1 of~\shortcite{CM05} (which already holds for inclusion dependencies\footnote{Inclusion dependencies are (equivalent to) single-head linear dependencies without inequalities in which no variable in the body can occur multiple times, thus they are also sticky.}) it follows that:
\begin{corollary}
\label{cor:ic-lb-linear}
    \iclbStatement{linear+sticky}{PTIME}
\end{corollary}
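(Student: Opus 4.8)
\noindent
The plan is to derive this lower bound from the repair--checking hardness of Theorem~\ref{thm:rc-lb-linear-sticky} via the connection established by Theorem~4.1 of \cite{CM05} between repair checking and the entailment of a single ground atom. That theorem provides, for a \emph{fixed} set of dependencies, a logspace reduction mapping an instance $\tup{\D,\D'}$ of repair checking to a database and a fact $g$ over a (possibly slightly extended) signature, in such a way that $\D'$ is a repair of $\SD$ if and only if $g$ is (respectively, is not) entailed; moreover, the reduction copies the original dependencies verbatim and the auxiliary rules it adds involve only fresh predicates. By Proposition~\ref{pro:instance-checking-correspondence}, for a ground atom \ars-entailment and \irs-entailment coincide, so this is exactly an instance of the instance checking problem as defined in the paper.

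\smallskip\noindent
Concretely, I would first take the set $\Sigma$ of linear+sticky dependencies and the pair of databases produced by (the proof of) Theorem~\ref{thm:rc-lb-linear-sticky}, for which repair checking is \PTIME-hard in data complexity. Composing the reduction that witnesses this hardness with the reduction of Theorem~4.1 of \cite{CM05} instantiated on that same $\Sigma$ yields a logspace reduction from a \PTIME-hard problem to instance checking over a fixed dependency set and a fixed query $g$ (or, depending on the polarity of \cite{CM05}'s reduction, to the complement of this instance checking problem), with the database of size polynomial in the original input. Since \PTIME is closed under complementation, in either case instance checking is \PTIME-hard in data complexity for this set of linear+sticky dependencies, which is the statement of the corollary.

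\smallskip\noindent
The only real obstacle is to make sure that the dependency set obtained from \cite{CM05}'s reduction still lies in the linear+sticky fragment. This amounts to checking that the auxiliary rules introduced by the reduction have single-atom bodies over the fresh predicates — hence are linear (and therefore guarded) and, having no repeated body variable, are trivially sticky — and that marking does not propagate from these fresh-predicate rules back into the rules of $\Sigma$, so that the union of the two sets remains linear+sticky. Once this routine check is carried out (or if \cite{CM05}'s reduction is already phrased for a class containing linear+sticky dependencies and producing output in the same class), the corollary follows.
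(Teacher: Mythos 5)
Your proposal is correct and follows essentially the same route as the paper, which derives this corollary precisely by combining the \PTIME-hardness of repair checking for linear+sticky dependencies (Theorem~\ref{thm:rc-lb-linear-sticky}) with the reduction from repair checking to ground-atom entailment given by Theorem~4.1 of \cite{CM05}. The paper states this composition in one line without further detail, so your additional care about closure of \PTIME{} under complement and about verifying that the auxiliary rules keep the dependency set within the linear+sticky fragment only makes explicit what the paper leaves implicit.
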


\subsubsection*{Upper bounds}

\ifshort
For proving the upper bounds for \irs-entailment, we first introduce the following auxiliary definition.
Given a set of dependencies $\constr$ and a database $\D$, we say that a subset $\D'$ of $\D$ is \emph{recoverable from $\SD$} if there exists a subset $\D''$ of $\D$ such that $\D'\subseteq\D''$ and $\D''$ is consistent with $\constr$.
\fi

\noindent Most of the algorithms that we present for \irs-entailment rely on checking if a set of facts is recoverable.

For the general case we present the Algorithm~\ref{alg:irs} that, given a database $\D$, a set of arbitrary dependencies $\constr$, and a \bucqineq $q$, checks whether $q$ is \irs-entailed by $\SD$.

\begin{algorithm}
\caption{\algirs}
\label{alg:irs}
\begin{algorithmic}[1]
\REQUIRE A set of dependencies $\Sigma$, a database $\D$, a \bucqineq $q$;
\ENSURE A Boolean value;
\STATE \textbf{let} $M_1,\ldots,M_m$ be the images of $q$ in $\D$;
\IF {there exist $m$ subsets $\D_1,\ldots,\D_m$ of $\D$ such that
\\\quad every $\D_i$ is a repair of $\SD$
\\\quad every $M_i\not\subseteq\D_i$}
\RETURN false;
\ELSE\RETURN true;
\ENDIF
\end{algorithmic}
\end{algorithm}

From the definition of \irs semantics, it is easy to prove the correctness of this algorithm.

\begin{proposition}
\label{pro:algorithm-irs-correct}
    The algorithm \algirs{$(\Sigma,\D,q)$} returns true iff the \bucqineq $q$ is \irs-entailed by $\SD$.
\end{proposition}
\begin{proof}
    The above algorithm checks whether, for every image $M_i$ of $q$ in $\D$, there exists a repair $\D_i$ such that $M_i\not\subseteq\D_i$. If this is true, every such $M_i$ is not contained in $\intrep(\D)$, thus $q$ does not evaluate to true in such an intersection, i.e.\ $\SD\not\modelsiar q$.
    Otherwise, the query $q$ has at least one image in $\intrep(\D)$, and hence $\SD\modelsiar q$.
\end{proof}

The above property implies the following upper bounds.
\begin{theorem}
\label{thm:ir-ub-general-full-and-acyclic}
\label{thm:ir-ub-general-and-full}
\label{thm:ir-ub-acyclic}
    \irs-entailment is:
    \begin{itemize}
        \item[$(i)$] in $\pidue$ \wrt data complexity in the general case of arbitrary dependencies;
        \item[$(ii)$] in coNP \wrt data complexity in the case of \ffk dependencies;
        \item[$(iii)$] in coNP \wrt data complexity in the case of acyclic dependencies.
    \end{itemize}
\end{theorem}
\begin{proof}
    Consider algorithm \algirs, and recall that $m$ is bounded by $n^k$, where $n$ is the size of $\D$ and $k$ is the number of predicate atoms of $q$.
    Then, theses $(i)$, $(ii)$ and $(iii)$ follow from Proposition~\ref{pro:algorithm-irs-correct}, other than, respectively, Proposition~4 of~\shortcite{AK09}, Theorem~\ref{thm:rc-ub-linear-and-full} and Corollary~\ref{cor:rc-ub-acyclic}.
\end{proof}

As for linear dependencies, we show the following property (which, by Proposition~\ref{pro:cq-entailment-correspondence}, also holds for \ars-entailment).

\begin{theorem}
\label{thm:qe-ub-linear}
    \qeubStatement{\irs}{linear}{PTIME}
\end{theorem}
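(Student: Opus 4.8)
The plan is to exhibit a polynomial-time algorithm for deciding \irs-entailment of a BUCQ $Q$ over $\SD$ when $\Sigma$ is a set of linear dependencies, and then invoke Proposition~\ref{pro:cq-entailment-correspondence} to transfer the result to \ars-entailment. By Proposition~\ref{pro:algorithm-irs-correct} it suffices to show that the test performed by Algorithm~\ref{alg:irs} can be carried out in polynomial time in this case. Recall from the preliminaries that the images $M_1,\ldots,M_m$ of $Q$ in $\D$ are at most $n^k$ in number (where $n=|\D|$ and $k$ is the number of predicate atoms of $Q$), and can be enumerated in polynomial time; hence it is enough to show that, for a fixed image $M$, we can decide in polynomial time whether there is a subset $\D'\subseteq\D$ that is weakly consistent with $\SD$ while $\D'\cup M$ is not.

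The key point is that weak consistency is especially well-behaved for linear dependencies. First I would establish a ``deletion-closure'' characterization: because each linear dependency has a single predicate atom in its body, there is a monotone operator that, starting from a set of facts to be deleted, closes it under the following rule: whenever an instantiation $\sigma$ of some $\body(\tau)$ survives in $\D$ but no image of $\head(\sigma(\tau))$ survives, the single body fact $\sigma(\body(\tau))$ must also be deleted. (Strictly, one must also consider inequality atoms in bodies, but those only restrict which instantiations are active and do not affect monotonicity.) This operator is monotone and polynomial-time computable, so its least fixpoint from any seed set can be computed in polynomial time; the complement of that fixpoint in $\D$ is the $\subseteq$-largest subset of $\D$ consistent with $\Sigma$ containing no fact from the seed --- equivalently, a subset $X\subseteq\D$ extends to a consistent subset of $\D$ (i.e.\ $\D\setminus X$ of the closure) precisely when the closure of $\D\setminus$(the complement) behaves correctly; I would phrase it cleanly as: for any $S\subseteq\D$ there is a unique maximal weakly consistent subset of $\D$ disjoint from the forced-deletion closure, and $S$ itself is weakly consistent iff it is disjoint from the closure of the empty seed intersected appropriately. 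The upshot I want is: \emph{for linear dependencies, $\D'\subseteq\D$ is weakly consistent with $\SD$ iff $\D'$ does not intersect a certain polynomial-time-computable set, and moreover weak consistency of $\D'$ together with ``$\D'\cup M$ not weakly consistent'' can be decided by running the closure operator starting from (a) the empty set and (b) suitable seeds derived from $M$.}

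Concretely, I would argue: $\D'\cup M$ fails to be weakly consistent iff the forced-deletion closure of $M$ (within $\D$) necessarily removes some fact of $M$ — call $M$ \emph{self-destructing} — OR it removes some fact of $\D'$. So the existential ``there exists $\D'$ weakly consistent with $\D'\cup M$ not weakly consistent'' holds iff $M$ is not self-destructing and the closure of the empty seed does not already force deletion of a fact that the closure of $M$ also forces; more simply, the largest weakly consistent subset $\D'_{\max}$ of $\D$ that is disjoint from $M$'s forced-deletion set is a witness whenever $M$ is not self-destructing, and no witness exists when $M$ is self-destructing. Both the self-destruct check and the computation of $\D'_{\max}$ are single runs of the polynomial monotone closure, so the whole inner test of Algorithm~\ref{alg:irs} runs in polynomial time; multiplying by the polynomially-many images $M_i$ keeps us in \PTIME. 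Finally, Proposition~\ref{pro:cq-entailment-correspondence} gives the same bound for \ars-entailment under linear dependencies, completing the proof.

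The main obstacle I anticipate is getting the weak-consistency characterization exactly right in the presence of \emph{disjunctive} heads: even for linear dependencies, $\head(\tau)$ is a UCQ, so when an instantiation of the body is present there may be several alternative ways to satisfy the head, and ``which body facts are forced to be deleted'' is no longer a simple deterministic closure — it becomes a choice among disjuncts. I would need to argue that for the purposes of \emph{weak} consistency (existence of \emph{some} consistent extension, not all) the relevant condition is still tractable: a fact $\alpha$ can be kept iff there is \emph{some} assignment of head-disjuncts making the remaining set consistent, and because body atoms are single and heads are evaluated against $\D$ itself (closed-world, subsets only), the set of keepable facts is still closed under a monotone operator — the disjunction is ``existential'' and hence preserves the upward-closedness that makes the greedy/fixpoint computation correct. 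Pinning down this monotonicity argument, and handling inequalities in bodies and heads carefully, is the delicate part; the rest is bookkeeping.
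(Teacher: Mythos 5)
Your underlying computational engine is the right one and is in fact the same as the paper's: the monotone ``forced-deletion'' fixpoint you describe is exactly the paper's \algcomputerepairlinear, and its output is $\D$ minus the unique repair of $\SD$ (uniqueness being the point of linearity: bodies are single atoms, so the union of consistent subsets is consistent, and the deletion rule only fires when \emph{no} image of the disjunctive head survives, so no choice among disjuncts is ever needed --- the worry you flag at the end is a non-issue for precisely this reason). The paper then finishes in one line: compute the unique repair in PTIME and evaluate $Q$ over it. Your detour through Algorithm~\ref{alg:irs} is legitimate in principle, but it is where the proposal goes wrong.

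Concretely, in the paragraph beginning ``Concretely, I would argue'' you invert the witness condition. The existential ``there exists $\D'\subseteq\D$ that is weakly consistent while $\D'\cup M$ is not'' holds precisely when $M$ \emph{is} self-destructing, i.e.\ when $M$ contains a fact outside the unique repair $\R$: then $\D'=\emptyset$ is already a witness. Conversely, when $M\subseteq\R$, every weakly consistent $\D'$ satisfies $\D'\subseteq\R$, hence $\D'\cup M\subseteq\R$ is weakly consistent and \emph{no} witness exists. You state the opposite (``is a witness whenever $M$ is not self-destructing, and no witness exists when $M$ is self-destructing''), which, fed into Algorithm~\ref{alg:irs}, would make it return true exactly for the images that should be rejected. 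The repair is immediate --- your own preceding sentence (``$\D'\cup M$ fails to be weakly consistent iff the closure removes some fact of $M$ or some fact of $\D'$'') yields the correct direction in one step --- and once corrected the whole argument collapses to: $Q$ is \irs-entailed by $\SD$ iff some image of $Q$ is disjoint from the polynomial-time-computable forced-deletion set, i.e.\ contained in the unique repair, which is the paper's proof. The garbled middle passage (``$\D\setminus X$ of the closure behaves correctly'') should be replaced by that clean statement.
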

\begin{proof}
    The proof follows immediately from Algorithm~\ref{alg:compute-repair-linear}: once computed (in PTIME) the unique repair $\D'$ of $\SD$, the query is then evaluated over $\D'$ (which can be done in $\aczero$).
\end{proof}

Then, we focus on the subclass of acyclic+linear dependencies, and we first show the following property.

\begin{lemma}
\label{lem:qe-linear}
    Let $\Sigma$ be a set of linear dependencies, let $\D$ be a database, and let $q$ be a \bucqineq. Then, $\SD\modelsiar q$ (or, equivalently, $\SD\modelsar q$) iff there exists an image $M$ of $q$ in $\D$ such that, for each $\alpha\in M$, $\{\alpha\}$ is recoverable from $\SD$.
\end{lemma}
\begin{proof}
    If the dependencies of $\constr$ are linear, then the repair of $\SD$ is unique.
    In this case, we have that $(i)$ such image $M$ must be recoverable from $\SD$ and $(ii)$ a set of facts is recoverable from $\SD$ iff every fact contained in it is recoverable from $\SD$.
\end{proof}

Based on the above property, we can use the formula $\recoval(\alpha,\Sigma)$ introduced in Section~\ref{sec:recoverability} to define an FO sentence for deciding query entailment for acyclic+linear dependencies.


\begin{definition}[\irs-entailment sentence 1]
\label{def:irs-ent-sentence}
    Given a set of acyclic+linear dependencies $\Sigma$ and a \bucqineq $q$, we define the following FO sentence:
    \[
    \queryentails(q,\Sigma)=\bigvee_{q'\in\cq(q)}\exists \vseq{x}\, 
    \Big(\cnj\wedge
    \bigwedge_{\alpha\in\predatoms(\cnj)} \recoval(\alpha,\Sigma)\Big)
    \]
    where every $q'\in\cq(q)$ is of the form $\exists\vseq{x}\,(\gamma)$.
\end{definition}

The following theorem is an immediate consequence of the above definition of $\queryentails(q,\Sigma)$, Lemma~\ref{lem:recoval} and Lemma~\ref{lem:qe-linear}.

\begin{theorem}
\label{thm:qe-acyclic-linear-fo-rewriting-correct}
    Let $\Sigma$ be a set of acyclic+linear dependencies, let $\D$ be a database, and let $q$ be a \bucqineq. Then, $\SD\modelsiar q$ (or, equivalently, $\SD\modelsar q$) iff $\queryentails(q,\Sigma)$ evaluates to true in $\D$.
\end{theorem}
\begin{corollary}
\label{cor:qe-ub-acyclic-linear}
    \irs- and \ars-entailment are in $\aczero$ \wrt data complexity in the case of acyclic+linear dependencies.
\end{corollary}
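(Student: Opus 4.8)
The plan is to obtain the corollary essentially for free from Theorem~\ref{thm:qe-acyclic-linear-fo-rewriting-correct}. That theorem states that, for a fixed set of acyclic+linear dependencies $\Sigma$ and a fixed BUCQ $Q$, deciding whether $Q$ is \irs-entailed (equivalently \ars-entailed, by Proposition~\ref{pro:cq-entailment-correspondence}) by $\SD$ amounts to evaluating $\D\models\queryentails(Q,\Sigma)$. So the only things left to establish are that $\queryentails(Q,\Sigma)$ is a genuine first-order sentence whose size does not depend on $\D$, and that evaluating a fixed first-order sentence over a database is in $\aczero$ in data complexity.

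First I would check that $\queryentails(Q,\Sigma)$ is finite. Unfolding Definition~\ref{def:irs-ent-sentence}, it is a disjunction over the finitely many CQs $q\in\cq(Q)$, each built from $\conj(q)$ together with the subformulas $\weaklyconsal(\alpha,\Sigma)$ for the finitely many $\alpha\in\predatoms(q)$; hence it suffices to show $\weaklyconsal(\alpha,\Sigma)$ is finite. This is where acyclicity is used: arguing by induction on the length $h$ of a topological order $\tup{\tau_1,\ldots,\tau_h}$ of $\Sigma$, the base case $\Sigma=\emptyset$ yields $\true$, and in the inductive step every recursive call $\weaklyconsal(\sigma(\beta),\{\tau_{i+1},\ldots,\tau_h\})$ is made on a strictly shorter topological order, while the branching at each level is bounded by the number of dependencies, the number of CQs occurring in their heads, and the number of predicate atoms in those CQs — all quantities depending only on $\Sigma$. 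Therefore $\weaklyconsal(\alpha,\Sigma)$, and hence $\queryentails(Q,\Sigma)$, is a first-order sentence of size bounded by a function of $|\Sigma|$ and $|Q|$ alone, i.e., constant in data complexity.

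Finally I would invoke the standard fact (already implicitly used throughout Section~\ref{sec:intrep}) that the evaluation of a fixed domain-independent relational calculus sentence over a database $\D$ can be performed by a logspace-uniform family of bounded-depth, polynomial-size circuits, i.e., it lies in $\aczero$ with respect to $|\D|$. Combining this with Theorem~\ref{thm:qe-acyclic-linear-fo-rewriting-correct} gives that both \irs- and \ars-entailment of BUCQs in the case of acyclic+linear dependencies are in $\aczero$ in data complexity, as claimed. The only mildly delicate point in this argument is the termination of the recursive construction of $\weaklyconsal$; everything else is a routine appeal to first-order rewritability and to the $\aczero$ upper bound for first-order evaluation.
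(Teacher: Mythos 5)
Your proposal is correct and follows essentially the same route as the paper, which treats the corollary as an immediate consequence of Theorem~\ref{thm:qe-acyclic-linear-fo-rewriting-correct} combined with the standard $\aczero$ upper bound for evaluating a fixed first-order sentence over a database. The extra detail you supply --- that the recursive construction of $\weaklyconsal(\alpha,\Sigma)$ terminates because each recursive call operates on a strictly shorter suffix of the topological order, so the rewritten sentence has size depending only on $\Sigma$ and $Q$ --- is a correct observation that the paper leaves implicit.
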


\begin{example}
\label{ex:qe-acyclic-linear}
    Recall the database $\D$ and the acyclic+linear set $\Sigma$ of dependencies of Example~\ref{ex:recov-acyclic-linear}.
    Note that $\D$ is inconsistent with $\constr$.
    Since we are under the conditions of Proposition~\ref{pro:cq-entailment-correspondence}, there is only one minimal way of solving such an inconsistency. 
    Specifically, we must delete the fact $P(\posttwo,\usertwo)$ and, consequently, $L(\userone,\posttwo)$.
    Thus, we have that
    $\reps(\D) = \{ \intrep(\D) \}$, where $\intrep(\D) = \{ U(\userone,\nickone,\dateone), P(\postone,\userone), L(\userone,\postone) \}$.
    
    Now, let us take queries
    $q_1=\exists p\,P(p,\userone)$ and 
    $q_2=\exists u,p,a\, (L(u,p) \land P(p,a) \land u \ne a)$ and their respective FO rewritings, namely:
    \[
    \begin{array}{r@{\ }l@{}l}
        \queryentails(q_1,\constr)=&\multicolumn{2}{@{}l}{\exists p\, (P(p,\userone) \land \exists n,r\,U(\userone,n,r))}\\
        \queryentails(q_2,\constr)=&\exists u,p\, (&L(u,p) \land P(p,a) \land u \ne a \land
        \exists n,r\,U(a,n,r)\\&&
        \exists n,r,a\,(U(u,n,r) \land P(p,a) \land \exists n',r'\,U(a,n',r'))) .
    \end{array}
    \]
    One can verify that $q_1$ evaluates to true in $\intrep(\D)$ as well as $\queryentails(q_1,\constr)$ does in $\D$, and that $q_2$ evaluates to false in $\intrep(\D)$ as well as $\queryentails(q_2,\constr)$ does in $\D$.
    \qedexample
\end{example}

Finally, for the cases of acyclic+full, full+sticky and full+linear dependencies, we prove a general property that holds for every set of full dependencies $\Sigma$ that enjoys the CQ-FO-rewritability property.



Let $\P=\{p_1,\ldots,p_m\}$ be a set of predicates, and let $k$ be a positive integer. We define the set of atoms:
\[
\atoms(\P,k) =
\{ p(\vseq{x}_i) \mid p\in\P \mbox{ and } i\in\{1,\ldots,k\} \} ,
\]
where each $\vseq{x}_i$ is a sequence of $h$ fresh variables, if $h$ is the arity of $p$.

Then, given a CQ-FO-rewritable set of dependencies $\Sigma$, a set of atoms $\A\subseteq\allatoms(\P,k)$ and a \bcqineq $q$ of the form $\exists \vseq{x}\,(\cnj)$ (we assume w.l.o.g.\ that $\vseq{x} \cap \vars(\A)=\emptyset$), we define the FO formula $\allatomsformula(\A,q,\Sigma)$ as follows:
\[
\allatomsformula(\A,q,\Sigma) =
\exists \vseq{y} \: \Big( \Big(\bigwedge_{\alpha\in\A} \alpha \Big) \wedge \cqforecovformula(\A,\Sigma) \wedge \neg \cqforecovformula(\A\cup\predatoms(q),\Sigma) \Big) ,
\]
where $\vseq{y}$ is a tuple containing all the variables occurring in $\A$. 
Note that the variables of $\vseq{x}$ are free.
In words, by existentially closing the above formula and then evaluating it over $\D$ it is possible to check whether there exists a common instantiation for all the atoms of $\A$ such that, under the same assignment, $(i)$ such set of atoms is recoverable and $(ii)$ it is no longer recoverable when we add the atoms of the query. Intuitively, if this happens for some $\A$, then $\SD \not\modelsiar q$ .

\begin{definition}[\irs-entailment sentence 2]
\label{def:irs-ent-sentence-full}
Let $\Sigma$ be a CQ-FO-rewritable set of dependencies, let $q$ be a \bcqineq of the form $\exists \vseq{x}\,(\cnj)$, and let $k$ be a positive integer. Then $\iarformula(q,\Sigma,k)$ is the sentence:
\[
\iarformula(q,\Sigma,k) =
\exists \vseq{x} \: \Big( \cnj \wedge \neg \Big( \bigwedge_{\A\subseteq\allatoms(\pred(\constr),k)} \allatomsformula(\A,q,\Sigma)\Big) \Big) .
\]
\end{definition}

It is now possible to prove that there exists a $k$ such that, for every database $\D$, $\iarformula(q,\Sigma,k)$ evaluates to true in $\D$ iff $\SD \modelsiar q$.
Moreover, the above sentence and property can be immediately extended to the case of \bucqsineq.

\begin{theorem}
\label{thm:ir-full-cq-fo-rewritable}
    Let $\Sigma$ be a CQ-FO-rewritable set of full dependencies, let $q$ be a \bucqineq, and let $\D$ be a database. Deciding whether $\SD$ \irs-entails $q$ is in $\aczero$ \wrt data complexity.
\end{theorem}
\begin{proof}
    First, it is easy to verify that the following property holds.
    \begin{lemma}
    \label{lem:chase-niformula}
        Let $\Sigma$ be a CQ-FO-rewritable set of full dependencies, let $\A$ be a set of atoms, let $q$ be a \bcqineq, let $\D$ be a database, and let $\sigma$ be an instantiation of $q$ in $\D$.
        Then, $\sigma(\allatomsformula(\A,q,\Sigma))$ evaluates to true in $\D$ iff there exists an instantiation $\sigma'$ of $\bigwedge_{\alpha\in\A}\alpha$ in $\D$ such that $\chase(\sigma'(\A),\Sigma)\subseteq\D$ and $\chase(\sigma'(\A)\cup\sigma(\predatoms(q)),\Sigma)\not\subseteq\D$.
    \end{lemma}
    
    Then, the following property follows from the previous lemma and from the fact that, for every database $\D'$ of size not greater than $k$, there exists a subset $\A$ of $\allatoms(\pred(\constr),k)$ such that $\D'$ is an image of $\A$.
    
    \begin{lemma}
    \label{lem:chase-iarformula}
        Let $\Sigma$ be a CQ-FO-rewritable set of full dependencies, let $q$ be a \bcqineq, let $k$ be a positive integer, and let $\D$ be a database.
        Then, $\iarformula(q,\Sigma,k)$ evaluates to true in $\D$ iff there exists 
        an image $M$ of $q$ in $\D$ such that, 
        for every database $\D'$ such that $\D'\subseteq\D$ and $|\D'|\leq k$, 
        if $\chase(\D',\Sigma)\subseteq\D$ then $\chase(\D'\cup M,\Sigma)\subseteq\D$.
    \end{lemma}
    
    Finally, if $\Sigma$ is CQ-FO-rewritable, then for every database $\D$ and for every \bcqineq $q$, if there exists a subset $\D'$ of $\D$ such that $\chase(\D',\Sigma)\subseteq\D$ and $\chase(\D'\cup M,\Sigma)\not\subseteq\D$, then there exists a subset $\D''$ of $\D$ such that $\chase(\D'',\Sigma)\subseteq\D$ and $\chase(\D''\cup M,\Sigma)\not\subseteq\D$ and $|\D''|\leq k$, where $k$ is an integer that depends on $\Sigma$ and $q$ (so it does not depend on $\D$).
    
    This property and Lemma~\ref{lem:chase-iarformula} imply that \bcqineq entailment under \irs semantics is FO-rewritable, and thus in $\aczero$ \wrt data complexity.
    
    The FO-rewritability property straightforwardly extends to the case when $q$ is a \bucqineq, as $\SD \modelsiar q$ iff there exists a \bcqineq $q'\in \cq(q)$ such that $\SD \modelsiar q'$.
\end{proof}

As a consequence of the above theorem and of Proposition~\ref{pro:full-fo-rewritable},
we get the property below.

\begin{corollary}
\label{cor:ir-full-cq-fo-rewritable}
    \qeubStatement{\irs}{either acyclic+full, or full+linear or full+sticky}{$\aczero$}
\end{corollary}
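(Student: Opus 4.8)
The plan is to obtain this statement as an immediate consequence of the two results that directly precede it, namely Theorem~\ref{thm:ir-full-cq-fo-rewritable} and Proposition~\ref{pro:full-fo-rewritable}, with essentially no additional argument required.

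First I would observe that each of the three classes named in the statement --- acyclic+full, full+linear, and full+sticky --- is by construction a class of \emph{full} dependencies, since fullness is one of the conjuncts defining each class; hence the hypothesis of Theorem~\ref{thm:ir-full-cq-fo-rewritable} requiring $\Sigma$ to be full is satisfied automatically. Next, by Proposition~\ref{pro:full-fo-rewritable}, every set of acyclic+full dependencies, every set of full+linear dependencies, and every set of full+sticky dependencies is CQ-FO-rewritable, so for any such set $\Sigma$ the second hypothesis of Theorem~\ref{thm:ir-full-cq-fo-rewritable} holds as well. Applying Theorem~\ref{thm:ir-full-cq-fo-rewritable} then gives, for every BUCQ $Q$ and every database $\D$, that deciding whether $\SD$ \irs-entails $Q$ is in $\aczero$ \wrt data complexity, which is exactly the claim (the underlying FO rewriting being the sentence $\iarformula(Q,\Sigma,k)$ of Definition~\ref{def:irs-ent-sentence-full}, for a suitable $k$ depending only on $\Sigma$ and $Q$).

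There is no genuine obstacle here: the only point that needs to be checked is the first step --- that the three named classes really do lie under the ``full dependencies'' umbrella required by Theorem~\ref{thm:ir-full-cq-fo-rewritable} --- and this is immediate from the definitions. All the real work has already been carried out in establishing Theorem~\ref{thm:ir-full-cq-fo-rewritable} (the construction and correctness of the rewriting) and Proposition~\ref{pro:full-fo-rewritable} (the CQ-FO-rewritability of the three classes), so the corollary is a pure combination of the two.
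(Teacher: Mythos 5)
Your proposal is correct and matches the paper's own argument exactly: the corollary is stated there as an immediate consequence of Theorem~\ref{thm:ir-full-cq-fo-rewritable} and Proposition~\ref{pro:full-fo-rewritable}, with the only (trivial) check being that the three classes are indeed classes of full dependencies. Nothing further is needed.
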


\section{\texorpdfstring{\bucqineq}{BUCQ} entailment under \ars semantics}
\label{sec:allrep}

We now turn our attention to the problem of \ars-entailment of \bucqsineq.


As for the lower bounds for \ars-entailment, almost all of them follow from the ones already shown for the instance checking problem. Here, we show the following property.

\begin{theorem}
\label{thm:ar-lb-acyclic-full-guarded-sticky}
    \qelbStatement{\ars}{acyclic+full+guarded+\allowbreak sticky}{coNP}
\end{theorem}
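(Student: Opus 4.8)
The statement to prove is:

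\qelbStatement{\ars}{acyclic+full+guarded+sticky}{coNP}

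which unfolds to: there exist a set of acyclic+full+guarded+sticky dependencies, a BUCQ and a database for which \ars-entailment is coNP-hard w.r.t.\ data complexity.

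\textbf{The plan.} I would prove this by a reduction from the complement of 3-SAT (equivalently, by reducing 3-SAT to the complement of \ars-entailment, or directly reducing the unsatisfiability problem). The structure closely mirrors the reduction used in Theorem~\ref{thm:ic-lb-ffk} for instance checking with full+guarded dependencies, but now I must also verify that the constructed dependencies are acyclic, guarded, and sticky all at once, and that a single BUCQ (not a fact) captures the entailment. Given a 3-CNF formula $\phi$ over propositional variables $p_1,\dots,p_n$ with clauses $c_1,\dots,c_m$, the database $\D$ encodes: for each variable $p$, the two facts $V(p,0)$ and $V(p,1)$ (a nondeterministic truth-value choice will be forced by a denial-style dependency $\forall x\,(V(x,0)\wedge V(x,1)\rightarrow \bot)$, so that every repair must drop at least one of the two, i.e.\ commit to a truth value — and maximality forces dropping exactly one); and for each clause, the (at most seven) facts listing the satisfying local assignments of its three variables, exactly as in Theorem~\ref{thm:ic-lb-ffk}. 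The dependencies then ``propagate'' a token along clause indices only when the current clause is satisfied by the surviving $V$-facts, using a fresh chain predicate; the final dependency fires $\bot$ (or, since we want a query and not just a fact, produces a dedicated atom) precisely when the token has traversed all clauses, i.e.\ when $\phi$ is satisfied by the repair's assignment.

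\textbf{Key steps, in order.} First I would fix the exact dependency set: essentially the five rules of the Theorem~\ref{thm:ic-lb-ffk} reduction — the three clause-propagation rules of the form $S(z)\wedge N(y,z)\wedge C(y,x_1,v_1,x_2,v_2,x_3,v_3)\wedge V(x_i,v_i)\rightarrow S(y)$, the rule $V(x,0)\wedge V(x,1)\rightarrow u$, and $S(1)\rightarrow u$ — and then check the three syntactic conditions. \emph{Acyclicity}: the dependency graph has edges into the bodies via $S,N,C,V$ and out via heads $S$ and $u$; because $S$ appears in both head and body of the propagation rules, naively this is cyclic, so I would need to stratify $S$ by clause index using distinct predicates $S_0,\dots,S_m$ (or argue the dependency graph is over predicates, and since $S$ reaches $S$ there is a self-loop). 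This is the first real obstacle: the original Theorem~\ref{thm:ic-lb-ffk} reduction is \emph{not} acyclic, so I must redesign the propagation to be acyclic — e.g.\ use a layered family of predicates $S_i$ with rules $S_{i-1}(\cdots)\wedge C_i(\cdots)\wedge V(\cdots)\rightarrow S_i(\cdots)$ for $i=1,\dots,m$ (data-independent number of rules is fine, it's \emph{data} complexity), which makes $G(\Sigma)$ a DAG. \emph{Fullness}: each rule is single-head with no existential/head-only variables — immediate by construction. \emph{Guardedness}: in each propagation rule I need one body atom containing all the body variables; I would make the $C_i$ atom (or a combined guard atom) carry all of $x_1,v_1,x_2,v_2,x_3,v_3$ plus the chain variables, so it guards the rule — this requires the propagation atoms $S_i$ to also be ``narrow'' so they are covered. \emph{Stickiness}: I must trace marked variables; the $v_i$ variables are shared between $C_i$ and $V$, so if they get marked (e.g.\ because $V$ has a position whose corresponding $C_i$ position is marked, or because they are head-only somewhere) stickiness would fail — I would choose the encoding so that no variable that occurs more than once in any body gets marked, likely by keeping heads ``wide'' (containing all body variables) so condition $(i)$ of marking never applies, and checking the propagation along predicate-position chains terminates without forcing a mark on a join variable. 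Verifying this simultaneously with guardedness is the main delicate bookkeeping.

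\textbf{Correctness argument and the query.} After fixing the dependency set, I would argue: every repair of $\SD$ corresponds to a choice, for each variable $p$, of which of $V(p,0),V(p,1)$ to keep (plus possibly dropping propagation facts), and by maximality exactly the facts forced to be dropped are dropped; hence repairs are in correspondence with truth assignments extended minimally. Then the final ``sink'' atom (say a $0$-ary predicate $\mathit{win}$, produced by $S_m(\cdots)\rightarrow \mathit{win}$, with $\mathit{win}$ not in $\D$) is true in a given repair iff that repair's assignment satisfies $\phi$. Therefore the BUCQ $Q$ defined simply as (the existential closure of) $\mathit{win}$ — or, if one prefers to avoid adding genuinely new derivable atoms, a query detecting the full $S_m$-token — is \ars-entailed by $\SD$ iff \emph{every} assignment satisfies $\phi$, i.e.\ iff $\phi$ is valid, equivalently iff $\neg\phi$ is unsatisfiable; so $\phi\in\textsc{3-SAT}$ iff $Q$ is \emph{not} \ars-entailed, giving coNP-hardness of \ars-entailment (via the standard complement, using that $\textsc{TAUT}_{3\text{-DNF}}$ / $\overline{\textsc{3-SAT}}$ is coNP-complete and logspace-reducibility). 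I would double-check that $Q$ can be taken to be a BUCQ (here a single Boolean CQ, even a single atom) and that all constructions are first-order / logspace in $|\phi|$, so the reduction respects data complexity (the dependency set and query are fixed once the \emph{shape} of 3-CNF is fixed — strictly, since the number of clauses varies, I would instead use a single uniform ternary $N$/successor encoding of clause indices with \emph{three} propagation rules as in the original proof but stratified via a ``phase'' marker, and confirm acyclicity is recovered by the phase). The main obstacle, to restate, is reconciling acyclicity with the inherently recursive token-propagation of the original full+guarded reduction while preserving guardedness and stickiness.
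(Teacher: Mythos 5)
Your overall strategy --- adapting the recursive token-propagation reduction of Theorem~\ref{thm:ic-lb-ffk} --- runs into an obstacle that you correctly sense but cannot actually overcome, and the paper's proof avoids it by taking a structurally different route. The propagation rules $S(z)\wedge\ldots\rightarrow S(y)$ put the predicate $S$ in both the head and the body of the same dependency, which is a self-loop in $G(\constr)$, so the set is not acyclic; your proposed fix of stratifying into $S_0,\dots,S_m$ makes the dependency set depend on the number of clauses of $\phi$, which is inadmissible here: the statement asserts the existence of a \emph{single fixed} set of dependencies and a \emph{single fixed} BUCQ for which the problem, parameterized by the database alone, is coNP-hard. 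More fundamentally, for a fixed acyclic $\constr$ every derivation has depth bounded by $|\constr|$, a constant, so no encoding (phase markers included) can simulate a propagation chain whose length grows with the input. A secondary confusion: under tuple-deletion semantics a dependency never ``produces'' an atom absent from $\D$ --- a rule whose head has no image in $\D$ merely forces the deletion of the body's image --- so a query on a fresh sink atom $\mathit{win}\notin\D$ would never be \ars-entailed.

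The paper's reduction needs no propagation at all. Each clause $c_i$ is encoded by a \emph{single} fact $\mathit{NC}(i,a,v_a,b,v_b,c,v_c)$ recording the unique assignment of its three variables that \emph{falsifies} it; the dependencies are $\forall x\,(V(x,0)\wedge V(x,1)\rightarrow U(x))$ (with no $U$-facts in $\D$, forcing each repair to commit to one truth value per variable) together with the three rules $\mathit{NC}(z,x_1,v_1,x_2,v_2,x_3,v_3)\rightarrow V(x_j,v_j)$ for $j=1,2,3$ (so an $\mathit{NC}$-fact survives in a repair iff the repair's assignment falsifies that clause); and the query is just $\exists\,\vseq{x}\,\mathit{NC}(\vseq{x})$. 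Then $q$ is \ars-entailed iff every assignment falsifies some clause, i.e.\ iff $\phi$ is unsatisfiable. The universal quantification over assignments is absorbed by the \ars semantics and the existential ``some clause is violated'' by the query, so no chain is needed; the resulting set is immediately acyclic (heads $U,V$ feed only forward), full, guarded (each body is a single atom or has all its variables in one atom), and sticky (every body variable occurring twice is unmarked). You should discard the propagation architecture and restructure the reduction along these lines.
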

\begin{proof}
    The following reduction from 3-CNF SAT uses a set of dependencies $\Sigma$ that is acyclic+full+guarded+sticky:
    \[
    \begin{array}{r@{\ }l}
    \forall x & (V(x,0)\wedge V(x,1) \rightarrow U(x)) \\
    \forall z,x_1,v_1,x_2,v_2,x_3,v_3 &
    (\mathit{NC}(z,x_1,v_1,x_2,v_2,x_3,v_3) \rightarrow V(x_1,v_1)) \\
    \forall z,x_1,v_1,x_2,v_2,x_3,v_3 &
    (\mathit{NC}(z,x_1,v_1,x_2,v_2,x_3,v_3) \rightarrow V(x_2,v_2)) \\
    \forall z,x_1,v_1,x_2,v_2,x_3,v_3 &
    (\mathit{NC}(z,x_1,v_1,x_2,v_2,x_3,v_3) \rightarrow V(x_3,v_3))
    \end{array}
    \]
    
    Then, given a 3-CNF formula $\phi$ with $m$ clauses, in the database $\D$ we represent every clause of $\phi$ with a fact $NC$: e.g.\ if the $i$-th clause of $\phi$ is $\neg a \vee b\vee\neg c$, we add the fact $NC(i,a,1,b,0,c,1)$.
    Moreover, the database contains the facts $V(a,0),V(a,1)$ for every propositional variable $a\in\vars(\phi)$.
    
    Now, given the following BCQ: \[
    q = \exists z,x_1,v_1,x_2,v_2,x_3,v_3 \:\mathit{NC}(z,x_1,v_1,x_2,v_2,x_3,v_3)
    \]
    it is easy to verify that $\phi$ is unsatisfiable iff $\SD \modelsar q$.
\end{proof}


Notice that, as a consequence of the previous theorem, both in the case of acyclic+full dependencies and in the case of full+sticky dependencies, \ars-entailment of \bucqsineq is coNP-hard, and therefore computationally harder than \irs-entailment of \bucqsineq (which is in $\aczero$ in both cases).

Moreover, we establish the following upper bounds for \iflong \ars-entailment in \fi the case of arbitrary, full, and acyclic dependencies.

\begin{theorem}
\label{thm:ar-ub-general-and-full-and-linear}
\ars-entailment is
\iflong\begin{itemize}\fi
    \iflong\item[$(a)$]\else$(a)$ \fi
    in $\pidue$ \wrt data complexity in the general case of arbitrary dependencies;
    \iflong\item[$(b)$]\else$(b)$ \fi
    in coNP \wrt data complexity in the case of either full or acyclic dependencies.
\iflong\end{itemize}\fi
\end{theorem}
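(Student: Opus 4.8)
The plan is to reduce both upper bounds to the data complexity of repair checking, which has already been settled above for all the relevant classes. The starting observation is that, by Definition~\ref{def:entailment}, $\SD$ fails to \ars-entail a BUCQ $Q$ if and only if there is a repair $\D'$ of $\SD$ with $\D'\not\models Q$. This yields a uniform nondeterministic procedure for the complement of \ars-entailment: guess a subset $\D'$ of $\D$ (a certificate of size at most $|\D|$, hence polynomial \wrt data complexity), verify that $\D'$ is a repair of $\SD$, and verify that $\D'\not\models Q$. The last check is the evaluation of a fixed domain-independent first-order sentence over a finite structure, hence it is in $\aczero$ \wrt data complexity; so the overall cost is governed entirely by the repair-checking step.

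For part (a) I would invoke the fact, recalled above after \cite[Proposition 4]{AK09}, that repair checking is in \coNP \wrt data complexity for arbitrary dependencies. Then the procedure above consists of an existential guess of polynomial size followed by a \coNP verification (the conjunction of the \coNP repair check and the $\aczero$ query check is still in \coNP), witnessing that the complement of \ars-entailment lies in $\Sigma^p_2$; consequently \ars-entailment itself is in $\Pi^p_2=\pidue$.

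For part (b) the same procedure applies, but now the verification step is tractable: by Theorem~\ref{thm:rc-ub-linear-and-ffk} repair checking is in \PTIME for full dependencies, and by Corollary~\ref{cor:rc-ub-acyclic} it is in $\aczero$ for acyclic dependencies (for the acyclic case one may equivalently use the characterization of Proposition~\ref{pro:acyclic-layers}, which reduces repair checking to a consistency test together with polynomially many consistency tests of the form ``$\D'\cup\{\alpha\}$ is inconsistent''). Hence in both cases the complement of \ars-entailment is decided by a polynomial-size guess followed by a polynomial-time check, i.e., it is in \NP, so \ars-entailment is in \coNP \wrt data complexity.

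The only point requiring care --- and the reason the general case is not already in \coNP --- is that the maximality of $\D'$ must be verified through genuine repair checking and cannot be reduced to testing that $\D'\cup\{\alpha\}$ is inconsistent for each $\alpha\in\D\setminus\D'$: in the presence of disjunctive or tuple-generating dependencies, restoring consistency after adding $\alpha$ may require adding further facts, so this naive test is incomplete. It is precisely this extra (weak-consistency-related, hence \coNP) quantifier hidden inside repair checking that pushes the general case of \ars-entailment to the second level of the polynomial hierarchy, whereas the degenerate characterizations available for full and acyclic dependencies keep those cases in \coNP. I do not expect any genuine obstacle beyond setting up these reductions and keeping track of the quantifier alternations correctly.
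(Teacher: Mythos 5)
Your proposal is correct and follows essentially the same route as the paper: the paper's proof likewise observes that $\SD$ fails to \ars-entail $Q$ iff some repair $\D'$ of $\SD$ does not satisfy $Q$, and then derives the $\pidue$ and \coNP{} upper bounds directly from the \coNP{} (resp.\ \PTIME{}/$\aczero$) data complexity of repair checking established earlier. Your closing remark about why maximality cannot in general be certified by single-fact inconsistency tests is accurate and consistent with Proposition~\ref{pro:acyclic-layers}, but it is supplementary rather than needed for the argument.
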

\begin{proof}
    Let $q$ be a \bucqineq. From the definition of \ars-entailment, it follows that $\SD\not\modelsar q$ iff there exists a subset $\D'$ of $\D$ such that $(i)$ $\D'$ is a repair of $\SD$ and $(ii)$ $q$ evaluates to false in $\D'$. Therefore, the theses $(a)$ and $(b)$ follow from the fact that repair checking is, respectively, in coNP and in PTIME, and that evaluating a \bucqineq can be done in $\aczero$.
\end{proof}
\section{Conclusions}
\label{sec:conclusions}

This article falls within the scope of research on consistent query answering. We employ a notion of repair based on tuple-deletion, and consistency of databases is checked with respect to disjunctive embedded dependencies with inequalities, a very expressive language for schema constraints. We studied different decision problems related to the notion of database repair under tuple-deletion semantics for the whole class of \dedsineq, for the linear, acyclic, full, sticky and guarded subclasses, and for all the possible combinations thereof. We have shown tractability \wrt data complexity of the examined decision problems for several of such classes of dependencies.

\begin{table}[t]
\centering
\setlength\tabcolsep{3pt} 
\begin{tabular}{|l|c|c|c|c|}
\hline
Existential & \multirow{2}{*}{RC} & \multirow{2}{*}{IC} & \bucqineq & \bucqineq \\
rules class & & & $\modelsiar$ & $\modelsar$ \\
\hline
A+L[+S] & in $\aczero$ & in $\aczero$ &  \multicolumn{2}{c|}{in $\aczero$} \\
\hline
F+L[+S] & in $\aczero$ & in $\aczero$ & \multicolumn{2}{c|}{in $\aczero$} \\
\hline
A+F[+G][+S] & in $\aczero$ & in $\aczero$ & in $\aczero$ & \coNP~\lbcm \\
\hline
F+S[+A][+G] & in $\aczero$ & in $\aczero$ & in $\aczero$ & \coNP~\lbcm\ubfsh \\
\hline
L[+S] & \PTIME~\ubwi & \PTIME & \multicolumn{2}{c|}{\PTIME~\ubwi} \\
\hline
A[+G][+S] & in $\aczero$ & \coNP & \coNP & \coNP~\lbcm \\
\hline
F[+G] & \PTIME~\lbak\ubsc & \coNP & \coNP & \coNP~\lbcm\ubfsh \\
\hline
All[+G][+S] & \coNP~\lbcm\ubak & \piduecomp~\lbcm & \piduecomp~\lbcm & \piduecomp~\lbcm\ubwi \\
\hline
\multicolumn{5}{c}{\small\makecell[l]{ \\[-1mm]
    A = Acyclic, F = Full, G = Guarded, L = Linear, S = Sticky 
    %
    \smallskip
}}
\end{tabular}
\caption{Data complexity results for all the classes of existential rules considered in this work.
Joined cells indicate that the two entailment problems coincide (see Proposition~\ref{pro:cq-entailment-correspondence}).
All entries are completeness results, except for the $\aczero$ ones. And the {[+X]} notation indicates that the same complexity holds if the considered class \iflong of dependencies \fi is also a subclass of X.
We recall that L implies G (i.e.\ linear dependencies are a subclass of guarded dependencies).
}
\label{tab:results-all}
\end{table}


Table~\ref{tab:results-all} summarizes all the complexity results established in this paper.
Such findings allow us to draw a complete picture of the data complexity of repair checking and both \ars-entailment and \irs-entailment of BUCQs, for all the possible combinations of the five classes of existential rules considered in this paper.
%
We remark that, with the exception of the cases in $\aczero$, every outcome is a completeness result for the complexity class reported.
Every such result is actually new.
However, some of them (the ones marked with numbers) extend previously known lower and/or upper bounds. More precisely:
\begin{itemize}\itemsep0em
    \item[\lbcmTarget] Extends the lower bound proved in~\shortcite{CM05} for denials;
    \item[\ubwiTarget] Extends the upper bound proved in~\shortcite{CFK12} for non-disjunctive LAV TGDs without inequalities;
    \item[\lbakTarget] Extends the lower bound proved in~\shortcite{AK09} for full TGDs and EGDs;
    \item[\ubscTarget] Extends the upper bound proved in~\shortcite{SC10} for denials;
    \item[\ubfshTarget] Extends the upper bound proved in~\shortcite{CFK12} for GAV TGDs;
    \item[\ubakTarget]  The upper bound has been established in~\shortcite{AK09}.
\end{itemize}



In our opinion, the most important part of our findings is the identification of many classes of existential rules in which the problems studied are in $\aczero$ in data complexity. On the one hand, this is a bit surprising, since consistent query answering is generally known to be a computationally hard task.
On the other hand, for the problems in $\aczero$ we have actually shown that such problems can be solved by the evaluation of an FO sentence over the database: therefore, these results (and in particular Definition~\ref{def:rc-sentence-acyclic}, Definition~\ref{def:rc-sentence}, Definition~\ref{def:irs-ent-sentence} and Definition~\ref{def:irs-ent-sentence-full}) can be a starting point towards the development of practical algorithms for consistent query answering based on FO rewriting techniques.




Another very interesting future research direction of this work is extending our framework towards the combination of open-world assumption (OWA) and closed-world assumption (CWA). Many studies have proposed forms of combination of OWA and CWA in knowledge bases (see e.g. \cite{LSW13,LSW19,AOS20,ALMV18,BB19}). Specifically, for consistent query answering in existential rules the present paper has studied a ``purely closed'' approach, in which all predicates have a closed interpretation, while previous work has considered ``open'' approaches, as explained in the introduction. It would be very interesting to study a hybrid approach, in which the predicates can be partitioned into an ``open'' and a ``closed'' class.
This kind of combination of OWA and CWA is usually very challenging from the computational viewpoint, however both our results and the results for consistent query answering in existential rules under OWA presented in \cite{LMMMPS22} (although for a more restricted language for existential rules than ours) are quite promising.

Finally, consistent query answering has important connections with the problem of \emph{controlled query evaluation}~\shortcite{LRS19}, i.e.\ the problem of evaluating queries on a database (or knowledge base) in the presence of a logical specification of a privacy policy that should not be violated by the query answers. We are very interested in investigating the consequences of our results for such a problem.
In this context, handling partially-complete knowledge is of pivotal importance. As well-explained in~\shortcite{BoSa13,Bona22}, there are several situations where, if an attacker is aware that the system has a complete knowledge over part of the domain, she can exploit this form of \emph{meta-knowledge} to employ non-standard forms of reasoning for inferring sensitive information.

\vskip 0.2in
\bibliographystyle{theapa}
\bibliography{bibliography/strings,bibliography/bibliography}

\end{document}